
\documentclass[nohyperref]{article}

\usepackage{microtype}
\usepackage{graphicx}
\usepackage{subfigure}
\usepackage{booktabs} 

\usepackage{hyperref}



\usepackage[accepted]{icml2022}

\usepackage{amsmath}
\usepackage{amssymb}
\usepackage{mathtools}
\usepackage{amsthm}

\usepackage[capitalize,noabbrev]{cleveref}

\usepackage[textsize=tiny]{todonotes}

\usepackage{amssymb}
\usepackage{amsthm}
\usepackage{amsfonts}
\usepackage{dsfont}
\usepackage{mathtools}
\allowdisplaybreaks

\usepackage{diagbox}
\usepackage{multirow}
\usepackage{graphicx}
\usepackage{epsfig}
\usepackage{float}
\usepackage{wrapfig}

\usepackage{mdframed}
\usepackage{xcolor}

\usepackage{appendix}
\usepackage{comment}
\usepackage{paralist}
\usepackage{enumitem}
\usepackage{pdfpages}

\newtheorem{theorem}{Theorem}{\itshape}{\rmfamily}

\newtheorem*{definition}{Definition}{\itshape}{\rmfamily}

\newtheorem{lemma}{Lemma}[theorem]{\itshape}{\rmfamily} 	

\newtheorem{proposition}[theorem]{Proposition}{\itshape}{\rmfamily}

{\itshape}{\rmfamily}

{\itshape}{\rmfamily}

{\itshape}{\rmfamily}

\newtheorem*{recap}{}{\itshape}{\rmfamily}

\newenvironment{proofidea}{\par{\noindent \emph{Proof idea:}}}{\qed\par}

\newcommand{\squishlist}{
	\begin{list}{$\bullet$}
		{ 	\setlength{\itemsep}{0pt}      \setlength{\parsep}{1pt}
			\setlength{\topsep}{3pt}       \setlength{\partopsep}{0pt}
			\setlength{\leftmargin}{1.5em} \setlength{\labelwidth}{1em}
			\setlength{\labelsep}{0.5em} } }

\newcommand{\squishlisttwo}{
	\begin{list}
		{ \setlength{\itemsep}{0pt}    \setlength{\parsep}{0pt}
			\setlength{\topsep}{0pt}     \setlength{\partopsep}{0pt}
			\setlength{\leftmargin}{2em} \setlength{\labelwidth}{1.5em}
			\setlength{\labelsep}{0.5em} } }
	
\newcommand{\squishend}{\end{list}}

\newcommand{\eq}[2][]{
	\ifx\empty#1\empty
		\begin{equation*} #2 \end{equation*}
	\else
		\begin{equation}\label{#1} #2 \end{equation}
	\fi
}
\newcommand{\eqm}[2][]{
	\ifx\empty#1\empty
		\begin{equation*}\begin{split} #2 \end{split}\end{equation*}
	\else
		\begin{equation}\label{#1}\begin{split} #2 \end{split}\end{equation}
	\fi
}
\newcommand{\eqa}[2][]{
	\ifx\empty#1\empty
		\begin{align*} #2 \end{align*}
	\else
		\begin{align}\label{#1} #2 \end{align}
	\fi
}


\newcommand{\define}[0]{\doteq}

\newcommand{\ra}[0]{\rightarrow}

\newcommand{\Op}[2]{#1 \Big[ #2 \Big]}
\renewcommand{\P}{\mathbf{P}}
\renewcommand{\Pr}[2][]{\P_{#1}[#2]}

\newcommand{\E}[1][]{\ifx\empty#1\empty \mathbf{E} \else \operatorname*{\mathbf{E}}_{\substack{#1}} \fi}
\newcommand{\Exp}[2][]{\Op{\E[#1]~}{#2}}

\newcommand{\indicator}[1]{\mathds{1}[ #1 ]}
\newcommand{\Real}[0]{\mathbb{R}}

\renewcommand{\*}{\cdot}
\newcommand{\vect}[1]{\boldsymbol{#1}}

\newcommand{\AS}{\mathcal{A}}
\renewcommand{\SS}{\mathcal{S}}

\newcommand{\T}{P}
\newcommand{\R}{R}
\newcommand{\ga}{\gamma}

\newcommand{\w}{\vect{w}}

\newcommand{\terminals}{\SS_\bot}

\newcommand{\vocab}[1][]{\Sigma_{#1}}
\newcommand{\seq}[1]{(#1)}
\newcommand{\X}[1][]{\ifx\empty#1\empty X \else X^{\scriptscriptstyle (#1)} \fi}
\newcommand{\Y}[1][]{\ifx\empty#1\empty Y \else Y^{\scriptscriptstyle (#1)} \fi}
\newcommand{\Z}[1][]{\ifx\empty#1\empty Z \else Z^{\scriptscriptstyle (#1)} \fi}
\renewcommand{\L}[1][]{\ifx\empty#1\empty L \else L^{\scriptscriptstyle (#1)} \fi}
\newcommand{\Yp}[2][]{\ifx\empty#1\empty Y_{<#2} \else Y^{\scriptscriptstyle (#1)}_{<#2} \fi}
\newcommand{\Zp}[2][]{\ifx\empty#1\empty Z_{<#2} \else Z^{\scriptscriptstyle (#1)}_{<#2} \fi}
\newcommand{\y}[2][]{\ifx\empty#1\empty y_{\scriptscriptstyle #2} \else y^{\scriptscriptstyle (#1)}_{\scriptscriptstyle #2} \fi}
\newcommand{\z}[2][]{\ifx\empty#1\empty z_{\scriptscriptstyle #2} \else z^{\scriptscriptstyle (#1)}_{\scriptscriptstyle #2} \fi}
\newcommand{\bos}{\texttt{bos}}
\newcommand{\eos}{\texttt{eos}}

\newcommand{\B}{{ \mathcal{B} }}
\newcommand{\Bopt}[1][]{{\ifx\empty#1\empty \B \else \B^{#1} \fi}}
\newcommand{\Qopt}{{ Q_{\max} }}

\newcommand{\Lagr}{{ \mathcal{L} }}
\newcommand{\mul}{{ \vect{\lambda} }}
\newcommand{\Q}{{ \mathcal{Q} }}
\newcommand{\gammaEPI}{{ \gamma_\text{~epi~} }}

\newcommand{\Qmin}{Q_{\min}}
\newcommand{\Qmax}{Q_{\max}}


\icmltitlerunning{Lagrangian Method for Q-Function Learning}

\begin{document}

\twocolumn[
\icmltitle{Lagrangian Method for Q-Function Learning \\(with Applications to Machine Translation)}



\icmlsetsymbol{equal}{*}

\begin{icmlauthorlist}
\icmlauthor{Huang Bojun}{rit}
\end{icmlauthorlist}

\icmlaffiliation{rit}{Rakuten Institute of Technology, Rakuten Group Inc., Japan}
\icmlcorrespondingauthor{Huang Bojun}{bojhuang@gmail.com}

\icmlkeywords{Q-function Learning, Lagrangian Duality, Learning Theory, Machine Translation}

\vskip 0.3in
]



\printAffiliationsAndNotice{}  

\begin{abstract}
This paper discusses a new approach to the fundamental problem of learning optimal Q-functions. In this approach, optimal Q-functions are formulated as saddle points of a \emph{nonlinear} Lagrangian function derived from the classic Bellman optimality equation. The paper shows that the Lagrangian enjoys \emph{strong duality}, in spite of its nonlinearity, which paves the way to a general Lagrangian method to Q-function learning. As a demonstration, the paper develops an imitation learning algorithm based on the duality theory, and applies the algorithm to a state-of-the-art machine translation benchmark. The paper then turns to demonstrate a symmetry breaking phenomenon regarding the optimality of the Lagrangian saddle points, which justifies a largely overlooked direction in developing the Lagrangian method.  
\end{abstract}

\section{Introduction}
\label{sec:introduction}

Machine learning methods can be broadly categorized into two classes: policy learning, and Q-function learning.
In policy learning, the goal is to learn a policy function that directly encodes the desired mapping between task inputs and outputs. An imitation learning example is \emph{behavior cloning}, which tunes a parametric policy toward the desired policy function based on example input-output pairs of the latter. A reinforcement learning example is \emph{policy gradient}, which iteratively improves the parametric policy function following the gradient direction of the task performance.

Alternatively, one can try to learn a \emph{Q-function} that encodes a preferential score for each candidate output given an input. An \emph{optimal Q-function} enables a \emph{greedy policy} to identify the optimal output by simply comparing the Q-values, without looking at the precise consequences of the outputs. 

Implicitly or explicitly, the majority of statistical learning practices nowadays fall in the category of Q-function learning, where the general strategy is to make the Q-value of each output equal to the probability that an expert would choose this output (conditioned on the input).\footnote{
	Note that although the learned Q-function represents a desired probability distribution, this distribution is not used to sample the outputs at decision time (which contrasts with behavior cloning), but is used to power a greedy and deterministic decision policy.
} The resulted Q-greedy policy is provably optimal in many simple yet common scenarios, such as for maximizing prediction accuracy in classification tasks~\cite{2006:prml}.

In more complex scenarios, such as in \emph{sequential decision making} or \emph{structured prediction} tasks, Q-functions that simply follow the expert probabilities are generally not optimal, and learning optimal Q-functions becomes a more challenging problem. In this case, the standard paradigm at the moment is to learn the \emph{Bellman optimal value} function~\cite{2018:RL}, which is a special optimal Q-function that complies with a particular constraint known as the \emph{Bellman equation}. The Bellman equation also immediately implies a \emph{value iteration} method to find/approximate the optimal Q-function that the equation defined, leading to the many variants of \emph{Q-Learning} algorithms~\cite{1989:qlearning}. 

In this paper, we propose a new method for optimal Q-function learning. Our method is based on a variational (re-)formulation of the Bellman equation. The variational formulation induces a nonlinear \emph{Lagrangian function} whose saddle points correspond to a class of optimal Q-functions (but not necessarily being the Bellman optimal value). The Q-function learning problem can then be reduced to saddle-point optimization for the Lagrangian. 

This high-level idea is known as the \emph{Lagrangian method} in a wide variety of engineering domains~\cite{2004:convex}, including machine learning~\cite{2014:gan}, and more specifically, has been recently applied to related value-function learning problems~\cite{2018:LP_RL, 2018:LP_qlearning, 2019:dualDICE, 2020:bestDICE}. There is however a key obstacle to apply the Lagrangian method to Q-function learning: To admit effective and efficient algorithms, it is typically required that the Lagrangian function has a \emph{strong duality} property. When the Lagrangian is linear, the strong duality property is universally guaranteed, which is the theoretical foundation for most of its algorithmic applications. Unfortunately, for (optimal) Q-function learning, the Lagrangian derived from the Bellman equation is nonlinear, which prevents further algorithmic developments.

Our first main contribution is to prove that, for a large class of decision tasks, the nonlinear Lagrangian function for optimal Q-function learning turns out to be a special nonlinear function that actually enjoys the strong duality ``as usual'' (Section \ref{sec:lagrangian}). This observation potentially opens the door to a new general approach to Q-function learning.

As a demonstration of this potential, we developed a simple imitation learning algorithm based on the Lagrangian duality theory. We presented two practical versions of the algorithm, and empirically applied them to Machine Translation (MT) as a case study. Our algorithms are able to train Transformer~\cite{2017:transformer}, a state-of-the-art neural network model, on large-scale MT benchmarks with real-world data, and lead to $1.4$ BLEU (=$5\%$) improvement over standard baseline (Section \ref{sec:mt}). 

Thirdly, we discovered an unusual \emph{symmetry breaking} phenomenon for the Lagrangian of optimal Q-functions: From the same Bellman equation, there can be two mirrored ways to derive the same Lagrangian function, resulting in two kinds of saddle-point solutions, minimax points and maximin points. While previous research on the topic has been exclusively focusing on the former (i.e. minimax points), they can be sub-optimal in modern learning settings, as we show in Section 5. Intriguingly, we prove that the other class of saddle points, the maximin points, turn out to guarantee optimality. This observation points to a new direction that has perhaps been overlooked for a decades-old topic. 

Lastly, our paper provides a new and more general theorem about the Bellman optimality structure (see Theorem \ref{thm:Qstar} in Section \ref{sec:preliminaries}), which enabled us to develop Q-learning theory on top of the (episode-wise) \emph{total reward} optimality criterion. This criterion is a more accurate formulation than the canonical discounted-MDP setting for many real-world applications. We believe that our mathematical treatment contributes to fill this well-known gap between theoretical formulation and practical setup for Q-function learning.

\section{Problem Formulation}
\label{sec:preliminaries}

In general, a decision task is mathematically a Markov Decision Process (MDP) $(\SS, \AS, \R, \T, \rho)$. $\SS$ is the state space, $\AS$ the action space. $\R(s)$ is a bounded reward (possibly negative) associated to each state $s\in \SS$.\footnote{
	Our state-based reward formulation follows \cite{2015:trpo} and \cite{2020:bojun}, and is equivalent to, if not more general than, the various action-based reward formulations (that some readers might feel more familiar with). See Appendix \ref{sec:more_about_mdp} for details.
}
$\T(s'|s,a)$ specifies action-conditioned transition probabilities between states, and $\rho$ is the initial-state distribution.
Given an MDP, a policy function $\pi$ specifies the action selection probabilities under each state, which induces a Markov chain with $\Pr[\pi]{S_1=s} = \rho(s)$ and $\Pr[\pi]{S_{t+1}=s'|S_t=s} = \sum_{a\in\AS} \T(s'|s,a) \* \pi(a|s)$. 
Running $\pi$ in the Markov chain $\P_{\pi}$ 
results in an infinite trajectory $\zeta=(S_1,A_1,S_2,A_2\dots)$.

In this paper, we focus on \emph{finite-time} decision tasks, in which there is a set of \textbf{terminal states} $\terminals \subseteq \SS$ so that the trajectory $\zeta$ will run into a terminal state in finite steps under any policy $\pi$. Formally, let $T \define \inf \{ t\geq1: S_t\in\terminals \}$ be the \textbf{termination time} (which is a random variable in the probability space $\P_{\pi}$), a finite-time task is an MDP with $\E[\pi][T]<\infty, \forall  \pi$. The goal is to find an \textbf{optimal policy} that maximizes the expected total reward until termination:

\vspace{-0.1in} \small
\eq[objective]{
J(\pi) \define \Exp[\zeta\sim\pi]{\sum\nolimits_{t=1}^{T(\zeta)}~ R(S_t)}
}
\normalsize \noindent
Finite-time tasks are important for both empirical and theoretical reasons. Empirically, they account for most real-world tasks in current AI practice. See Section \ref{sec:elp_mt} for a MDP formulation of \emph{machine translation} as example. One should not confuse finite-time tasks with \emph{finite-horizon} tasks. The latter is a special case of the former, in which $T$ has a finite upper bound $H$ (so $\E[\pi][T]\leq H$). Alternatively, a task in which every state has a non-zero probability $1-\gamma$ to reach terminal states immediately (under any policy) does not have a finite horizon, yet it is a finite-time task because in such task $T$ follows a geometric distribution with finite mean. Moreover, if in this case $R(s)=0$ for all terminal states, maximizing the total-reward objective \eqref{objective} with termination would be equivalent to maximizing the discounted reward $\E[\pi][\sum_{t=1}^{\infty} R(S_t) \cdot \gamma^{t-1}]$ without termination (see Appendix \ref{sec:more_about_mdp}). In this sense, the canonical discounted-MDP tasks can be recast into a special case of finite-time decision tasks with (undiscounted) total-reward objective \eqref{objective} too.


Learning for finite-time decision task is typically based on trajectory data from \emph{multiple} decision episodes of the task, each starting from a state following $\rho$ and ending at a terminal state in $\terminals$. In imitation learning setting, the actions in the episodes may be provided by an expert policy; in this case the agent can learn from other's experiences. In reinforcement learning setting, the agent has to count on itself to generate the actions in its learning data. 

This \emph{learning process} -- in which the agent looks at experience of either others or itself, episode after episode -- can be formulated as a special family of finite-time MDPs where any terminal state transits back to a random state following the initial distribution $\rho$ under whatever action.\footnote{
	The ``transit back'' setting is fully aligned with the definition of finite-time MDP, and with the objective \eqref{objective} too, as neither of them prescribes what happens after the termination (except that the time homogeneity of MDP requires the transition to continue).
} 
A rollout trajectory of such recurrent MDP consists of an \emph{infinite} sequence of \emph{finite} episodes. Following \citet{2020:bojun}, we call such an ``MDP formalism of learning'', an \emph{Episodic Learning Process} (ELP). Formally, an MDP is an ELP if (1) it is a finite-time MDP (i.e. $\E[\pi][T]<\infty, \forall \pi$), (2) all terminal states ``reset'' the MDP in a homogeneous manner: $P(s'|s_1,a_1)=P(s'|s_2,a_2) = \rho(s'),~ \forall s_1,s_2\in\terminals,~ \forall a_1,a_2\in\AS,~ \forall s'\in \SS$, and (3) for mathematical convenience, the MDP is set to start at step $0$ from a terminal state $s_0\in \terminals$ (in that case $S_1$ still follows $\rho$). The ELP thus defined formally characterizes a learning problem for finite-time decision task, in which the learning algorithm obtains a (multi-episode) trajectory from the ELP, via either observing a given expert policy or trying an evolving behavior policy of its own, and seeks to find a good policy with respect to objective function \eqref{objective}, using the trajectory data.


A Q-function assigns a real number to each state-action pair $(s,a)$ as the ``perceived benefit'' of doing $a$ under $s$. We use $\mathcal{Q}=\{\SS\times\AS\ra \Real\}$ to denote the set of all possible Q-functions. Each Q-function induces a greedy policy, denoted by $\pi_Q$, for which $\pi_Q(a|s) > 0$ only if $Q(s,a) = \max_{\bar{a}} Q(s,\bar{a})$. A Q-function is an \textbf{optimal Q-function} if the greedy policy $\pi_Q$ is an optimal policy. 

Bellman optimal value function is a special optimal Q-function that is characterized by the Bellman optimality operator. In its general form~\cite{2012:opac,2011:horde,2014:qlambda,2016:emphatic}, a generalized Bellman optimality operator $\Bopt[\ga]: \Q \ra \Q$ transforms a  function $Q$ into another function $\Bopt[\ga] Q$ such that for all $(s,a)\in \SS\times\AS$,

\vspace{-0.1in} \small
\eqm[bellman]{
\Bopt[\ga] Q (s,a) 
&\define \sum_{s'\in\SS}~ \T(s'|s,a) \*  \Big( R(s') + \ga(s') \* \max_{a'\in\AS}~ Q(s',a') \Big)
} 
\normalsize \noindent
where $\ga: \SS \ra [0,1]$ is a \textbf{discounting function} over the states. If $\gamma(s)<1$ on every state $s$, the corresponding Bellman optimality operator has a unique fixed point in the Q-space of any MDP, and this fixed point is known as the Bellman optimal value function. 

However, this fixed point under uniform discounting is generally not an optimal Q-function with respect to the undiscounted objective \eqref{objective}. 
In the following, we present a new theorem that generalizes the classic theory of Bellman optimal value to a more general class of discounting functions, among which a particular $\gamma$-function makes the Bellman optimal value precisely an optimal Q-function w.r.t. \eqref{objective}. 

\begin{theorem}
\label{thm:Qstar}
In any finite Episodic Learning Process $(\SS, \AS, \T, \R, \rho)$, let $\gamma$ be any discounting function such that $\gamma(s) < 1$ at all terminal state $s\in \terminals$, then 
\renewcommand\labelenumi{(\theenumi)}
\begin{enumerate}
\item $\Bopt[\ga]$ has a unique fixed point in $\Q$. 

\item The fixed point of $\Bopt[\ga]$ is the limiting point of repeatedly applying $\Bopt[\ga]$ to any $Q\in\Q$.

\item The fixed point of $\Bopt[\ga]$ is an optimal Q-function to \eqref{objective} when $\gamma$ is the following \textbf{episodic discounting function}:

\vspace{-0.2in} \small
\eq[gamma]{
\gammaEPI(s) ~\define~ \indicator{s\not\in\terminals} ~=~ 
\begin{cases}
	~~1~~ & \text{~for~} s \not\in \terminals \\
	~~0~~ & \text{~for~} s \in \terminals
\end{cases}
.}
\normalsize \noindent
\end{enumerate}
\end{theorem}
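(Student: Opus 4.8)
The plan is to prove (1) and (2) together by showing that, although $\Bopt[\ga]$ need not contract in a single step (since $\ga$ may equal $1$ on non-terminal states), some iterate $(\Bopt[\ga])^n$ is a sup-norm contraction. First I would record the one-step Lipschitz estimate: writing $V_i=\max_a Q_i(\cdot,a)$, the $\max$-nonlinearity is harmless because $|\max_a Q_1(s,a)-\max_a Q_2(s,a)|\le\max_a|Q_1(s,a)-Q_2(s,a)|$, so for every $(s,a)$,
\[
| \Bopt[\ga] Q_1(s,a) - \Bopt[\ga] Q_2(s,a) | \;\le\; \sum_{s'} \T(s'|s,a)\, \ga(s')\, | V_1(s') - V_2(s') |.
\]
Iterating this bound and absorbing the $\max$ at each stage into a (greedy, possibly non-stationary) choice of action, I obtain that the $n$-step Lipschitz constant is at most
\[
\sup_{\pi}\; \max_{s,a}\; \E[\pi]\Big[ \prod_{k=2}^{n+1} \ga(S_k) \;\Big|\; S_1=s,\, A_1=a \Big],
\]
the worst-case expected product of discount factors along an $n$-step path.

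Next I would bound this expected product strictly below $1$. Put $\bar\ga=\max_{s\in\terminals}\ga(s)<1$ by hypothesis. Since $\ga\le 1$ everywhere, any path that visits a terminal state at least once within its first $n$ transitions has discount product $\le\bar\ga$. Because the ELP is finite and every stationary deterministic policy terminates in finite expected time, the worst-case expected hitting time of $\terminals$ is finite, say $M<\infty$; Markov's inequality then gives, uniformly over policies and starting points, $\Pr{\tau>n}\le M/n$ where $\tau$ is the first step at which a terminal state is reached. Choosing any $n>M$ and setting $p=\Pr{\tau>n}\le M/n<1$, the expected product is at most $(1-p)\bar\ga+p<1$. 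Hence $(\Bopt[\ga])^n$ is a contraction of modulus $c<1$, and Banach's fixed-point theorem yields a unique fixed point of $(\Bopt[\ga])^n$; a short standard argument (any fixed point of $\Bopt[\ga]$ is one of $(\Bopt[\ga])^n$, and $\Bopt[\ga]$ applied to the latter's fixed point is again a fixed point, forcing equality) promotes this to the unique fixed point $Q^*$ of $\Bopt[\ga]$, with global convergence of the iterates --- giving (1) and (2).

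For (3), with $\ga=\gammaEPI$ the factor $\gammaEPI(s')$ zeroes out all propagation through terminal states, so value iteration from $Q\equiv 0$ has an exact meaning. By backward induction,
\[
(\Bopt[\gammaEPI])^{n} 0\,(s,a) \;=\; \max_{\pi}\; \E[\pi]\Big[ \sum_{k=2}^{\min(n+1,\,T)} \R(S_k) \;\Big|\; S_1=s,\,A_1=a \Big],
\]
the optimal expected reward accrued from the next state until termination, truncated at $n$ steps. Letting $n\to\infty$, part (2) gives $(\Bopt[\gammaEPI])^{n} 0\to Q^*$, while boundedness of $\R$ together with $\E[\pi][T]<\infty$ lets dominated convergence (the truncated sums are dominated by the integrable $\sup_s|\R(s)|\cdot T$) identify the same limit with the optimal undiscounted action-value $Q^{\mathrm{opt}}(s,a)=\sup_\pi\E[\pi][\sum_{k=2}^{T}\R(S_k)\mid S_1=s,A_1=a]$. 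Thus $Q^*=Q^{\mathrm{opt}}$. Finally, $\pi_{Q^*}$ selects in each state an action attaining $\max_a Q^*(s,a)$, so $V^*=\max_a Q^*$ solves the policy-evaluation fixed-point equation of $\pi_{Q^*}$; that equation has a unique solution by the same finite-time contraction argument, whence $V^{\pi_{Q^*}}=V^*$. Since $\pi_{Q^*}$ also terminates in finite expected time, its value-to-go equals its realized expected total reward, and therefore $J(\pi_{Q^*})=\sup_\pi J(\pi)$, proving that $Q^*$ is optimal to \eqref{objective}.

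The main obstacle, and the point where the finite-time hypothesis does all the work, is the passage from the per-step estimate to a genuine contraction: unlike the uniformly discounted case, no single application of $\Bopt[\ga]$ shrinks distances, so everything hinges on showing the expected discount product decays because every trajectory is forced through the discounting terminal states within finite expected time. A secondary subtlety is exchanging the limit with the supremum over policies in (3); I avoid interchanging them directly by instead sandwiching $Q^*$ between the value-iteration limit from (2) and the dominated-convergence limit of the truncated optimal values.
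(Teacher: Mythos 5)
Your proposal is correct in outline and takes a genuinely different route from the paper. For (1)--(2) the paper stays entirely within graph/order arguments: it proves a closure lemma (under any policy, an absorbing subset of states must contain all of that policy's reachable states, hence a terminal state), and uses it inductively to show that each application of $\mathcal{B}^\gamma$ strictly shrinks the per-state distance on at least one additional ``support'' state, so that $(\mathcal{B}^\gamma)^{|\SS|}$ is a (possibly very weak) contraction; the promotion to $\mathcal{B}^\gamma$ itself is then done via monotonicity and sub-solutions $Q\leq\mathcal{B}^\gamma Q$. You instead run the classical stochastic-shortest-path argument: bound the $n$-step Lipschitz constant by a worst-case expected discount product along trajectories and force it below $1$ by a uniform termination bound. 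For (3), the paper avoids finite-horizon truncation altogether --- it shows $Q^*=Q_{\pi^*}$ by unrolling the fixed-point equation and shows every on-policy value satisfies $Q_\pi\leq\mathcal{B}Q_\pi$, hence $Q_\pi\leq Q^*$ by monotonicity --- whereas you identify $Q^*$ with the limit of truncated optimal values. Your route buys familiarity (it adapts the Bertsekas--Tsitsiklis analysis to a state-dependent $\gamma$); the paper's buys self-containedness and an explicit contraction horizon $|\SS|$.

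The one step you assert rather than prove is exactly where the difficulty lives: the uniform bound $M=\sup_\pi \mathbf{E}_\pi[\tau]<\infty$. The ELP definition gives $\mathbf{E}_\pi[T]<\infty$ policy by policy, and your iterated Lipschitz bound ranges over \emph{non-stationary} (stage-wise greedy) action choices, so per-policy finiteness does not hand you uniformity, and ``finite ELP plus every stationary deterministic policy terminates'' implying a finite worst case over all history-dependent policies is itself a nontrivial lemma, not an observation. It is true and provable: let $u_n(s)$ be the maximal probability, over all policies, of surviving $n$ steps (a finite-horizon DP); $u_n$ decreases to a limit $u$ satisfying $u(s)=\indicator{s\not\in\terminals}\max_a\sum_{s'}\T(s'|s,a)\,u(s')$, a stationary deterministic policy greedy with respect to $u$ attains it, and properness of that policy forces $u\equiv 0$; finiteness of $\SS$ upgrades this to uniform geometric tails over all policies, whence $M<\infty$. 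This lemma plays precisely the role of the paper's absorbing-subset proposition, so your proof is not shorter than the paper's --- it has merely relocated the combinatorial core into an unproved citation. Note also that the same uniform tail is needed a \emph{second} time, in your part (3): per-policy dominated convergence only gives the lower sandwich $Q^*\geq \sup_\pi F(\pi)$ (with $F(\pi)$ the undiscounted value), while the upper direction --- that the truncated optimal values $F_n(\pi_n)$ do not overshoot --- needs $\sup_\pi \mathbf{E}_\pi[(T-n)^+]\to 0$, since rewards may be negative and truncation can then \emph{increase} a policy's value. With the uniformization lemma actually proved, both uses are covered and your argument is complete.
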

Different from the classic result which crucially relies on uniform discounting, both the uniqueness and optimality in Theorem \ref{thm:Qstar} are rooted from an inherent graph property of ELP. See the proof in Appendix \ref{sec:proof_bellman}.

Since we focus on optimizing the objective \eqref{objective} in this paper, we will write $\Bopt \define \Bopt[\gammaEPI]$, and simply call it the \textbf{Bellman operator}, for the particular Bellman optimality operator that uses the particular episodic discounting function \eqref{gamma}. We call the fixed point of $\Bopt$, the \textbf{Bellman value}, and denote it by $Q^*$. Similarly, we call $Q = \Bopt Q$, the \textbf{Bellman equation}, which is again assuming the particular episodic discounting function \eqref{gamma} in $\Bopt$. More explicitly, Bellman equation refers to the following non-linear equation over $\Q$:  

\vspace{-0.15in} \small
\eq[boe]{
Q(s,a) = \sum_{s'\in\SS}~ \max_{a'\in \AS}~ \T(s'|s,a) \* \Big( R(s') + \indicator{s'\not\in\terminals} \* Q(s',a') \Big)
}
\normalsize \noindent
It is worth noting that although the Bellman value is unique, there can be more than one optimal Q-functions for a given problem. For example, any Q-function that gives the same preferential order with the Bellman value is also optimal. 


\section{A Nonlinear Lagrangian Duality}
\label{sec:lagrangian}

In this and the next section, we discuss a variational treatment to the Bellman equation which converts \eqref{boe} to a constrained \emph{minimization} problem. We will demonstrate some nice theoretical properties of this variational problem in this section, and will present algorithmic applications of the theory in the next section.

Our idea is inspired by a long-known linear programming re-formulation~\cite{1994:puterman} of the closely related Bellman equation for \emph{state-value functions} $V: \SS \ra \Real$,

\vspace{-0.1in} \small
\eq[bellman_v]{
V(s) = \max_a \sum_{s'} \T(s'|s,a) \* \Big( R(s') + \gamma \* V(s') \Big)
.}
\normalsize \noindent
Both the $Q$-form Bellman equation \eqref{boe} and the $V$-form Bellman equation \eqref{bellman_v} are nonlinear due to the $\max$ operator inside. But the $V$-form equation \eqref{bellman_v} admits a \emph{linear} variational re-formulation~\cite{2018:LP_RL}:

\vspace{-0.15in} \small
\eqm[lp_v]{
&\min_{V} \quad 	\sum_{s\in \SS} \rho(s) \* V(s)  \quad \\
&
V(s) \geq \sum_{s'\in\SS} \T(s'|s,a) ~ \Big( R(s') + \gamma ~ V(s') \Big)
~,~ \forall (s,a)
}
\normalsize \noindent
Thanks to its linearity, \eqref{lp_v} enjoys the standard LP duality properties, and in particular has \emph{minimax equality} for its corresponding Lagrangian function, which is the basis for a recently revived thread of research on the LP approach to MDP and RL~\cite{2016:chen_wang, 2017:LP_DP, 2017:LP_RL, 2018:LP_RL, 2020:fenchel_duality}. 
In traditional settings, with the V-function solution of \eqref{bellman_v} or \eqref{lp_v}, one could construct an optimal Q-function by averaging the V-values over the transition probabilities, obtaining $Q(s,a)=\sum_{s'\in \SS} P(s'|s,a)~V(s')$. 

In modern learning settings, however, directly applying this V-function based approach encounters additional obstacles. First, the transition probabilities $P(s'|s,a)$ are often unknown in learning settings, and learning these probabilities is itself a substantial challenge. Second, computing greedy actions from the V-function often requires significantly more computational resource when deep neural networks are used. Specifically, we need to collect the V-values of successor states for each action, which requires to evaluate the V-function under at least $|\AS|$ different states even assuming deterministic transition (and the number can be much higher under stochastic transition). On the other hand, to decide the greedy action for given state $s$, we only need to evaluate the Q-function under one state, the state $s$. Further evaluating the Q-values for each action \emph{under the same state} usually involves much less computation.\footnote{
	For very large action spaces, such as continuous spaces, optimizing over even state-conditioned Q-values can be non-trivial~\cite{2016:naf}, but in that case optimizing over the V-values under different states would be only more forbiddingly expensive.
}

For these reasons, most value-based learning algorithms focus on learning the Q-functions directly~\cite{1989:qlearning,2010:double-q,2015:dqn,2018:sac}.
It is thus natural to ask if we can develop a variational treatment directly to the $Q$-form Bellman equation \eqref{boe}, similar to what we did to the $V$-form equation. 

Indeed, the $Q$-form Bellman equation \eqref{boe} can be similarly recast into a constrained optimization problem, such as in the following form:
\eqm[minQ]{
\min_{Q} \quad & 	\Exp[\zeta\sim \pi]{Q(S_T,A_T)}  \\
\text{s.t.} \quad &	Q(s,a) \geq \Bopt Q(s,a)
\quad,\quad \forall (s,a)
}
where $\pi$ is an arbitrary policy, and $T$ is the termination time.
Unfortunately, unlike the $V$-form Bellman equation for which the nonlinear operation $\max_a$ can be ``unpacked'' into $|\AS|$ linear constraints (cf. \eqref{bellman_v} and \eqref{lp_v}), the optimization problem \eqref{minQ} for the $Q$-form Bellman equation is still nonlinear, due to the $\max_{a'}$ ``wrapped'' inside $\sum_{s'}$. As a result, although \eqref{minQ} can still be written into the Lagrangian form,
it is unclear if the nonlinear Lagrangian still enjoys strong duality, a key property for designing effective and principled learning algorithms~\cite{2017:LP_DP, 2018:LP_RL}.

In the following, we give an affirmative answer to this open question by proving a \emph{minimax theorem} for the nonlinear Lagrangian of the $Q$-form Bellman equation \eqref{boe}. 

\begin{definition}
\label{def:lagrangian}
Given a finite ELP $(\SS,\AS,\T,\R,\rho)$, 
the \textbf{Lagrangian function} with \textbf{conjugate policy} $\pi$ is $\Lagr_\pi(Q,\mul)\define$

\vspace{-0.1in} \small
\eq[lagrangian]{
\Exp[\zeta\sim \pi]{Q(S_T,A_T)} + \sum_{s,a} \lambda(s,a) ~ \Big( \Bopt Q(s,a) - Q(s,a) \Big)  
}
\normalsize \noindent
\end{definition}
Note that the conjugate policy $\pi$ of the Lagrangian only affects the first term of $\Lagr_\pi$. The second term of $\Lagr_\pi$ always uses the nonlinear \emph{optimality} operator $\Bopt$ which takes max over the Q-values, regardless of the conjugate policy $\pi$.

The first term of the Lagrangian takes average only over the terminal states, instead of over \emph{all} states as in dynamic programming~\cite{1994:puterman,2017:LP_DP}, because the latter is typically intractable in learning settings -- the learning agent who relies on the episodic rollout to obtain data cannot sweep through the whole state-action space itself, not even sample the space (near-)uniformly.\footnote{
	Mathematically, running a policy with uniform (or positive) distribution over the actions would have non-zero probability to reach any state; but typically in reality, such explorations will ``almost surely'' be stuck in a very limited region of the state space.
} 
On the other hand, the Lagrangian averages over the \emph{terminal} states, instead of over the \emph{initial} states as in previous learning-oriented works (e.g. ~\cite{2017:LP_RL,2018:LP_RL,2020:fenchel_duality}), because the distribution of initial states are fixed, while which terminal states a policy would reach depends on the behavior of the policy. It turns out that this policy dependency, as well as the mathematical construct of evaluating the Q-function over the terminal states (which is only meaningful in the ELP formulation), would play an important role in developing the main results we shall see.

Let us first confirm, with the following lemma, that the Bellman value function $Q^*$ is a minimax solution of the Lagrangian function \eqref{lagrangian}. See the proof in Appendix \ref{sec:proof_Qstar_minimax}. 

\addtocounter{theorem}{1}
\begin{lemma}
\label{lem:Qstar_minimax}
In any finite ELP, for any conjugate policy $\pi$, 
\eq{
	Q^* \in \arg~ \min_{Q\in\Q} \max_{\mul\geq 0}~ \Lagr_\pi(Q, \mul)
.}
\end{lemma}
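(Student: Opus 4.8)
The plan is to reduce the minimax problem to the constrained minimization \eqref{minQ} by evaluating the inner maximization in closed form, and then to verify that $Q^*$ solves that constrained problem via a super-solution (comparison) argument.

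First I would compute $\max_{\mul \geq 0} \Lagr_\pi(Q, \mul)$ for a fixed $Q$. Since each multiplier $\lambda(s,a)$ ranges over $[0,\infty)$ and multiplies the term $\Bopt Q(s,a) - Q(s,a)$, the supremum is $+\infty$ whenever some constraint is violated (i.e. $\Bopt Q(s,a) > Q(s,a)$ for some $(s,a)$), and is attained at $\mul = \vect{0}$ otherwise. Hence the inner maximization equals $\Exp[\zeta\sim \pi]{Q(S_T,A_T)}$ when $Q(s,a) \geq \Bopt Q(s,a)$ for all $(s,a)$, and $+\infty$ otherwise. This identifies $\min_Q \max_{\mul \geq 0} \Lagr_\pi(Q,\mul)$ with the constrained problem \eqref{minQ}, so it suffices to show $Q^*$ is an optimal solution of \eqref{minQ}.

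Next I would establish feasibility together with a comparison principle. Feasibility of $Q^*$ is immediate, since $Q^* = \Bopt Q^*$ by Theorem \ref{thm:Qstar}, so the constraint holds with equality. The crucial step is to show that every feasible $Q$ dominates $Q^*$ pointwise, i.e. $Q \geq \Bopt Q$ implies $Q \geq Q^*$. For this I would first observe that $\Bopt$ is monotone: because $\Bopt$ is built only from nonnegative transition weights and the $\max_{a'}$ operator, $Q_1 \geq Q_2$ pointwise implies $\Bopt Q_1 \geq \Bopt Q_2$ pointwise. Applying monotonicity repeatedly to a feasible $Q$ yields a non-increasing chain $Q \geq \Bopt Q \geq \Bopt^2 Q \geq \cdots$, and by Theorem \ref{thm:Qstar}(2) this chain converges to the fixed point $Q^*$; passing to the limit gives $Q \geq Q^*$ everywhere.

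Finally, since $Q \geq Q^*$ pointwise for every feasible $Q$, evaluating at the random terminal pair $(S_T, A_T)$ and taking expectation under $\pi$ gives $\Exp[\zeta\sim\pi]{Q(S_T,A_T)} \geq \Exp[\zeta\sim\pi]{Q^*(S_T,A_T)}$. Because $Q^*$ is itself feasible, its objective value equals $\Exp[\zeta\sim\pi]{Q^*(S_T,A_T)}$, which is therefore the minimum over all feasible $Q$, while all infeasible $Q$ yield $+\infty$. Thus $Q^*$ minimizes the inner-maximized objective, proving the claim. I expect the main obstacle to be the comparison principle in the third paragraph: it is where the ELP structure genuinely enters, since the pointwise domination $Q \geq Q^*$ relies on both the monotonicity of $\Bopt$ and the global convergence of value iteration guaranteed by Theorem \ref{thm:Qstar}(2); feasibility by itself would not single out $Q^*$ as the minimizer without this convergence.
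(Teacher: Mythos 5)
Your proposal is correct and follows essentially the same route as the paper's proof: both reduce the inner maximization to the constrained problem \eqref{minQ} (infeasible $Q$ gives $+\infty$, feasible $Q$ gives $\Exp[\zeta\sim\pi]{Q(S_T,A_T)}$), and both use monotonicity of $\Bopt$ plus the convergence guarantee of Theorem \ref{thm:Qstar} to obtain the pointwise domination $Q \geq \Bopt Q \geq (\Bopt)^2 Q \geq \cdots \geq Q^*$ for every feasible $Q$. Your version is in fact slightly more careful than the paper's, since you state explicitly that the limiting step $Q \geq Q^*$ invokes Theorem \ref{thm:Qstar}(2), which the paper's chain of inequalities leaves implicit.
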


Now we show that $\Lagr_\pi$ has a dual form when the Lagrangian multiplier 
is set to a special vector. Specifically, it is known that in any episodic learning process, every policy $\pi$ has a unique \textbf{stationary distribution} $\rho_\pi(s)$ such that $\Pr[\pi]{S_{t+1}=s} = \rho_\pi(s)$ if $\Pr[\pi]{S_{t}=s} = \rho_\pi(s)$~\cite{2020:bojun}. The following lemma gives a transformation of the Lagrangian when the multiplier vector $\mul$ is proportional to the stationary distribution of the conjugate policy $\pi$ (and is scaled by the average episode length of $\pi$). 

\begin{lemma}
\label{lem:lagr_dual}
In any finite ELP, let $\Lagr_\pi$ be the Lagrangian with conjugate policy $\pi$, and let $\mul_\pi$ be the particular Lagrangian multiplier with $\mul_\pi(s,a) = \rho_\pi(s) \* \pi(a|s) \* \E_{\pi}[T]$, then $\Lagr_\pi (Q, \mul_\pi) =$

\vspace{-0.1in} \small
\eqm[lagr_dual]{
&	J(\pi) + \sum_{s\not\in\terminals} \sum_{a\in\AS} 
	\mul_\pi(s,a) ~ \Big( \max_{\bar{a}} Q(s,\bar{a}) - Q(s,a) \Big)
}
\normalsize \noindent
\end{lemma}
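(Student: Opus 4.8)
The plan is to expand $\Lagr_\pi(Q,\mul_\pi)$ term by term and reorganize so that the transition sums collapse under the stationarity of $\rho_\pi$, leaving exactly $J(\pi)$ plus the non-terminal ``greedy gap'' while the remaining pieces cancel. First I would substitute the explicit form of $\Bopt Q=\Bopt[\gammaEPI]Q$ from \eqref{bellman} into the second term of \eqref{lagrangian} and split it into a reward part and a bootstrapping (max) part. In both parts the factor $\mul_\pi(s,a)\,\T(s'|s,a)=\E_\pi[T]\,\rho_\pi(s)\,\pi(a|s)\,\T(s'|s,a)$ appears, and summing over $(s,a)$ for a fixed successor $s'$ invokes the defining stationarity identity $\sum_{s,a}\rho_\pi(s)\,\pi(a|s)\,\T(s'|s,a)=\rho_\pi(s')$. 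This single structural fact, immediate from the definition of $\rho_\pi$, turns each double sum over edges into a single sum over successor states weighted by $\rho_\pi$.

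Applying this to the reward part yields $\E_\pi[T]\sum_{s'}\rho_\pi(s')\,R(s')$, and to the bootstrapping part yields $\E_\pi[T]\sum_{s'\not\in\terminals}\rho_\pi(s')\,\max_{\bar{a}}Q(s',\bar{a})$, where the discounting factor $\gammaEPI(s')=\indicator{s'\not\in\terminals}$ restricts the latter to non-terminal states. The bootstrapping part is then rewritten via $\sum_{a}\mul_\pi(s,a)=\E_\pi[T]\,\rho_\pi(s)$ (since $\sum_a\pi(a|s)=1$) into $\sum_{s\not\in\terminals}\sum_a\mul_\pi(s,a)\,\max_{\bar{a}}Q(s,\bar{a})$, matching the $\max$ contribution of the target. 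I would split the remaining $-\sum_{s,a}\mul_\pi(s,a)\,Q(s,a)$ piece of \eqref{lagrangian} along $\terminals$; its non-terminal part combines with the $\max$ contribution to form exactly $\sum_{s\not\in\terminals}\sum_a\mul_\pi(s,a)\big(\max_{\bar{a}}Q(s,\bar{a})-Q(s,a)\big)$.

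The two genuinely non-formal steps, which I expect to be the main obstacle, are the regenerative identities tying $\rho_\pi$ to episodic quantities. The first is $\E_\pi[T]\sum_{s}\rho_\pi(s)\,R(s)=J(\pi)$, a renewal-reward relation: the stationary per-step average reward equals the expected per-episode reward divided by the expected episode length $\E_\pi[T]$, which is precisely how $\rho_\pi$ relates to the objective \eqref{objective} in an ELP. This converts the collapsed reward part into $J(\pi)$. The second is that, for a terminal state $s$, $\E_\pi[T]\,\rho_\pi(s)=\Pr[\pi]{S_T=s}$: since each episode visits exactly one terminal state, at the termination time $T$, the stationary frequency of $s$ equals its per-episode hitting probability divided by $\E_\pi[T]$. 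Consequently $\sum_{s\in\terminals}\sum_a\mul_\pi(s,a)\,Q(s,a)=\sum_{s\in\terminals}\sum_a\Pr[\pi]{S_T=s}\,\pi(a|s)\,Q(s,a)=\Exp[\zeta\sim\pi]{Q(S_T,A_T)}$, which is exactly the first term of the Lagrangian.

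Putting these together, the first term $\Exp[\zeta\sim\pi]{Q(S_T,A_T)}$ cancels against the terminal part of $-\sum_{s,a}\mul_\pi(s,a)\,Q(s,a)$, the reward part becomes $J(\pi)$, and the surviving non-terminal terms assemble into the greedy-gap sum, yielding \eqref{lagr_dual}. I would make the two regenerative identities rigorous either by appealing to the stationary-distribution characterization of ELPs from \citet{2020:bojun}, or, for a self-contained argument, by a renewal-theoretic limit over a long multi-episode rollout; the edge-stationarity identity used in the collapse step needs no further justification beyond the definition of $\rho_\pi$.
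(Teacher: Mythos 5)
Your proposal is correct and follows essentially the same route as the paper's proof in Appendix \ref{sec:proof_lagr_dual}: your edge-collapse step is the stationarity of $\rho_\pi$ that the paper uses to rewrite the $\Bopt Q$ term, and your two ``regenerative identities'' are exactly the two invocations of the ELP ergodic theorem (Proposition \ref{prop:change_space}, from \citet{2020:bojun}) -- once with $f(s)=R(s)$ to produce $J(\pi)$, and once with $f(s) = \Exp[A\sim\pi(s)]{\indicator{s\in\terminals}\, Q(s,A)}$ to match the terminal-state mass $\E_\pi[T]\,\rho_\pi(s)$ with $\Pr[\pi]{S_T=s}$ and cancel the first Lagrangian term. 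The only difference is presentational (you work with explicit sums and cancel the terminal piece at the end, while the paper converts the first term into stationary-space form up front), so no gap remains once you cite Proposition \ref{prop:change_space} as you indicate.
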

\begin{proofidea}
Applying a known \emph{ELP ergodic theorem}~\cite{2020:bojun}, we can transform the first term of \eqref{lagrangian} from an average over trajectories to an average over the state-action space: 

\vspace{-0.1in} \small
\eq{
\Exp[\zeta\sim\pi]{Q(S_T,A_T)} 
= \E[\zeta\sim\pi] [T] \* \Exp[S,A\sim\rho_\pi]{\Big(1-\gammaEPI(s)\Big) ~ Q(S,A)}
}
\normalsize \noindent
Substituting the above into \eqref{lagrangian} and re-arranging would give \eqref{lagr_dual}. To derive the $J(\pi)$ term in \eqref{lagr_dual} (which is an average over the trajectory space, see \eqref{objective}), we will need to use the ELP ergodic theorem again to transform things back to the trajectory space. See Appendix \ref{sec:proof_lagr_dual} for the complete proof.
\end{proofidea}
\addtocounter{theorem}{-1}

The first term in the dual form of the Lagrangian, i.e. in \eqref{lagr_dual}, is the \emph{true} performance of the conjugate policy $\pi$, which is a constant with respect to the Q-function. Utilizing this fact, we can prove the strong duality property for Lagrangians with optimal conjugate policy.

\begin{theorem}
\label{thm:elp_minimax}
In any finite ELP $(\SS,\AS,\T,\R,\rho)$, if $\mu$ is an optimal policy, then its conjugate Lagrangian $\Lagr_\mu$ has strong duality property, for which
\eq[duality]{
\min_{Q\in\Q}~ \max_{\mul\geq 0}~ \Lagr_{\mu} (Q,\mul) = \max_{\mul\geq 0}~ \min_{Q\in\Q}~ \Lagr_{\mu} (Q,\mul) = J(\mu)
}
\end{theorem}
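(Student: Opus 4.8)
The plan is to establish both equalities in \eqref{duality} by sandwiching the minimax value and the maximin value between $J(\mu)$ and $J(\mu)$, using Lemma \ref{lem:Qstar_minimax} and Lemma \ref{lem:lagr_dual} together with the elementary max-min inequality. Write $\Lagr \define \Lagr_\mu$ for short, and recall throughout that for every $Q$ and every $s$ the gap $\max_{\bar a} Q(s,\bar a) - Q(s,a) \geq 0$. One half of the problem comes for free: the max-min inequality gives $\max_{\mul\geq 0}\min_{Q}\Lagr(Q,\mul) \leq \min_{Q}\max_{\mul\geq 0}\Lagr(Q,\mul)$, so it remains to pin both quantities to $J(\mu)$.

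First I would compute the minimax (primal) value. By Lemma \ref{lem:Qstar_minimax} the Bellman value $Q^*$ attains the outer minimum, so $\min_{Q}\max_{\mul\geq 0}\Lagr(Q,\mul) = \max_{\mul\geq 0}\Lagr(Q^*,\mul)$. Since $Q^* = \Bopt Q^*$, the bracket $\Bopt Q^*(s,a)-Q^*(s,a)$ vanishes identically, hence the multiplier term of $\Lagr(Q^*,\mul)$ is zero for every $\mul\geq 0$ and $\max_{\mul\geq 0}\Lagr(Q^*,\mul) = \Exp[\zeta\sim\mu]{Q^*(S_T,A_T)}$. To identify this terminal expectation with $J(\mu)$ I would evaluate the very same quantity through Lemma \ref{lem:lagr_dual} at the multiplier $\mul_\mu(s,a)=\rho_\mu(s)\,\mu(a|s)\,\E_{\mu}[T]$: the lemma gives $\Lagr(Q^*,\mul_\mu) = J(\mu) + \sum_{s\notin\terminals}\sum_a \mul_\mu(s,a)\big(\max_{\bar a}Q^*(s,\bar a)-Q^*(s,a)\big)$, while the direct substitution above gives $\Lagr(Q^*,\mul_\mu)=\Exp[\zeta\sim\mu]{Q^*(S_T,A_T)}$ (the bracket is zero for $Q^*$). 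Equating the two expressions isolates the discrepancy in a single correction sum.

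The crux, and the step I expect to be the main obstacle, is showing that this correction sum is zero, which is exactly where the optimality of $\mu$ is used. A summand is nonzero only when $\mul_\mu(s,a)>0$, i.e. only for states $s$ in the stationary support of $\mu$ (where $\rho_\mu(s)>0$) and actions actually played there (where $\mu(a|s)>0$). For an optimal policy, a standard policy-improvement / performance-difference argument shows that on every such visited state $\mu$ places mass only on $Q^*$-greedy actions, so that $Q^*(s,a)=\max_{\bar a}Q^*(s,\bar a)$ and each bracket vanishes. I would therefore include (or cite) the lemma that an optimal policy is $Q^*$-greedy on its stationary support; this is the one place where ``$\mu$ optimal'' is indispensable, and it is precisely the hypothesis that fails for a general conjugate policy, leaving a duality gap. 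Granting it, the correction sum is zero and the minimax value equals $J(\mu)$.

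Finally I would treat the maximin (dual) value. For the reverse of weak duality I would again test the single multiplier $\mul_\mu$: by Lemma \ref{lem:lagr_dual}, $\Lagr(Q,\mul_\mu) = J(\mu) + \sum_{s\notin\terminals}\sum_a \mul_\mu(s,a)\big(\max_{\bar a}Q(s,\bar a)-Q(s,a)\big) \geq J(\mu)$ for every $Q$, since $\mul_\mu\geq 0$ and each bracket is nonnegative. Hence $\min_{Q}\Lagr(Q,\mul_\mu)\geq J(\mu)$, and so $\max_{\mul\geq 0}\min_{Q}\Lagr(Q,\mul)\geq \min_{Q}\Lagr(Q,\mul_\mu)\geq J(\mu)$. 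Combined with the max-min inequality and the minimax value just computed, this forces $\max_{\mul\geq 0}\min_{Q}\Lagr = \min_{Q}\max_{\mul\geq 0}\Lagr = J(\mu)$, which is the chain \eqref{duality} and establishes strong duality.
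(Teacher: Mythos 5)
Your proposal is correct, and most of its skeleton coincides with the paper's: weak duality for free, Lemma \ref{lem:Qstar_minimax} to reduce the minimax value to $\max_{\mul\geq 0}\Lagr_\mu(Q^*,\mul)=\E[\mu][Q^*(S_T,A_T)]$ (the multiplier term vanishing since $Q^*=\Bopt Q^*$), and Lemma \ref{lem:lagr_dual} with the single test multiplier $\mul_\mu$ to get $\min_Q\Lagr_\mu(Q,\mul_\mu)\geq J(\mu)$ on the maximin side. The one place you genuinely diverge is in identifying $\E[\mu][Q^*(S_T,A_T)]$ with $J(\mu)$. The paper never argues about greediness of $\mu$: it uses the facts, established in its proof of Theorem \ref{thm:Qstar}(3), that $Q^*$ equals the on-policy value $Q_{\pi^*}$ of the $Q^*$-greedy policy (eq.\ \eqref{on_policy_value_2}) and that every on-policy value is \emph{constant} on terminal state-action pairs with value $J(\pi^*)$ (eq.\ \eqref{terminal_value}); hence $\E[\pi][Q^*(S_T,A_T)]=J(\pi^*)$ for \emph{every} conjugate policy $\pi$, and optimality of $\mu$ enters only through the scalar identity $J(\mu)=J(\pi^*)$. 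You instead evaluate $\Lagr_\mu(Q^*,\mul_\mu)$ twice and reduce everything to the auxiliary lemma that an optimal policy places mass only on $Q^*$-greedy actions on its stationary support. That lemma is true in finite ELPs, but be aware it is not quite off-the-shelf: the ``standard'' performance-difference argument is normally proved for discounted or finite-horizon MDPs, and in the undiscounted total-reward ELP setting its natural proof is a telescoping along an episode, $\E[\mu]\big[Q^*(S_0,A_0)\big] = J(\mu) + \E[\mu]\big[\textstyle\sum_{t=1}^{T-1}\big(\max_{a}Q^*(S_t,a)-Q^*(S_t,A_t)\big)\big]$, whose left side is evaluated by exactly the two facts the paper uses directly ($Q^*=Q_{\pi^*}$ and \eqref{terminal_value}), with Proposition \ref{prop:change_space} then passing between the trajectory support and the stationary support. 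So if you fill in that lemma honestly you end up re-deriving the paper's central computation inside it, making your route sound but strictly longer; what the paper's route buys in addition is the sharper intermediate fact \eqref{minimax_L} -- the minimax value equals $J(\pi^*)$ for an \emph{arbitrary} conjugate policy -- which is precisely what makes your (correct) closing observation about the duality gap for suboptimal conjugate policies immediate.
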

\begin{proofidea}
For any conjugate policy $\pi$, due to Lemma \ref{lem:Qstar_minimax}, $Q^*$ is a minimax solution of $\Lagr_\pi$, from which we can obtain $\min_{Q} \max_{\mul} \Lagr_\pi(Q,\mul) = \E[\pi] [Q_{\pi^*}(S_T,A_T)] = J(\pi_{Q^*})$, where $\pi_{Q^*}$ is the $Q^*$-greedy policy.

On the other hand, again for any conjugate policy $\pi$, due to Lemma \ref{lem:lagr_dual}, the dual form \eqref{lagr_dual} of the Lagrangian, if seen as a function of $Q$, attains its minimum when $Q$ achieves complementary slackness with $\mul_\pi$ in the second term, in which case the Lagrangian dual equals $J(\pi)$. Thus, for the particular multiplier $\mul_\pi$ we have $\min_{Q} \Lagr_\pi(Q, \mul_\pi) = J(\pi)$, which is a lower bound of $\max_{\mul} \min_{Q} \Lagr_\pi(Q, \mul_\pi)$.

Now, set the conjugate policy $\pi$ to be the optimal policy $\mu$, as assumed in the theorem, we would have 
\begin{align}
\max_{\mul} \min_{Q} \Lagr_\mu (Q,\mul) 
&\geq \min_{Q} \Lagr_\mu (Q,\mul_\mu) 
= J(\mu) = J(\pi_{Q^*}) \nonumber \\
&= \min_{Q} \max_{\mul} \Lagr_\mu(Q,\mul)  \label{elp_minimax_1}
.\end{align}
Because of the \emph{weak duality} of the Lagrangian (which universally holds for any function), we also have 
$\max_{\mul} \min_{Q} \Lagr_\mu (Q,\mul) \leq \min_{Q} \max_{\mul} \Lagr_\mu (Q,\mul)$, which means \eqref{elp_minimax_1} can only be an equality, thus gives \eqref{duality} as desired. See Appendix \ref{sec:proof_elp_minimax} for the complete proof. 
\end{proofidea}

Theorem \ref{thm:elp_minimax} is a minimax theorem for the Q-functions and the Lagrangian multipliers. But from the proof idea above, we can also see that $\pi_{Q^*}$, the greedy policy of the Bellman value, serves as the counterpart of $Q^*$ to form a minimax point of $\Lagr_\mu$. In fact, we can judge if any pair of Q-function and policy form a Lagrangian minimax point 
based on complementary slackness
(see the proof in Appendix \ref{sec:proof_eq_condition}):  
\begin{proposition}
\label{prop:eq_condition}
Given a finite ELP, for any Q-function $Q$ and any policy $\pi$, let $\rho_\pi(s,a) = \rho_\pi(s) ~ \pi(a|s)$ and $\mul_\pi(s,a) = \rho_\pi(s,a) ~ \E_{\pi}[T]$, we have 
\eq{
		\Lagr_{\pi} (Q, \bar{\mul}) 	
\leq 	\Lagr_{\pi} (Q, \mul_\pi)
\leq 	\Lagr_{\pi} (\bar{Q}, \mul_\pi)
\quad,\quad \forall \bar{Q}, \bar{\mul}
}
if and only if
\renewcommand\labelenumi{(\theenumi)}
\begin{enumerate}
\item $\Bopt Q(s,a) - Q(s,a) \leq 0$ \quad\quad\quad\quad\quad\quad\quad,~~ $\forall (s,a)$
\vspace{-0.05in}
\item $\rho_\pi(s,a) ~ \Big( \Bopt Q(s,a) - Q(s,a) \Big) = 0$ ~\quad\quad,~~ $\forall (s,a)$
\vspace{-0.05in}
\item $\rho_\pi(s,a) ~ \Big( \max_{\bar{a}} Q(s,\bar{a}) - Q(s,a) \Big) = 0$ ~,

	  \quad\quad\quad\quad\quad\quad\quad\quad\quad\quad\quad\quad\quad\quad\quad~~
	  $\forall s\not\in\terminals, \forall a$
\end{enumerate}
\end{proposition}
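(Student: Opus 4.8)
The plan is to read this as a KKT-style characterization of the saddle point $(Q,\mul_\pi)$ and to treat the two defining inequalities separately, exploiting the two equivalent expressions we have for $\Lagr_\pi$. The left inequality asserts that $\mul_\pi$ maximizes $\Lagr_\pi(Q,\cdot)$ over nonnegative multipliers (the $\max_{\mul\geq 0}$ convention of Lemma \ref{lem:Qstar_minimax} and Theorem \ref{thm:elp_minimax}), while the right inequality asserts that $Q$ minimizes $\Lagr_\pi(\cdot,\mul_\pi)$ over $\Q$. I would show the former is equivalent to conditions (1)--(2) and the latter to condition (3), so that the full saddle-point property is exactly their conjunction.

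For the $\mul$-side (left inequality) I would use the primal form of Definition \ref{def:lagrangian}, in which $\Lagr_\pi(Q,\mul)$ is affine in $\mul$ with coefficient $\Bopt Q(s,a)-Q(s,a)$ on $\lambda(s,a)$. If some coefficient were strictly positive, sending the corresponding $\bar\lambda(s,a)\to+\infty$ would push $\Lagr_\pi(Q,\bar{\mul})$ above $\Lagr_\pi(Q,\mul_\pi)$, contradicting maximality; hence maximality forces $\Bopt Q-Q\leq 0$ everywhere, which is condition (1). Given (1) every term is nonpositive, so the supremum over $\mul\geq 0$ equals $\Exp[\zeta\sim\pi]{Q(S_T,A_T)}$, attained at $\mul=0$; the vector $\mul_\pi$ attains it precisely when $\sum_{s,a}\mul_\pi(s,a)(\Bopt Q-Q)=0$, and since each summand is $\leq 0$ this means each vanishes. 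As $\mul_\pi(s,a)=\rho_\pi(s,a)\,\E_\pi[T]$ with $\E_\pi[T]\geq 1>0$, dividing out the common factor yields exactly condition (2). Conversely (1) and (2) together give $\Lagr_\pi(Q,\mul_\pi)=\Exp[\zeta\sim\pi]{Q(S_T,A_T)}\geq\Lagr_\pi(Q,\bar{\mul})$ for every $\bar{\mul}\geq 0$, recovering the left inequality.

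For the $Q$-side (right inequality) I would invoke Lemma \ref{lem:lagr_dual}, reading its identity $\Lagr_\pi(\bar Q,\mul_\pi)=J(\pi)+\sum_{s\not\in\terminals}\sum_a \mul_\pi(s,a)(\max_{\bar a}\bar Q(s,\bar a)-\bar Q(s,a))$ as valid for an arbitrary $\bar Q\in\Q$ (its derivation via the ergodic theorem never constrains the Q-function). Since $\max_{\bar a}\bar Q(s,\bar a)-\bar Q(s,a)\geq 0$ and $\mul_\pi\geq 0$, each summand is nonnegative, so $\Lagr_\pi(\bar Q,\mul_\pi)\geq J(\pi)$ with equality iff every summand vanishes. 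Thus the minimum of $\Lagr_\pi(\cdot,\mul_\pi)$ over $\Q$ is $J(\pi)$, and the given $Q$ attains it -- i.e. the right inequality holds -- exactly when $\mul_\pi(s,a)(\max_{\bar a}Q(s,\bar a)-Q(s,a))=0$ for all $s\not\in\terminals$ and all $a$; cancelling $\E_\pi[T]>0$ once more turns this into condition (3). Combining the two sides, the saddle-point inequalities hold if and only if (1), (2) and (3) all hold.

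The routine parts are the sign bookkeeping and the repeated cancellation of the positive scalar $\E_\pi[T]$. The one step requiring genuine care is the unboundedness argument on the $\mul$-side: it is essential that $\bar{\mul}$ ranges over \emph{nonnegative} multipliers, so that a strictly positive coefficient really produces $+\infty$ and thereby yields the inequality (1) rather than an equality. If instead $\bar{\mul}$ ranged over all of $\Real^{\SS\times\AS}$, the characterization would collapse, forcing $\Bopt Q-Q=0$ identically; so I would make the nonnegativity convention explicit at the outset to keep the argument aligned with the surrounding duality setup.
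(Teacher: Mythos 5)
Your proof is correct, and its ``only if'' direction takes a genuinely more elementary route than the paper's. The paper's own ``if'' part is essentially identical to your converse steps: conditions (1)--(2) give $\Lagr_\pi(Q,\mul_\pi)=\Exp[\zeta\sim\pi]{Q(S_T,A_T)}=\max_{\bar{\mul}\geq 0}\Lagr_\pi(Q,\bar{\mul})$, and condition (3) gives $\Lagr_\pi(Q,\mul_\pi)=J(\pi)=\min_{\bar{Q}}\Lagr_\pi(\bar{Q},\mul_\pi)$ via the dual form. For the ``only if'' part, however, the paper routes the argument through the general saddle-point optimality conditions -- value equality $\min_Q\max_{\mul}\Lagr_\pi=\max_{\mul}\min_Q\Lagr_\pi=\Lagr_\pi(Q,\mul)$ together with $Q$ being a minimax solution and $\mul$ a maximin solution -- and then harvests (1) from the feasibility characterization inside the proof of Lemma \ref{lem:Qstar_minimax}, (2) from the value identity \eqref{minimax_L} (which brings in the optimal greedy policy $\pi^*$ and $J(\pi^*)$), and (3) from \eqref{min_L}. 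You instead split the defining saddle inequality into its two halves and characterize each one independently: the left inequality is equivalent to (1)$\wedge$(2) by the affine-in-$\mul$ unboundedness argument plus exact attainment (complementary slackness, cancelling $\E_{\pi}[T]\geq 1>0$), and the right inequality is equivalent to (3) by reading Lemma \ref{lem:lagr_dual} as an identity valid for arbitrary $\bar{Q}$ -- which it is, since its derivation never constrains the Q-function. This modular decomposition is a slightly stronger statement than the proposition itself (each half of the saddle condition is matched to its own subset of conditions), and it avoids any dependence on $\pi^*$, on the global minimax value, or on the machinery of Lemma \ref{lem:Qstar_minimax}; what the paper's route buys in exchange is that the proposition is exhibited as a direct corollary of the strong-duality apparatus already assembled for Theorem \ref{thm:elp_minimax}. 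Your closing caveat is also well taken and worth stating explicitly: the quantifier $\forall\bar{\mul}$ in the proposition must be read as $\bar{\mul}\geq 0$, consistently with the $\max_{\mul\geq 0}$ convention used throughout the paper (and implicitly in its own proof), since over unrestricted multipliers the left inequality would instead force $\Bopt Q - Q \equiv 0$.
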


\section{Lagrangian Minimization Algorithms}
\label{sec:mt}

From the last section we know that the $Q$-form Lagrangian $\Lagr_\mu$ has strong duality (where $\mu$ is optimal policy), and that the special Lagrangian multiplier $\mul_\mu$ constitutes a minimax saddle point of the Lagrangian, together with the solution of \eqref{minQ}. In other words, $\mul_\mu$ must have maximized the \emph{Lagrangian dual} function $\min_{Q} \Lagr_\mu(Q,\mul)$, that is,
\begin{align}
	\max_{\pi} J(\pi) 
&= 	\min_{Q} \max_{\mul\geq 0} \Lagr_\mu(Q,\mul) 
=	\max_{\mul\geq 0} \min_{Q} \Lagr_\mu(Q,\mul) \nonumber\\
&= 	\min_{Q} \Lagr_\mu(Q,\mul_\mu) \label{lagr_min}
.\end{align}
In light of \eqref{lagr_min}, a simple idea to solve the variational Bellman value problem \eqref{minQ} is to find the Q-function that minimizes the Lagrangian at $\mul_\mu$, that is, minimizes $\Lagr_\mu(Q,\mul_\mu)$. Such a Q-function would be, by \eqref{lagr_min}, the counterpart of $\mul_\mu$; the two together form a minimax point of the Lagrangian, and thus this Q-function would be a solution of \eqref{minQ}. 
 
Although the closed form of $\Lagr_\mu(Q,\mul_\mu)$ depends on an optimal policy $\mu$ (which might appear impossible to solve at a first glance, as coming up with such an optimal policy was our original goal of learning), notice that estimating the gradient of $\Lagr_\mu(Q,\mul_\mu)$ only requires sampling data from the optimal policy $\mu$, rather than explicit knowledge about how $\mu$ is constructed.
Specifically, for a parameterized model $Q(s,a;\w)$, we have 

\vspace{-0.2in} \small
\eqm[lagrad]{
&	\nabla_{\w} \Lagr_\mu(Q(\w),\mul_\mu) 
= 	\Exp[\zeta\sim\mu]{ \nabla_{\w} Q(S_T, A_T;\w)} + \\
&	\quad\quad\quad\quad~~~
\E[\zeta\sim\mu][T] \Exp[S,A\sim\rho_\mu]{~ \nabla_{\w} [\Bopt Q - Q] (S,A;\w) ~} 
}
\normalsize \noindent
where $[\Bopt Q-Q](s,a;\w) \define$

\vspace{-0.2in} \small
\eq{
	\Exp[S'\sim P(s,a)]{R(S') + \gammaEPI(S')~ \max_{a'} Q(S',a';\w)} - Q(s,a;\w)
}
\normalsize \noindent
in which the optimal policy $\mu$ only plays a role in determining the data distributions of the terminal state-actions $(S_T,A_T)$, of the termination time $T$, and of the transition variable $(S,A,S')$. 

This observation inspires a general imitation learning approach, named \emph{LAgrangian MINimization} (LAMIN) here, in which we try to collect some demonstration data from an optimal policy,\footnote{
	When only data from a sub-optimal policy is available, the policy (although sub-optimal with respect to the true task performance) \emph{is} optimal to an shaped imitation reward, so that our algorithm implicitly learns for this shaped reward, leading to policies as good as the given policy (in the best case).
} based on which we construct an estimator of the Lagrangian gradient \eqref{lagrad}, then apply standard stochastic gradient procedures to find a local minimum of $\Lagr_\mu(Q(\w), \mul_\mu)$. The obtained $Q(\w)$ is an approximation to the variational solution \eqref{minQ} of the Bellman equation.

\subsection{Practical algorithms}
\label{sec:lamin}

One may design different estimators of \eqref{lagrad} to power the LAMIN optimization. In the following we discuss two (families of) such gradient estimators, as demonstrating examples; they both perform reasonably and similarly well in our validation experiments presented in Section \ref{experiment}.

\textbf{LAMIN1}: A technical challenge of estimating \eqref{lagrad} is to deal with the $\max$ operation in the Bellman operator $\Bopt$. One popular trick is to ``soften'' the max operation with a Boltzmann distribution with small temperature $\beta$:

\vspace{-0.1in} \small
\eq[lamin1]{
[\Bopt Q-Q](s,a;\w) \approx 
\E[S'\sim P(s,a)]~ \Exp[A'\sim \pi^\beta_{\w}(S')]{\delta(s,a,S',A';\w)}
}
\normalsize \noindent
where $\pi^\beta_{\w}(a|s) \define \frac{\exp\big( Q(s, a;\w) / \beta \big)}{\sum_b \exp\big( Q(s, b;\w) / \beta \big)}$~, and $\delta$ is the temporal difference error with $\delta(s,a,s',a';\w) \define R(s') + \gammaEPI(s') Q(s',a';\w)-Q(s,a;\w)$. 


Let $\Lagr_\mu^\beta$ denote the smoothed Lagrangian that uses the right-hand side of \eqref{lamin1}. $\Lagr_\mu^\beta$ is readily differentiable \emph{and} can be arbitrarily close to the exact Lagrangian $\Lagr_\mu$ with $\beta\ra 0$ (e.g. with $\beta=0.01$, $\pi_1/\pi_2 > 20,000$ if $Q_1-Q_2>0.1$). 

The LAMIN1 algorithm simply seeks to minimize the smoothed Lagrangian $\Lagr_\mu^\beta(Q(\w),\mul_\mu)$ with SGD, based on unbiased estimator of $\nabla_{\w} \Lagr_\mu^\beta(Q(\w),\mul_\mu)$ (which can be analytically computed). See pseudo-code in Appendix \ref{sec:lamin1}. 

As with all SGD algorithms with unbiased gradient estimator, LAMIN1 is guaranteed to converge to a local optimum of the smoothed Lagrangian for any differentiable Q-model (subject to standard assumptions~\cite{2016:DL}). Moreover, LAMIN1 can converge to globally \emph{optimal Q-function} in some ``simple'' cases, such as when tabular models are used (even though the smoothed Lagrangian function is \emph{not} convex in that case):
\begin{proposition}
\label{prop:lamin1}
LAMIN1 converges to an optimal Q-function with respect to objective \eqref{objective} if $Q(\w)$ is a tabular model with $Q(s,a;\w) = w_{s,a}$.
\end{proposition}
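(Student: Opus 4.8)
The plan is to reduce the statement to a characterization of the stationary points of the smoothed Lagrangian under the tabular parametrization, and then to show that at \emph{every} such stationary point the greedy policy selects only actions that the optimal conjugate policy $\mu$ uses. Since LAMIN1 runs SGD with an unbiased estimator of $\nabla_{\w}\Lagr_\mu^\beta(Q(\w),\mul_\mu)$, the standard stochastic-approximation guarantee cited above already delivers convergence to a point $\w$ at which $\nabla_{\w}\Lagr_\mu^\beta(Q(\w),\mul_\mu)=0$; with $Q(s,a;\w)=w_{s,a}$ this is exactly a stationary point of the map $Q\mapsto\Lagr_\mu^\beta(Q,\mul_\mu)$ over $\Q$. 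So it suffices to prove that any stationary $Q$ is an optimal Q-function, which simultaneously disposes of the non-convexity flagged in the statement, because we characterize all stationary points rather than a presumed global minimizer.

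First I would rewrite $\Lagr_\mu^\beta(Q,\mul_\mu)$ in the dual form of Lemma~\ref{lem:lagr_dual}. That derivation only manipulates the reward term and the transition structure, through the ELP ergodic theorem and the stationarity of $\rho_\mu$; the $\max_{a'}$ inside $\Bopt$ is never exploited, so replacing it by the Boltzmann average $m^\beta(s)\define\sum_{a'}\pi^\beta_{\w}(a'|s)\,Q(s,a')$ carries through verbatim and yields
\[
\Lagr_\mu^\beta(Q,\mul_\mu) = J(\mu) + \E_\mu[T]\sum_{s\not\in\terminals}\sum_{a}\rho_\mu(s,a)\big(m^\beta(s)-Q(s,a)\big).
\]
The decisive feature is that in this form the values $Q(s,\cdot)$ at a fixed non-terminal $s$ enter only the $s$-block $\rho_\mu(s)\big(m^\beta(s)-\sum_a\mu(a|s)Q(s,a)\big)$, while the $Q$-values on terminal states do not appear at all; hence the stationarity condition decouples across states.

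Next I would compute the per-state gradient. Writing $p_a=\pi^\beta_{\w}(a|s)$ and using the softmax Jacobian gives $\partial m^\beta(s)/\partial Q(s,a)=p_a\big(1+(Q(s,a)-m^\beta(s))/\beta\big)$, so for every $s$ with $\rho_\mu(s)>0$ stationarity forces, for all $a$,
\[
p_a\Big(1+\tfrac1\beta\big(Q(s,a)-m^\beta(s)\big)\Big)=\mu(a|s).
\]
Here is the key observation. For an action with $\mu(a|s)=0$ the left side must vanish, and since $p_a>0$ always, this pins $Q(s,a)=m^\beta(s)-\beta$ exactly, so all such actions share this one value. For an action with $\mu(a|s)>0$ the right side is positive, forcing $Q(s,a)=m^\beta(s)+\beta\big(\mu(a|s)/p_a-1\big)>m^\beta(s)-\beta$. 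Thus on every state $s$ in the support of $\rho_\mu$, each action used by $\mu$ has strictly larger $Q$-value than each action not used by $\mu$, so $\arg\max_a Q(s,a)\subseteq\{a:\mu(a|s)>0\}$ and the greedy policy $\pi_Q$ picks only $\mu$-actions there.

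Finally I would promote this local statement to global optimality. Because $\mu$ is optimal, its support at every reachable state consists of optimal actions (standard for the total-reward criterion and consistent with Theorem~\ref{thm:Qstar}); since $\rho_\mu$ is positive on every state $\mu$ can reach and $\pi_Q$ only ever takes actions in $\mu$'s support, a trajectory of $\pi_Q$ started from $\rho$ remains inside the support of $\rho_\mu$ and takes an optimal action at every step, whence $J(\pi_Q)=J(\mu)=\max_\pi J(\pi)$ and $Q$ is an optimal Q-function. I expect the main obstacle to be precisely this last step: making the reachability induction airtight, in particular arguing that the $Q$-values on states outside the support of $\rho_\mu$, which stationarity leaves entirely unconstrained, never influence $\pi_Q$'s return, and cleanly invoking that the support of an optimal policy contains only optimal actions.
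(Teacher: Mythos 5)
Your proposal is correct, and it reaches the conclusion by a genuinely different route from the paper's. The paper's own treatment of Proposition \ref{prop:lamin1} (Appendix \ref{sec:lamin1}) argues through the value landscape: it observes that the smoothed Lagrangian, unlike the exact one, attains strictly lower values at optimal minimax Q-functions than at sub-optimal ones, and then verifies on a one-state, two-action tabular example that the non-convex smoothed Lagrangian has a unique local minimum, with the general tabular claim asserted rather than derived in full. You instead characterize \emph{all} first-order stationary points of $Q \mapsto \Lagr^\beta_\mu(Q,\mul_\mu)$ under the tabular parametrization, which is both more complete and more robust: it requires no claim about uniqueness of local minima or about SGD avoiding spurious critical points, because every stationary point already induces an optimal greedy policy. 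Your intermediate steps check out. The derivation of Lemma \ref{lem:lagr_dual} uses only that the inner term of $\Bopt Q$ is a function of $s'$ alone, together with stationarity of $\rho_\mu$, so it does survive replacing $\max_{a'}$ by the Boltzmann average $m^\beta$; writing $p_a = \pi^\beta_{\w}(a|s)$ as you do, the softmax Jacobian indeed gives $\partial m^\beta(s)/\partial Q(s,a) = p_a\big(1+(Q(s,a)-m^\beta(s))/\beta\big)$, and the stationarity condition $p_a\big(1+(Q(s,a)-m^\beta(s))/\beta\big)=\mu(a|s)$ pins every $\mu$-unsupported action at exactly $m^\beta(s)-\beta$ and every supported action strictly above it (summing over $a$ yields $1=1$, a reassuring consistency check); terminal Q-values cancel identically in the dual form while terminal transitions are action-agnostic by the ELP conditions; and the closure argument -- the support of $\rho$ lies in that of $\rho_\mu$, and successors of $\mu$-supported pairs remain in the support of $\rho_\mu$ -- is sound by the ELP ergodicity properties (Propositions \ref{prop:irreducible}, \ref{prop:recurrent}, \ref{prop:change_space}). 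Note also that your analysis correctly tolerates the fact that stationary points here are never isolated (the dual form is invariant to adding a per-state constant to $Q(s,\cdot)$), since only the per-state ordering of Q-values matters for the greedy policy.

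The one step you flag -- that the support of the optimal $\mu$ at $\mu$-reachable non-terminal states contains only $Q^*$-optimal actions -- is a genuine but fillable gap, and Appendix \ref{sec:proof_optimality} supplies what is needed. Using $Q^*(s,a)=\E_{S'\sim P(s,a)}\big[R(S')+\gammaEPI(S')\max_{a'}Q^*(S',a')\big]$ and telescoping along a trajectory of $\mu$ from the terminal start $(S_0,A_0)$, with \eqref{terminal_value} and $Q^*=Q_{\pi^*}$ giving $Q^*(S_0,A_0)=J(\pi^*)=\max_\pi J(\pi)$, one obtains
\[
\max_\pi J(\pi) \;=\; J(\mu) \;+\; \Exp[\zeta\sim\mu]{\sum_{t=1}^{T-1}\Big(\max_{a}Q^*(S_t,a)-Q^*(S_t,A_t)\Big)},
\]
where every summand is nonnegative (states $S_t$ with $t<T$ are non-terminal by definition of $T$) and the telescoping is licensed by $\E_{\mu}[T]<\infty$ and boundedness of $Q^*$ in a finite ELP. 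Optimality of $\mu$ forces every summand to vanish almost surely, and since every pair with $\rho_\mu(s)\,\mu(a|s)>0$ is visited with positive probability, each such action is $Q^*$-greedy. The same identity applied to $\pi_Q$ then gives $J(\pi_Q)=\max_\pi J(\pi)$ directly, since your reachability induction guarantees that $\pi_Q$ takes only $Q^*$-greedy actions at every non-terminal state it visits. With this lemma in place your argument is complete, and constitutes a fully general proof of Proposition \ref{prop:lamin1} where the paper offers an illustration plus an assertion.
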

Proposition \ref{prop:lamin1} implies that the $\beta$-smoothing trick in LAMIN1 is more than a heuristic for approximate optimization, but may also serve as a practical \emph{correction} to an inherent bias of the minimax points of the Lagrangian that we will discuss in Section \ref{sec:maximin}. See Appendix \ref{sec:lamin1} for 
more nuanced discussion on LAMIN1 and its optimality property.

\textbf{LAMIN2}: Another common trick is to construct ``local'' gradient estimator for the (original) Lagrangian $\Lagr_{\mu}$, where the estimation is unbiased, in a per-step sense, for ``most'' of the optimization steps. 
Specifically, let $\w_t$ be the parameter vector that is to be updated at SGD step $t$, we can estimate the value of $\nabla_{\w} [\Bopt Q-Q](s,a;\w) ~\big|_{\w=\w_t}$, for this particular parameter $\w_t$, with 

\vspace{-0.2in} \small
\eqm[lamin2']{
&			\nabla_{\w}~ [\Bopt Q-Q](s,a;\w)  ~\big|_{\w=\w_t} \\
\approx&	\E[S'\sim P(s,a)]~ \Exp[A'\sim \pi_{Q(\w_t)}(S')]{\nabla_{\w}~\delta(s,a,S',A';\w)} ~\big|_{\w=\w_t}
}
\normalsize \noindent
where $\pi_{Q(\w_t)}$ is the $Q(\w_t)$-greedy policy. Note that the greedy policy $\pi_{Q(\w_t)}$ in \eqref{lamin2'} does not depend on $\w$ and thus is invariant to $\nabla_{\w}$; in contrast, the Boltzmann policy $\pi^\beta_{\w}$ in LAMIN1 (see \eqref{lamin1}) will be differentiated by $\nabla_{\w}$.

We remark that despite the approximation sign in \eqref{lamin2'}, the LAMIN2 estimator is expected to enjoy \emph{exact} consistency for ``most'' of the time in the SGD process. Specifically, 
\begin{proposition}
\label{prop_lagrad}
\eqref{lamin2'} becomes exact equality if for the given Q-function parameter $\w_t$, $Q(\w_t)$ suggests a unique best action $a_{\max}(s;\w_t) \define \arg\max_{a\in\AS}~ Q(s,a;\w_t)$ for state $s$.
\end{proposition}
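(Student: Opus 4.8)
The plan is to recognize the claim as an instance of the envelope theorem (Danskin's theorem) for the maximum of finitely many smooth functions. I would first expand the left-hand side of \eqref{lamin2'} by linearity of the gradient and the expectation. Unpacking $[\Bopt Q - Q](s,a;\w)$, the $-Q(s,a;\w)$ term contributes $-\nabla_{\w} Q(s,a;\w)$ and passes through untouched, so the only nontrivial piece is $\E_{S'\sim P(s,a)}\big[\gammaEPI(S')\,\nabla_{\w}\max_{a'} Q(S',a';\w)\big]$ evaluated at $\w=\w_t$. The whole proposition thus reduces to showing that, at $\w_t$, the gradient of the finite maximum $\max_{a'} Q(s',a';\w)$ equals the gradient of the single branch attaining it.

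The key step is a local-constancy argument for the argmax. Fix a successor state $s'$ at which $a_{\max}(s';\w_t)$ is the \emph{unique} maximizer, i.e.\ $Q(s',a_{\max}(s';\w_t);\w_t) > Q(s',a';\w_t)$ for all other $a'$. Since $\AS$ is finite and each $Q(s',a';\cdot)$ is continuous in $\w$, there is a neighborhood of $\w_t$ on which $a_{\max}(s';\w_t)$ stays the strict maximizer; on that neighborhood $\max_{a'} Q(s',a';\w) = Q(s',a_{\max}(s';\w_t);\w)$ as an identity of functions, hence
\[
\nabla_{\w}\max_{a'} Q(s',a';\w)\big|_{\w=\w_t} = \nabla_{\w} Q\big(s',a_{\max}(s';\w_t);\w\big)\big|_{\w=\w_t}.
\]
Substituting this back under the expectation over $S'$ reproduces the right-hand side of \eqref{lamin2'}: by the definition of the greedy policy, $\pi_{Q(\w_t)}(\cdot|S')$ concentrates all mass on $a_{\max}(S';\w_t)$ whenever that action is unique, so the inner expectation $\E_{A'\sim\pi_{Q(\w_t)}(S')}\big[\nabla_{\w}\delta(s,a,S',A';\w)\big]$ collapses at $\w_t$ to $\gammaEPI(S')\,\nabla_{\w} Q(S',a_{\max}(S';\w_t);\w) - \nabla_{\w} Q(s,a;\w)$, matching the expanded left-hand side term by term.

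The main subtlety is bookkeeping rather than depth: the maximum inside $\Bopt$ is taken at the \emph{successor} states $S'$ drawn from $P(s,a)$, not at $s$ itself, so the uniqueness hypothesis must be read as holding at every successor state on the support of the transition out of $(s,a)$ -- this is what ``for state $s$'' abbreviates once $\Bopt Q$ is unpacked. The only genuine analytic point is the differentiability of a pointwise maximum, which can fail exactly at parameters where two actions tie for the top Q-value; the uniqueness assumption is precisely the condition ruling out such ties, placing $\w_t$ in the interior of a region where the argmax is locally constant and thereby turning the envelope relation into an exact equality rather than a one-sided subgradient bound.
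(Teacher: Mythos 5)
Your proof is correct and takes essentially the same route as the paper's (which reduces \eqref{lamin2'} to Proposition \ref{prop_lagrad2} in Appendix \ref{sec:lamin2}): finiteness of $\AS$ plus the strict gap between the best and second-best Q-value makes the argmax locally constant around $\w_t$, so the gradient of the pointwise maximum equals the gradient of the maximizing branch --- the paper establishes this component-wise via difference quotients, while you identify $\max_{a'} Q(s',a';\w)$ with the fixed branch on a neighborhood, which is the same envelope argument in marginally cleaner form. Your bookkeeping remark that the uniqueness hypothesis must be read at every successor state $S'$ in the support of $P(\cdot|s,a)$ is also exactly how the paper's reduction implicitly interprets the statement, so if anything you have made that step more explicit than the original.
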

See the proof in Appendix \ref{sec:lamin2}. In practice, the condition in Proposition \ref{prop_lagrad} -- i.e. that $Q(\w_t)$ gives unique best actions -- shall be the normal case for large-scale and continuously-valued models, as it should be rare that the real-numbered Q-values of two actions would happen to be \emph{identical} under a model parameter that is being stochastically optimized.\footnote{The same argument applies to all ReLU neural networks too.} 

Finally, the local gradient estimator \eqref{lamin2'} can be further combined with the $\beta$-smoothing trick, that is, replacing the greedy policy $\pi_{Q(\w_t)}$ in \eqref{lamin2'} with the Boltzmann policy $\pi^\beta_{Q(\w_t)}$. See the resulted pseudo-code in Appendix \ref{sec:lamin2}. Empirically, modestly higher temperatures help slightly improve the performance of LAMIN2 in our experiments.

\begin{figure}[t]
	\centering
   	\includegraphics[width=1.0\linewidth]{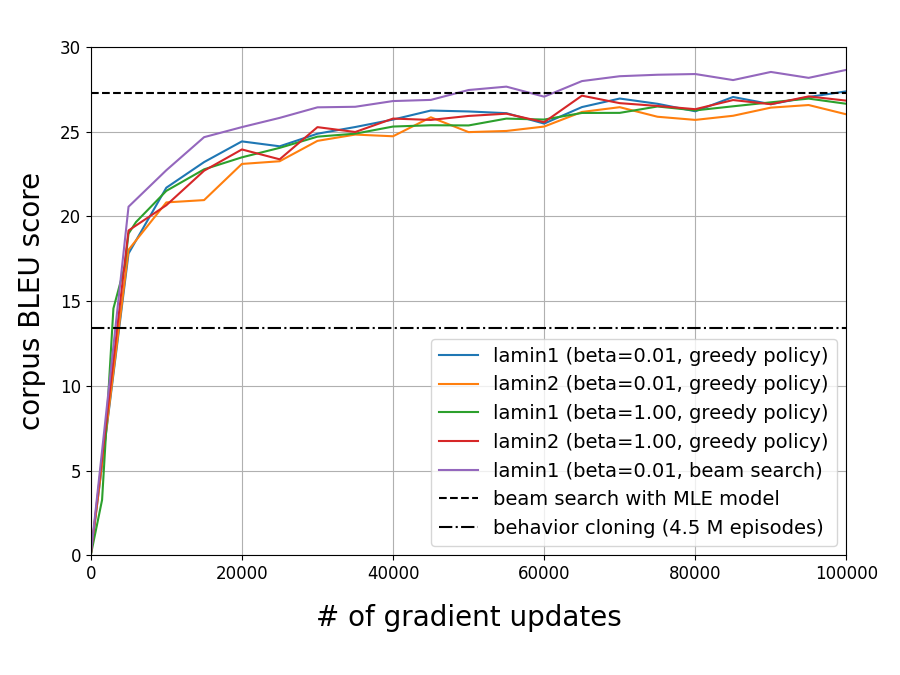}
    \caption{Learning curves of LAMIN. BLEU scores are measured on held-out test set. The dashed line indicates the standard baseline performance of $27.3$, achieved by \citet{2017:transformer} with a beam-search policy (beam size =$4$) using standard supervised learning. The purple line runs the same beam-search policy but with model trained by LAMIN1. Other lines use greedy policy, which is about 2x faster than search. The dash-dot line indicates the performance of behavior cloning, which only achieves $13.4$.}
	\label{fig:result}
\end{figure}

\subsection{Applications to Machine Translation}
\label{experiment}
As a case study, we now apply LAMIN algorithms to Machine Translation (MT), which is an application with significant real-world impact~\cite{2016:gnmt}, and is also an excellent example of episodic learning problem: A translation episode starts with a given sentence of a source language, and the agent takes actions to generate translation tokens one by one sequentially. The episode terminates when the agent outputs a special end-of-sentence (EOS) token, at which point the translation quality is evaluated. 
See Appendix \ref{sec:elp_mt} for the complete ELP formulation of machine translation.

Given a source sentence $X$, most MT metrics measure the quality of a generated translation $Y$ by comparing the similarity between $Y$ and some \emph{reference translation} $Z$ of $X$, where $Z$ is provided by human expert. It follows that the policy used by the human expert (which maps $X$ to $Z$) must be an optimal policy under such metric. A trajectory of this optimal policy, in the form of a sequence of source-reference sentence pairs, is indeed widely available in standard MT benchmarks, which can be readily used to power LAMIN algorithms. In general, the same idea applies to all machine learning problems where the performance metric is a similarity evaluation against reference/groundtruth outputs.

We tested our algorithms using the WMT 2014 English$\ra$German dataset, one of the most influential MT benchmarks. We parameterized value function $Q$ with the standard TransformerBase model~\cite{2017:transformer}, which contains about 65 million model parameters. The action space consists of 37000 \emph{unstructured} actions in each decision step (corresponding to 37000 tokens in the vocabulary of the target language). The training data consists of 4.5 million translation episodes made by human translators in European Parliament and multi-lingual news websites~\cite{2014:wmt}. 
We trained the Transformer model using LAMIN1 and LAMIN2, with varying temperature $\beta$, then tested the Q-greedy policy with the standard BLEU metric. See Appendix \ref{sec:exp_setup} for more details on experiment setup.

\begin{figure}[t]
	\centering
	\includegraphics[width=1.0\linewidth]{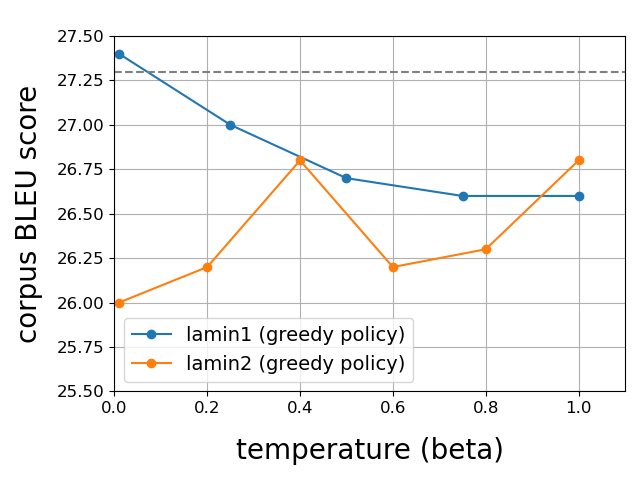}
	\caption{Performance of LAMIN1 and LAMIN2 under different temperature $\beta$. 
Greedy policy is used in all cases. 
}
	\label{fig:temperature}
\end{figure}


Figure \ref{fig:result} shows the learning curves of LAMIN1 and LAMIN2, under temperature $0.01$ (as a close approximation of the exact Lagrangian function) and $1.0$ (corresponding to the softmax distribution). We see that all the variants demonstrate similar learning curves. Notably, LAMIN1 with $\beta=0.01$, which is slightly better among the variants, attains a BLEU score of $27.4$ \emph{with greedy policy}. In comparison, with standard supervised learning, the same model famously attains $27.3$ on the same data set \emph{only if} further combined with a systematic beam search over the solution space~\cite{2017:transformer}. On the other hand, when also incorporating with the same beam search procedure, the Q-function trained by LAMIN1 (with $\beta=0.01$) attains $28.7$, which is $1.4$ BLEU (=$5\%$) higher. 

Notably, policy-based learning algorithms do not seem to work well in this task. As Figure \ref{fig:result} shows, behavior cloning, as a popular imitation learning baseline that mimics the expert policy via cross-entropy loss then samples the actions by the learned policy~\cite{2016:gail}, only achieves 13.4 BLEU (even being fed with 4.5 million demonstrations, a data size that is significantly larger than the typical ones used in many imitation learning research~\cite{2021:iq}). On the other hand, policy-based RL algorithms such as policy gradient are known to have difficulties in getting effective learning on this benchmark~\cite{2020:chosen}.

Finally, Figure \ref{fig:temperature} illustrates how the smoothing temperature $\beta$ affects the performance of LAMIN1 and LAMIN2. Table \ref{tab:result_lamin1} and \ref{tab:result_lamin2} in Appendix \ref{sec:exp_setup} reports the numerical scores. We see that LAMIN1 has slightly better performance than LAMIN2 under most temperatures. Interesting, higher temperature seems to help the performance of LAMIN2 but hurt that of LAMIN1 (albeit with limited margins in both cases).

\section{Minimax Points vs Maximin Points}
\label{sec:maximin}

\begin{figure}[t]
\centering
\includegraphics[width=0.6\linewidth]{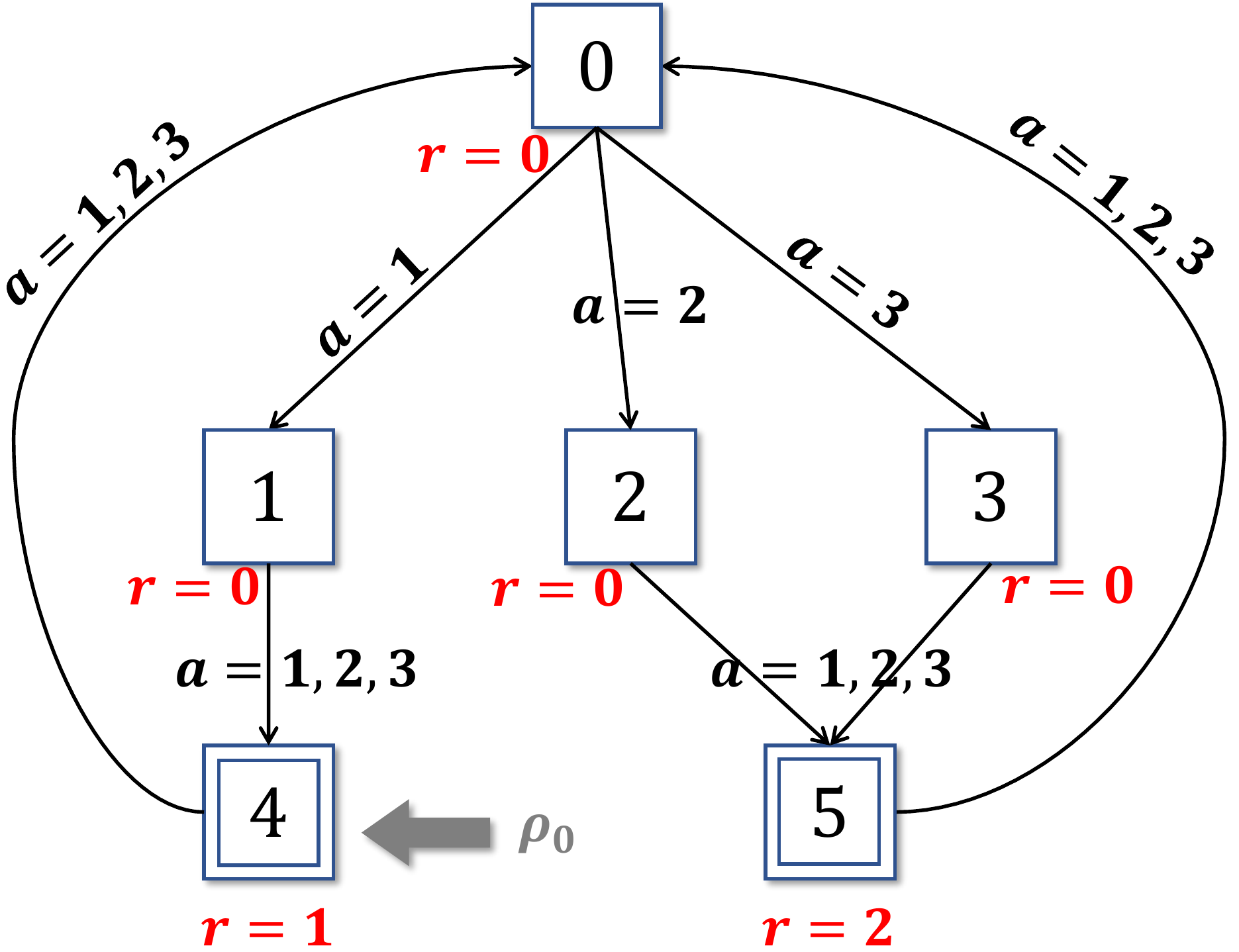}
\caption{An example for the suboptimality of minimax Q-functions. State $4$ and $5$ are terminal states, from which any action leads to state $0$. Choosing action $1,2,3$ under state $0$ deterministically transits to state $1,2,3$, respectively. 
The agent only receives non-zero rewards at terminal states, with $R(4)=1$, $R(5)=2$. 
}
\label{fig:qexample}
\end{figure}

Our exploration of the Lagrangian method so far has been focusing on a particular class of Lagrangian saddle points, the minimax points, i.e. $\arg\min_Q\max_{\mul}\Lagr_\pi$. This follows a long tradition in the literature, dating back at least to \cite{1994:puterman} but perhaps to \cite{1960:LP_DP, 1963:LP_DP}, where the ``habit'' is to formulate the variational problem of Bellman equation into the minimization form, or the \emph{primal-form}~\cite{2017:LP_RL,2018:LP_RL,2020:fenchel_duality}, whose optimal solutions correspond to the minimax points of the Lagrangian. Let us call such Q-functions, \textbf{minimax Q-functions}. 

Indeed, the Bellman value $Q^*$, as an optimal Q-function, is a minimax Q-function (Lemma \ref{lem:Qstar_minimax}), and approximate solutions to find minimax Q-functions appear to empirically perform well, as our results in Section \ref{experiment} demonstrated. However, a minimax Q-function is not necessarily an optimal Q-function. 
Figure \ref{fig:qexample} gives an example. Clearly, an optimal policy in this ELP should only choose action $2$ or $3$ (or both), but not action $1$, under state $0$. On the other hand, one can verify that for this ELP, the constant Q-function $Q(s,a)\equiv 2$ is a minimax Q-function,  
which is certainly not optimal as it assigns the same Q-value to all actions under state $0$. Moreover, the multiplier $\mul$ that counterparts with the constant Q-function (which together form a minimax point) needs to have $\mul(1,a)=0$ for all $a$. 
Such a $\mul$ cannot encode a ``dual policy'' at all. See \ref{sec:example} for more details.  

In general, the problem with minimax Q-functions is that they only guarantee optimality for optimal actions, but do not enforce sub-optimality for sub-optimal actions (such as the action $1$ under state $0$ in Figure \ref{fig:qexample}). 
We remark that this is not limited to $Q$-functions or episodic/finite-time settings, but is a general issue rooted from the primal-form variational formulation. For example, the minimax solutions of $V$-form Lagrangian in Discounted-MDPs, as studied in previous works~\cite{2017:LP_RL,2018:LP_RL}, suffer from the same issue too (see Appendix \ref{sec:example} for a counter-example). 
This deficiency of minimax Q-functions might explain our empirical observation in the last section that adding a small temperature in the Bellman operator of the Lagrangian helps with the learning (because the Boltzmann averaging allows, to some extent, the optimizer to downgrade sub-optimal actions to further lower the Lagrangian).

Interestingly enough, it turns out that another class of Lagrangian saddle points -- the \emph{maximin points} -- can indeed guarantee the optimality of the corresponding Q-functions. Specifically, consider the following ``mirrored'' variational problem of the Bellman equation:
\eqm[maxQ]{
\max_{Q} \quad & 	\Exp[\zeta\sim \pi]{Q(S_T,A_T)}  \\
\text{s.t.} \quad &	Q(s,a) \leq \Bopt Q(s,a)
\quad,\quad \forall (s,a)
}
Comparing with the primal-form variational problem \eqref{minQ}, which asks to find a ``minimal'' Q from a set of ``large'' Q's (in the sense that $Q\geq \Bopt Q$), the dual-form variational problem \eqref{maxQ} asks to find a ``maximal'' Q from a set of ``small'' Q's (in the sense that $Q\leq \Bopt Q$). It is easy to see that the Bellman value $Q^*$ is still an optimal solution of the dual-form problem, as the following lemma confirms (see the proof in Appendix \ref{sec:proof_Qstar_maximin}).

\begin{figure}[t]
\centering
\includegraphics[width=0.9\linewidth]{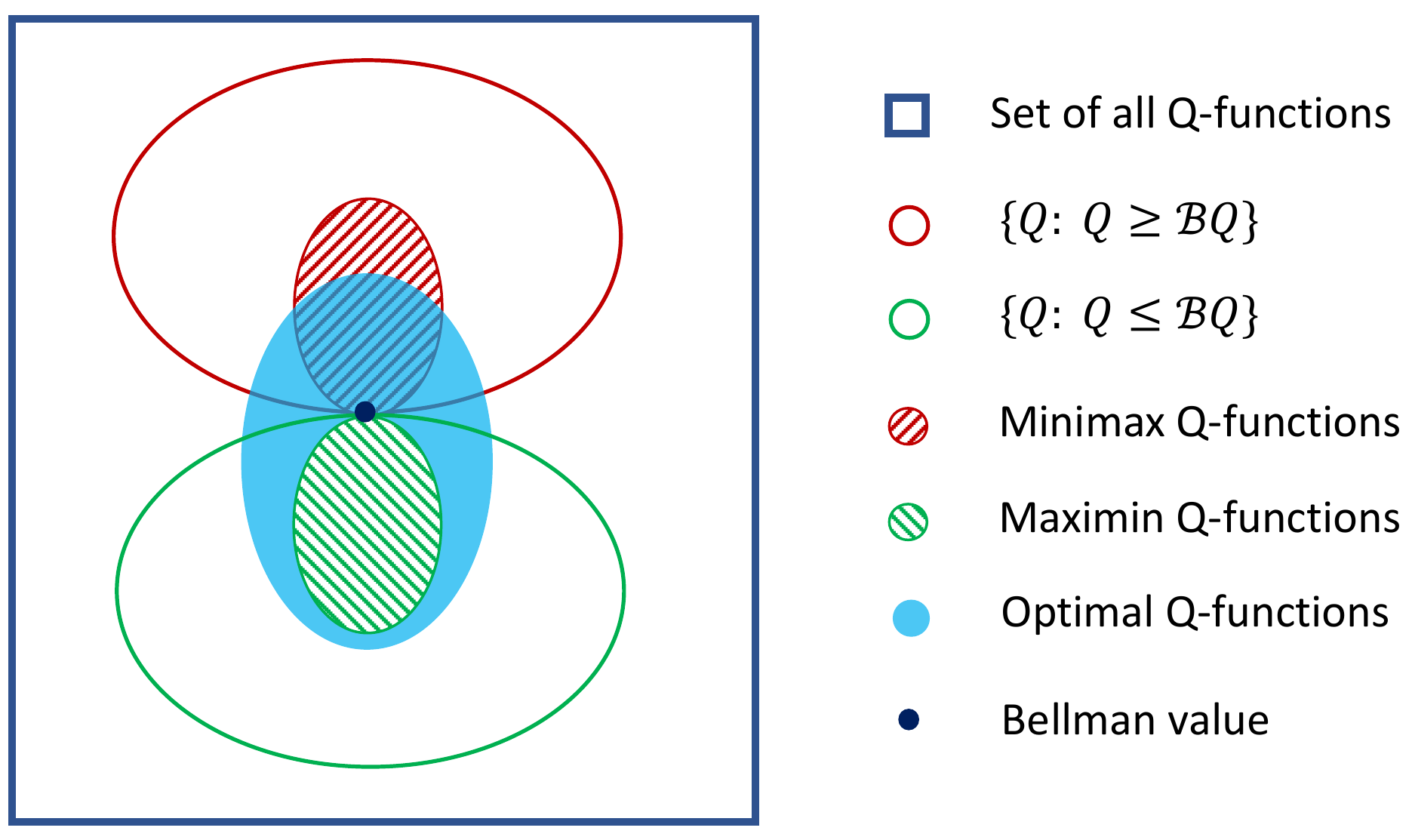}
\caption{A landscape of the Q-space, which combined conclusions from Lemma \ref{lem:Qstar_minimax}, Lemma \ref{lem:Qstar_maximin}, Figure \ref{fig:qexample}, and Theorem \ref{thm:Qopt}. 
}
\label{fig:qspace}
\end{figure}

\addtocounter{theorem}{1}
\begin{lemma}
\label{lem:Qstar_maximin}
In any finite ELP, for any conjugate policy $\pi$, $Q^*$ is an optimal solution of problem \eqref{maxQ}, or equivalently,
\eq{
	Q^* \in \arg~ \max_{Q\in\Q} \min_{\mul\geq 0}~ \Lagr_\pi(Q, \mul)
.}
\end{lemma}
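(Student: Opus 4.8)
The plan is to first reduce the maximin expression to the constrained problem \eqref{maxQ}, and then exhibit $Q^*$ as the pointwise-largest feasible point. For the reduction, I would examine the inner minimization for fixed $Q$. Since $\Lagr_\pi(Q,\mul)=\Exp[\zeta\sim\pi]{Q(S_T,A_T)}+\sum_{s,a}\lambda(s,a)\big(\Bopt Q(s,a)-Q(s,a)\big)$ is affine in $\mul$ over the non-negative orthant, the inner minimum is $-\infty$ whenever some constraint $\Bopt Q(s,a)-Q(s,a)\geq0$ is violated (drive the offending $\lambda(s,a)\to\infty$) and equals the first term $\Exp[\zeta\sim\pi]{Q(S_T,A_T)}$ otherwise (take $\mul=0$). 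Hence $\max_Q\min_{\mul\geq0}\Lagr_\pi(Q,\mul)$ agrees with \eqref{maxQ}, which settles the ``or equivalently'' clause and reduces the lemma to showing that $Q^*$ solves \eqref{maxQ}.

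Next I would establish feasibility together with the central structural fact that every feasible $Q$ is dominated by $Q^*$. Feasibility of $Q^*$ is immediate: $Q^*=\Bopt Q^*$ makes $Q^*(s,a)\leq\Bopt Q^*(s,a)$ hold with equality. For the domination, I would first record that $\Bopt$ is monotone — if $Q_1\leq Q_2$ pointwise then $\max_{a'}Q_1(s',a')\leq\max_{a'}Q_2(s',a')$, and because the coefficients $\T(s'|s,a)$ and $\indicator{s'\notin\terminals}$ in \eqref{boe} are non-negative, $\Bopt Q_1\leq\Bopt Q_2$. Now take any feasible $Q$, so $Q\leq\Bopt Q$; applying $\Bopt$ repeatedly gives the non-decreasing chain $Q\leq\Bopt Q\leq\Bopt^2 Q\leq\cdots$. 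By Theorem \ref{thm:Qstar}(2) this chain converges to $Q^*$, and since each coordinate is non-decreasing along the chain, its limit dominates every element, yielding $Q(s,a)\leq Q^*(s,a)$ for all $(s,a)$.

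Finally I would convert pointwise domination into the objective inequality: since $Q\leq Q^*$ holds in particular at every terminal state-action pair, $Q(S_T,A_T)\leq Q^*(S_T,A_T)$ almost surely under any conjugate policy $\pi$, so $\Exp[\zeta\sim\pi]{Q(S_T,A_T)}\leq\Exp[\zeta\sim\pi]{Q^*(S_T,A_T)}$. Thus $Q^*$ attains the largest objective value over the feasible set and, being feasible itself, solves \eqref{maxQ}. I expect the only real obstacle to be pinning down the limit of the monotone iteration: monotonicity alone produces an increasing, bounded sequence, but identifying its limit as $Q^*$ rather than as some other function requires the global convergence guarantee of Theorem \ref{thm:Qstar}(2), which applies here precisely because the ELP is finite. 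The Lagrangian reduction and the expectation step are then routine.
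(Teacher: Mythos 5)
Your proposal is correct and follows essentially the same route as the paper: the paper's proof is exactly the symmetric monotonicity argument, namely that any feasible $Q$ with $Q\leq\Bopt Q$ satisfies $Q\leq\Bopt Q\leq(\Bopt)^2Q\leq\cdots\leq Q^*$, so $Q^*$ maximizes the terminal-state objective per-state-action, with the Lagrangian/constrained-problem equivalence handled by the standard duality argument already spelled out for Lemma \ref{lem:Qstar_minimax}. Your write-up merely makes explicit two points the paper leaves implicit — the reduction of $\max_Q\min_{\mul\geq0}\Lagr_\pi$ to \eqref{maxQ}, and the appeal to Theorem \ref{thm:Qstar}(2) to identify the limit of the monotone chain as $Q^*$ — both of which are correct and in the spirit of the paper's argument.
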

\addtocounter{theorem}{-1}


We call an optimal solution of \eqref{maxQ}, a \textbf{maximin Q-function}. Note that minimax and maximin Q-functions, as defined by \eqref{minQ} and \eqref{maxQ} (resp.), correspond to different types of saddle points of the \emph{same} function $\Lagr_\pi$. But different from minimax Q-functions, it turns out that a maximin Q-function always enforces sub-optimality for all (truly) sub-optimal actions \emph{and} at the same time can guarantee optimality for at least one (truly) optimal action. As a result, a maximin Q-function \emph{always and only} induces optimal policy.
\begin{theorem}
\label{thm:Qopt}
In any finite ELP, for any conjugate policy $\pi$, let $\Qmax$ be a maximin Q-function with respect to the Lagrangian $\Lagr_\pi$, then $\Qmax$ is an optimal Q-function.
\end{theorem}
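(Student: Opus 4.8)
The plan is to show that any maximin Q-function $\Qmax$ induces an optimal greedy policy, by combining a global domination bound $\Qmax \le Q^*$ with an equality that I first anchor at the initial states and then propagate inward along the greedy trajectories. First I would establish that feasibility of \eqref{maxQ} forces $\Qmax \le Q^*$ pointwise. The operator $\Bopt$ is monotone (the inner $\max_{a'}$, the nonnegative transition weights $\T(s'|s,a)$, and the additive reward all preserve pointwise order), so iterating the constraint gives $\Qmax \le \Bopt \Qmax \le \Bopt^2 \Qmax \le \cdots$, and by Theorem \ref{thm:Qstar}(2) the right-hand side converges to the fixed point $Q^*$; hence $\Qmax \le Q^*$. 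Writing $W(s) \define \max_a \Qmax(s,a)$ and $V^*(s) \define \max_a Q^*(s,a)$, this yields $W \le V^*$ everywhere.

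Next I would pin down the optimal objective value and extract equality at the initial states. By Lemma \ref{lem:Qstar_maximin}, $Q^*$ is itself an optimal solution of \eqref{maxQ}; since every terminal state of an ELP resets to $\rho$ regardless of the action, $\Bopt Q$ takes a single constant value over all terminal state-action pairs, so $Q^*$ is constant there and equals $\Exp[S_1\sim\rho]{R(S_1) + \gammaEPI(S_1) V^*(S_1)}$, which is therefore the optimal value of \eqref{maxQ}. The feasibility constraint at terminal states similarly bounds each terminal value of $\Qmax$ by the constant $\Exp[S_1\sim\rho]{R(S_1)+\gammaEPI(S_1)W(S_1)}$; since $\Qmax$ attains the optimal objective and $W \le V^*$, these two constants must coincide, forcing $W(s) = V^*(s)$ at every non-terminal initial state $s \in \mathrm{supp}(\rho)$.

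The heart of the argument is then to transport $W = V^*$ from the initial states to every state reached by the greedy policy $\nu \define \pi_{\Qmax}$. At a non-terminal $s$ with $W(s) = V^*(s)$ and any greedy action $a$ (so $\Qmax(s,a) = W(s)$), I would chain the feasibility bound with $W \le V^*$:
\[
V^*(s) = \Qmax(s,a) \le \Bopt \Qmax(s,a) \le Q^*(s,a) \le V^*(s),
\]
where the middle inequality uses $\Bopt \Qmax(s,a) = \sum_{s'}\T(s'|s,a)\big(R(s')+\gammaEPI(s')W(s')\big) \le \sum_{s'}\T(s'|s,a)\big(R(s')+\gammaEPI(s')V^*(s')\big) = Q^*(s,a)$. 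The collapse to equalities yields two facts: $Q^*(s,a) = V^*(s)$, so the greedy action $a$ is $Q^*$-optimal; and $\sum_{s'}\T(s'|s,a)\gammaEPI(s')\big(V^*(s')-W(s')\big) = 0$, so $W(s') = V^*(s')$ at every non-terminal successor $s'$ with $\T(s'|s,a)>0$. Inducting along $\nu$'s trajectories from $\rho$, every reachable state satisfies $W = V^*$ and every greedy action is $Q^*$-optimal, so $\nu$ selects $Q^*$-maximizing actions everywhere it goes and is therefore optimal, i.e. $\Qmax$ is an optimal Q-function attaining $J(\mu)$.

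The hard part will be this final propagation step. The objective of \eqref{maxQ} only directly constrains $\Qmax$ at terminal (equivalently, via the reset, initial) states, so the global domination $\Qmax \le Q^*$ by itself certifies nothing about decision quality at interior states; the equality must be carried inward. The key observation that makes the induction close is that the feasibility inequality is exactly tight enough to push $W = V^*$ across one transition at a time along greedy actions. Secondary technical points — the monotonicity and limiting behavior of $\Bopt$ used in the first step, and the standard fact that a policy selecting $Q^*$-maximizing actions at all reachable states attains $J(\mu)$ — both follow from the ELP graph structure underlying Theorem \ref{thm:Qstar}.
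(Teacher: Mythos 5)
Your proof is correct and takes essentially the same route as the paper's: the global domination $\Qmax \le Q^*$ from monotonicity of $\Bopt$, then an induction along greedy trajectories whose engine is the collapse of the chain $\Qmax(s,a) \le \Bopt\Qmax(s,a) \le \Bopt Q^*(s,a) = Q^*(s,a)$, which simultaneously certifies the greedy action as $Q^*$-optimal and pushes the max-equality to all non-terminal successors --- this is exactly the paper's induction rule, Proposition \ref{prop:Qmax_induction}. The only (correct) cosmetic difference is the anchor: the paper extracts equality of $\Qmax$ and $Q^*$ at a terminal state--action pair from the optimality of both solutions in \eqref{maxQ} and then applies the induction rule once from that terminal pair (whose transition is $\rho$), whereas you sandwich the two terminal constants $c_W \le c^*$ to land the max-equality directly on the non-terminal states in the support of $\rho$ --- the same first step, unrolled, and carrying the single invariant $\max_a \Qmax = \max_a Q^*$ rather than the paper's two-part one.
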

\begin{proofidea}
Because $\Qmax$ is a feasible solution of \eqref{maxQ}, we have $\Qmax \leq \Bopt \Qmax \leq \Bopt\Bopt \Qmax \dots \leq Q^*$, so $\Qmax \leq Q^*$, which implies that a sub-optimal action for $Q^*$ (under a state) can only have even lower Q-values for $\Qmax$. Therefore, it is enough to prove that $\max_a \Qmax(s,a) = \max_a Q^*(s,a)$ at every non-terminal state $s$ \emph{that is reachable by a $\Qmax$-greedy policy} -- in that case under any non-terminal state that the $\Qmax$-greedy policy may encounter, $\Qmax$ is giving the same Q-values with $Q^*$ for optimal actions \emph{and} is giving even lower Q-values for sub-optimal actions, so the induced greedy policy must always choose the optimal actions. Note that $\Qmax$'s values on terminal states do not affect its footprint, due to the ELP conditions.

The value equality at (reachable) optimal actions can be proved by induction, starting from the terminal states for which the equality of $\Qmax$ and $Q^*$ is guaranteed by the objective function of \eqref{maxQ}. Then to enable a chain of induction, we will need to prove Proposition \ref{prop:Qmax_induction} in Appendix \ref{sec:proof_Qopt}, and use it as induction rule. See \ref{sec:proof_Qopt} for the complete proof. 
\end{proofidea}

Theorem \ref{thm:Qopt} reveals an interesting symmetry breaking in the Lagrangian method for Q-function learning: From the same Bellman equation, there are two symmetric ways to form the variational problems, both leading to the same Lagrangian function, yet only one gives optimal Q-functions, the other does not (necessarily). Figure \ref{fig:qspace} gives a diagrammatic summary of this phenomenon. The asymmetry appears to be rooted from the asymmetric Bellman operator $\Bopt$ which is only taking max, not min, over the Q-values. The maximin Q-functions have been largely overlooked in existing literature, but in light of their better optimality property as shown in this paper, we believe that the maximin Q-functions, as well as the corresponding \emph{dual-form} variational formulation \eqref{maxQ}, deserve more attentions from the community in future.

\section{Related works}
\label{sec:discussion}

A main challenge of this work comes from the non-linearity in both the Bellman optimality equation and the associated $Q$-form Lagrangian being studied. In contrast, most related research focused on linear treatments to related objects. 
For a linear \emph{policy-specific} Bellman operator $\B_\pi$ (which replaces $\max_a$ with a linear average), \citet{2017:unifying} proved that $\B_\pi$ has unique fixed point in episodic learning setting. The linear operator $\B_\pi$ leads to a LP re-formulation, based on which the ``DICE'' family of policy evaluation algorithms were developed~\cite{2019:dualDICE, 2020:bestDICE}. On the policy optimization side, an active thread of research used saddle-point optimization to solve the linear $V$-form Lagrangian, again relying on the generic LP duality inherited from the linear treatment~\cite{2016:chen_wang, 2017:LP_DP, 2017:LP_RL, 2018:LP_RL, 2018:chen_wang, 2019:LP_DP, 2003:LP_DP}. The underlying techniques in these linear settings are not directly applicable to the nonlinear problems studied in this work.


A popular Inverse Reinforcement Learning (IRL) framework of imitation learning also uses minimax saddle-point optimization to learn optimal policy and value functions~\cite{2016:gail}. \cite{2021:iq} developed a Q-function learning algorithm based on the IRL framework, which bears some similarity with the LAMIN algorithm developed in our paper. We however note that the rationale behind the two algorithms are completely different, and the Lagrangian method we discussed is not limited to the IRL problem.

The WMT machine translation benchmark used in this paper is a fruitful driver behind the rapid technical advances in Neural Machine Translation recent years~\cite{2016:gnmt,2017:six_challenges,2017:transformer}. MDP-based techniques have been actively studied as a promising method for this problem~\cite{2016:ranzato,2018:edunov,2017:bahdanau,2018:wu}, but with relatively limited empirical gain observed so far~\cite{2020:chosen}. To our best knowledge, the LAMIN algorithm is one of the first MDP-based solutions that is able to train Transformer-scale neural networks \emph{independently} (without the aid of other major techniques, such as supervised pretraining or ensemble learning) to attain competitive performance on the WMT benchmark. 

The disparity between the canonical discounted-reward formulation and common learning practice is a well recognized issue in reinforcement learning. The RL textbook \cite{2018:RL} devoted its Section 10.4 to the issue of deprecating the discounted formalism. The DP textbook \cite{1996:NDP} subsumed the discounted setting as a special case of an finite-termination setting. A special case of Theorem \ref{thm:Qstar} dedicated to episodic discounting was proved by \cite{1991:ssp}.

\section*{Acknowledgments}
The author of the paper would like to thank Fei Yuan and Longtu Zhang for inspiring discussions on this work, as well as for their help in preparing the WMT experiment.

\bibliographystyle{plainnat}
\bibliography{rlnmt,rl,elp}

\newpage
\appendix
\onecolumn

\newcommand{\tildeE}[1][]{\tilde{\mathbf{E}}_{#1}}
\newcommand{\tildeExp}[2][]{\mathbf{E}_{#1} \Big[ #2 \Big]}
\newcommand{\tildePr}[2][]{\mathbf{P}_{#1} [#2]}

\section{On the MDP Formulation}
\label{sec:more_about_mdp}
In our problem formulation, the rewards are conditioned on states, not explicitly on actions. However, the action dependency is implicitly captured by the action-conditioned state transition. In fact, given an MDP $M$ with action-conditioned state-reward transition function $P(s',r'|s,a)$ (from which all kinds of action-based rewards can be defined\cite{2018:RL}), we can construct an MDP $\tilde{M}$ with states $\tilde{s} \define (s,r)$, state(-only) transition function $\tilde{P}(\tilde{s}'|\tilde{s},a) = \tilde{P}\big( (s',r')|(s,r),a \big) \define P(s',r'|s,a)$, and state-based reward function $\tilde{R}(\tilde{s}) = \tilde{R}\big( s,r \big) \define r$. It is clear that $M$ and $\tilde{M}$ are encoding the same probability space of state-action-reward trajectories. The choice of state-based reward formulation is mostly for mathematical convenience. 

Similarly, although the objective function assumed in this paper does not explicitly encompass temporal discounting, the canonical exponential reward discounting -- if indeed required as a formulation of something real -- can be equivalently captured as a discounting implicitly from stochastic termination, as many works have pointed out (e.g. see \cite{2011:horde,2000:DPOC}). In the following we nevertheless provide a self-contained argument regarding the generality of our objective formulation.

Given a discounted-MDP $M$ that never terminates (i.e. $\terminals=\emptyset$) and that uses the discounted-reward objective $\E[\pi][\sum_{t=1}^{\infty} R(S_t) \cdot \gamma^{t-1}]$, we can construct an MDP $\tilde{M}$ by adding a zero-reward terminal state $s_\bot$, with $R(s_\bot)=0$, and modifying the transition function so that every state in $M$ transits to $s_\bot$ with probability $1-\gamma$ under any action. The modified MDP $\tilde{M}$ has finite average termination time $\E[\pi][\tilde{T}] = \sum_{l=2}^{\infty} l \cdot \gamma^{l-2}~(1-\gamma) = \frac{\gamma}{1-\gamma} + 2$, thus $\tilde{M}$ is a finite-time MDP. Moreover, the total-reward objective \eqref{objective} of $\tilde{M}$ equals the discounted-reward objective of $M$:
\eqm[total_to_discounted]{
 	\tildeExp[\pi]{\sum_{t=1}^T R(\tilde{S}_t)}
=& 	\sum_{l=1}^\infty \Big(~ 
		\tildeExp[\pi]{\sum_{t=1}^{l} R(\tilde{S}_t)|\tilde{T}=l} \cdot \P_\pi[\tilde{T}=l] 
	~\Big) \\
=& 	\sum_{l=2}^\infty \Big(~ 
		\underline{
			\tildeExp[\pi]{\sum_{t=1}^{l-1} R(\tilde{S}_t)|\tilde{T}=l}
		} \cdot \P_\pi[\tilde{T}=l] 
	~\Big) \\
=& 	\sum_{l=2}^\infty \Big(~ 
		\underline{
			\tildeExp[\pi]{\sum_{t=1}^{l-1} R(S_t)}
		} \cdot \P_\pi[\tilde{T}=l] 
	~\Big) \\
=& 	\sum_{l=2}^\infty \Big(~ 
		\tildeExp[\pi]{\sum_{t=1}^{l-1} R(S_t)} \cdot \gamma^{l-2}~(1-\gamma) 
	~\Big) \\
=& 	\tildeExp[\pi]{\sum_{m=1}^\infty\sum_{t=1}^m R(S_t) \cdot \gamma^{m-1}~(1-\gamma) } \\
=&	\tildeExp[\pi]{\sum_{t=1}^\infty \Big( \sum_{m=t}^\infty \gamma^{m-1}~(1-\gamma) \Big)\cdot R(S_t)} \\
=&	\tildeExp[\pi]{\sum_{t=1}^\infty \gamma^{t-1} \cdot R(S_t)}
}
Therefore, a policy that maximizes the total reward in the finite-time MDP $\tilde{M}$ will also maximize the discounted reward in the original MDP $M$. 

Note that in \eqref{total_to_discounted}, we have used the fact that $\Pr[\pi]{\tilde{T}=1}=0$ and that $R(s_\bot)=0$. In the key step highlighted by the underlines, we have $\E[\pi][\sum_{t=1}^{l} R(\tilde{S}_t)|\tilde{T}=l+1] = \E[\pi][\sum_{t=1}^{l} R(S_t)]$ because the condition $\tilde{T}=l+1$ only rules out early-terminating trajectories -- that is, rules out the possibility that some state $\tilde{S}_t$ for $t\leq l$ equals $s_\bot$ -- but the condition does not alter the probability ratios between trajectories that do not early terminate. More specifically, let $s_{1..l}$ denote an arbitrary trajectory segment $s_1\dots s_{l}$ that does not early terminate (i.e. $s_t\neq s_\bot$ for $1\leq t \leq l$), then
\eqa{
	\tildePr[\pi]{\tilde{S}_{1..l} = s_{1..l}|\tilde{T}=l+1} 
=~& \frac{
		\tildePr[\pi]{\tilde{T}=l+1|\tilde{S}_{1..l} = s_{1..l}}
	}{
		\tildePr[\pi]{\tilde{T}=l+1}
	}\cdot \tildePr[\pi]{\tilde{S}_{1..l} = s_{1..l}} \\
=~&	\frac{1-\gamma}{\gamma^{l-1}\cdot(1-\gamma)} \cdot \tildePr[\pi]{\tilde{S}_{1..l} = s_{1..l}} \\
=~&	\frac{1}{\gamma^{l-1}} \cdot \tildePr[\pi]{ \tilde{S}_t\neq s_\bot \text{~for~} t=1..l } \cdot 
	\tildePr[\pi]{ \tilde{S}_{1..l}=s_{1..l}| \tilde{S}_t\neq s_\bot \text{~for~} t=1..l } \\
=~&	\tildePr[\pi]{ \tilde{S}_{1..l}=s_{1..l}| \tilde{S}_t\neq s_\bot \text{~for~} t=1..l } \\
=~&	\Pr[\pi]{S_{1..l}=s_{1..l}}
}

\section{Proofs of the Bellman Optimality under Generalized Discounting}
\label{sec:proof_bellman}
In this section we prove Theorem \ref{thm:Qstar}, which asserts the uniqueness and optimality of the solution of generalized Bellman equation in episodic learning setting.

\subsection{Properties of Episodic Learning Process}
We start with presenting a series of mathematical properties of ELP; the first three are known, the rest are new. Most of our mathematical results in this paper are based on these properties.

\begin{proposition}
\label{prop:irreducible}
(\cite{2020:bojun}, Lemma 1.1)
For any policy $\pi$ in any ELP, the Markov chain induced by policy $\pi$ is irreducible: Let $s$ and $s'$ be any two states reachable by $\pi$, $\sum_{\tau=1}^\infty \Pr[\pi]{S_{t+\tau}=s'|S_t=s} > 0$. 
\end{proposition}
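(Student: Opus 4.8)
The plan is to exploit the regenerative structure of an ELP: every episode begins at a state drawn from the initial distribution $\rho$ and, by the finite-time assumption $\E[\pi][T]<\infty$, terminates in finitely many steps almost surely, after which the terminal state resets the chain back to $\rho$. First I would record that the Markov chain $\P_\pi$ is time-homogeneous, so $\Pr[\pi]{S_{t+\tau}=s'\mid S_t=s}$ does not depend on $t$; it therefore suffices to exhibit a single $\tau\geq 1$ for which this conditional probability is strictly positive, since all summands are nonnegative and the infinite sum will inherit positivity from that one term.

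The first substantive step is to show that from any reachable non-terminal state $s$ a terminal state is reached in finitely many steps with positive probability. I would argue by contradiction: if, started at $s$, the chain avoided $\terminals$ forever almost surely, then — because $s$ is reachable and episodes are independent copies regenerated at each reset — $s$ would be visited with positive probability within a single episode, i.e. there is a time $t<T$ with $\Pr[\pi]{S_t=s}>0$. Conditioning on that event and then never terminating would force $T=\infty$ on a set of positive probability, contradicting $\E[\pi][T]<\infty$, since a random variable with finite mean is finite almost surely. Hence termination from $s$ has positive probability; if $s$ is itself terminal the claim is immediate.

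The second step uses the reset property. By the definition of an ELP, every terminal state transitions under any action to a state distributed as $\rho$. Since $s'$ is reachable by $\pi$, there is a state $s_0$ with $\rho(s_0)>0$ together with a finite trajectory of positive probability leading from $s_0$ to $s'$. Concatenating the three legs — from $s$ to some terminal state $\absorb$ (positive probability by the first step), from $\absorb$ to $s_0$ (positive probability because terminal states reset to $\rho$ and $\rho(s_0)>0$), and from $s_0$ to $s'$ (positive probability by reachability of $s'$) — yields a single finite path of positive probability from $s$ to $s'$. Taking $\tau$ equal to the length of this path gives $\Pr[\pi]{S_{t+\tau}=s'\mid S_t=s}>0$, so the claimed sum is strictly positive.

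The step I expect to be the main obstacle is the first one: rigorously passing from the global finite-time hypothesis, which is phrased for the first episode started from $\rho$, to termination with positive probability from an \emph{arbitrary} reachable state $s$. This requires making the regenerative and time-homogeneity reduction precise — in particular arguing that reachability of $s$ at some (possibly late) time is equivalent to reachability within a single episode — and separately disposing of the degenerate case $s\in\terminals$.
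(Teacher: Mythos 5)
The paper itself contains no proof of this proposition: it is imported verbatim from \cite{2020:bojun} (Lemma 1.1), so there is no in-paper argument to compare yours against. Judged on its own, your proof is correct, and it is the natural regenerative argument this fact rests on: (i) from any reachable state $s$, termination occurs with positive probability, since otherwise conditioning on a positive-probability visit to $s$ before termination would force $\Pr[\pi]{T=\infty}>0$ via the Markov property, contradicting $\E[\pi][T]<\infty$; and (ii) every terminal state resets to $\rho$ under any action (ELP condition (2)), so splicing the three positive-probability legs $s \ra \terminals \ra s_0 \ra s'$ yields one positive summand, which settles the series by time-homogeneity. The step you rightly flag as the main obstacle — reducing reachability of $s$ at an arbitrary (possibly late) time to reachability within a single episode — goes through exactly as you indicate: on the event $\{S_t=s\}$ with $s\notin\terminals$ there is a last reset time $u<t$ (using that $S_0\in\terminals$ by ELP condition (3)), the post-reset segment starts from $\rho$ and avoids $\terminals$, and homogeneity plus the reset property transplant it to the first episode, giving $\Pr[\pi]{S_{t-u}=s,~ T>t-u}>0$. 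Two cosmetic corrections: your intermediate claim should be stated as $\Pr[\pi]{S_{t'}=s,~T>t'}>0$ rather than ``there is a time $t<T$ with $\Pr[\pi]{S_t=s}>0$'' (the event $\{t<T\}$ must be part of the probability, not a side condition), and in the degenerate case $s\in\terminals$ you should skip leg one entirely and start the concatenation at the reset, since the first step from $s$ already lands on $\rho$.
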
 

\begin{proposition}
\label{prop:recurrent}
(\cite{2020:bojun}, Lemma 1.2)
For any policy $\pi$ in any ELP, the Markov chain induced by policy $\pi$ is positive recurrent: Let $s$ be any state reachable by $\pi$, $\E[\pi] [T_s] < \infty$, where $T_s$ is the recurrent time of $s$ in the markov chain of $\pi$. 
\end{proposition}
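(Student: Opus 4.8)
The plan is to exploit the regenerative (renewal) structure that the ELP conditions impose on the chain induced by $\pi$. By the reset condition of the ELP, every terminal state transits to $\rho$ regardless of the incoming action, so once the chain reaches $\terminals$ the next state is drawn afresh from $\rho$ independently of the entire past; by the Markov property this splits the infinite trajectory into independent and identically distributed \emph{episodes}, where the $k$-th episode is the trajectory segment running from its fresh $\rho$-start up to (and including) the next hitting of $\terminals$. Writing $T\step{k}$ for the length of the $k$-th episode, the $T\step{k}$ are i.i.d. copies of the termination time $T$, and the finite-time condition of the ELP gives $\E[\pi][T\step{k}] = \E[\pi][T] < \infty$. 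The whole argument then reduces to bounding the return time $\E[\pi][T_s]$ by counting how many episodes elapse before the chain returns to $s$, and multiplying by this finite mean episode length.

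First I would pin down the per-episode visit probability of a reachable state $s$. Since the infinite trajectory is a concatenation of i.i.d. episodes, $s$ can be visited at all only if a single episode contains $s$ with positive probability; hence reachability of $s$ by $\pi$ forces $p_s \define \Pr[\pi]{s \text{ occurs in one episode}} > 0$ (Proposition \ref{prop:irreducible} is the global form of this, asserting that all reachable states communicate). Consequently, letting $N$ be the index of the first episode, counted after the chain's departure from $s$, that again contains $s$, the events ``episode $k$ contains $s$'' are i.i.d. with probability $p_s$, so $N$ is geometrically distributed with mean $\E[\pi][N] = 1/p_s < \infty$.

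Next I would assemble the bound via Wald's identity. Starting the clock when the chain leaves $s$, the first return to $s$ occurs no later than the end of the $N$-th subsequent episode, so $T_s$ is at most the residual of the current episode (expected length $\le \E[\pi][T]$) plus $\sum_{k=1}^{N} T\step{k}$. Crucially, $N$ is a stopping time for the filtration generated by the i.i.d. episodes: $\{N \geq k\}$ depends only on episodes $1,\dots,k-1$ and is therefore independent of the length $T\step{k}$. Wald's identity then yields $\Exp[\pi]{\sum_{k=1}^{N} T\step{k}} = \E[\pi][N] \cdot \E[\pi][T] < \infty$, whence $\E[\pi][T_s] \leq \E[\pi][T] + \E[\pi][N]\cdot\E[\pi][T] < \infty$. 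The case $s \in \terminals$ is identical, using that a terminal state can only be visited at an episode boundary, so the number of episodes between consecutive visits to $s$ is again geometric with finite mean.

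The step I expect to be the main obstacle is making the i.i.d.-episode decomposition fully rigorous, i.e. justifying that conditioning on a fresh $\rho$-start renders each episode independent of the past. This is where the strong Markov property must be invoked at the random termination times, together with the ELP reset condition; once that regenerative structure is in place, the remaining counting is routine. I note finally that if $\SS$ is finite the statement is immediate, since an irreducible finite Markov chain is automatically positive recurrent, and the renewal argument above is precisely what is needed to carry the conclusion over to countably infinite ELPs such as the sequence-valued state spaces arising in machine translation.
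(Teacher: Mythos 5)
Your proposal is correct in substance, but note that the paper never proves this proposition at all: it is imported verbatim from \citet{2020:bojun} (Lemma 1.2) as one of the three ``known'' ELP properties listed at the start of Appendix B, so there is no in-paper argument to compare against. Your route -- the strong Markov property at the a.s.\ finite termination times together with the homogeneous reset condition $P(\cdot|s,a)=\rho(\cdot)$ at terminal states, yielding i.i.d.\ episodes with mean length $\E[\pi][T]<\infty$; a geometric episode count $N$ with per-episode visit probability $p_s>0$ (forced by reachability, since $p_s=0$ would make $s$ a.s.\ unvisited across the i.i.d.\ concatenation); and Wald's identity $\Exp[\pi]{\sum_{k=1}^{N} T\step{k}} = \E[\pi][N]\cdot\E[\pi][T]$ -- is the standard renewal-theoretic proof of positive recurrence for reset chains, and it buys self-containment plus validity for countably infinite $\SS$ (the finite case being immediate from irreducibility restricted to $\SS_\pi$, as you note). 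Your handling of $s\in\terminals$ via episode boundaries is also fine, since by definition of the termination time each episode contains exactly one terminal visit, at its end.

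One step is stated inexactly, though your conclusion survives: the expected residual length of the episode in progress when the chain sits at $s$ is \emph{not} bounded by $\E[\pi][T]$ in general. Writing $h(s)$ for the expected hitting time of $\terminals$ from $s$ under $\pi$, one can have $h(s)>\E[\pi][T]$ when $s$ is a rarely visited state from which termination is slow: a fresh episode starts from the mixture $\rho$, and $s$ need not be an episode start, so the residual from $s$ is a conditional quantity that the unconditional mean $\E[\pi][T]$ does not dominate. What is true, and all you need, is $h(s)<\infty$. Since $s$ is reachable, $\Pr[\pi]{S_{t_0}=s}>0$ for some $t_0$; because every episode has length at least one, the episode containing time $t_0$ ends by time $\sum_{k=1}^{t_0} T\step{k}$, so the unconditional expected residual at time $t_0$ is at most $t_0\cdot\E[\pi][T]<\infty$; conditioning on the positive-probability event $\{S_{t_0}=s\}$ and applying the Markov property gives $h(s)\leq t_0\,\E[\pi][T]\,/\,\Pr[\pi]{S_{t_0}=s}<\infty$. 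The corrected bound is then $\E[\pi][T_s]\leq h(s) + \E[\pi][T]/p_s<\infty$, with your Wald step (which correctly absorbs the size-biased length of the successful episode $N$) unchanged.
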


\begin{proposition}
\label{prop:change_space}
(\cite{2020:bojun}, Theorem 4)
For any policy $\pi$ in any ELP, let $f: \SS \ra \Real$ be a real-valued function over the states, we have 
\eq[mle_laeq_5]{
\Exp[S\sim\rho_\pi]{f(S)} = \Exp[\zeta\sim\pi]{\sum_{t=1}^T f(S_t)} ~/~ \Exp[\zeta\sim\pi]{T}
}
\end{proposition}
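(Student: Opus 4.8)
The plan is to exhibit $\rho_\pi$ explicitly as a normalized per-episode occupation measure, so that the claimed ratio identity collapses to a linearity-of-expectation bookkeeping. First I would define the expected number of visits to each state during a single episode, $\nu(s) \define \Exp[\zeta\sim\pi]{\sum_{t=1}^{T} \indicator{S_t=s}}$, and note that pairing $\nu$ with $f$ already reproduces the numerator on the right-hand side of \eqref{mle_laeq_5}: by linearity $\sum_s \nu(s) f(s) = \Exp[\zeta\sim\pi]{\sum_{t=1}^T f(S_t)}$, and taking $f\equiv 1$ gives $\sum_s \nu(s) = \Exp[\zeta\sim\pi]{T}$, which is finite by the finite-time property of the ELP. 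Everything therefore reduces to proving that $\nu$, once normalized by $\E[\pi][T]$, coincides with the stationary distribution $\rho_\pi$.

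The core step is to show that $\nu$ is a stationary measure for the Markov chain induced by $\pi$, i.e. $\nu(s') = \sum_s \nu(s)\, P_\pi(s'|s)$ where $P_\pi(s'|s) = \sum_a \T(s'|s,a)\pi(a|s)$. I would separate the $t=1$ contribution to $\nu(s')$, which equals $\rho(s')$ since $S_1\sim\rho$, from the $t\ge 2$ contribution $\sum_{t\ge 2}\Pr[\pi]{S_t=s',\, T\ge t}$. Conditioning each such term on the predecessor state $S_{t-1}$ and using that $S_{t-1}\notin\terminals$ whenever $t\le T$ (because $T$ is the first hitting time of $\terminals$), the Markov property yields $\sum_{t\ge 2}\Pr[\pi]{S_t=s',\, T\ge t} = \sum_{s\notin\terminals}\nu(s)\,P_\pi(s'|s)$. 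This produces the intermediate recursion $\nu(s') = \rho(s') + \sum_{s\notin\terminals}\nu(s)\,P_\pi(s'|s)$.

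To upgrade this to full stationarity I would invoke ELP reset condition~(2): every terminal state transits to $\rho$ under any action, so $P_\pi(s'|s)=\rho(s')$ for all $s\in\terminals$. Since an episode always ends in $\terminals$, $\sum_{s\in\terminals}\nu(s) = \Pr[\pi]{S_T\in\terminals}=1$, and hence $\sum_{s\in\terminals}\nu(s)\,P_\pi(s'|s) = \rho(s')$. Substituting this expression for the stray $\rho(s')$ term turns the recursion into $\nu(s') = \sum_{s}\nu(s)\,P_\pi(s'|s)$, so $\nu$ is stationary. Normalizing, $\nu/\E[\pi][T]$ is a stationary distribution; by irreducibility and positive recurrence of the induced chain (Propositions~\ref{prop:irreducible} and~\ref{prop:recurrent}) this stationary distribution is unique, forcing $\rho_\pi(s)=\nu(s)/\E[\pi][T]$. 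Dividing the two bookkeeping identities from the first paragraph then delivers \eqref{mle_laeq_5}.

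The step I expect to be the main obstacle is the stationarity argument, specifically the boundary accounting: one must track the first-hitting constraint $T\ge t$ carefully so that only non-terminal predecessors appear in the $t\ge 2$ sum, and then show that the reset condition makes the leftover mass sitting on $\terminals$ reproduce exactly the injected initial distribution $\rho(s')$. This is precisely where the ELP-specific reset structure is indispensable; for a generic absorbing chain the per-episode occupation measure would not close up into a stationary measure, and the identity would fail.
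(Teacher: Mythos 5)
Your proof is correct. A preliminary remark: the paper itself does not prove this proposition --- it is imported verbatim as Theorem~4 of \citet{2020:bojun} --- so there is no in-paper argument to compare against; what you have reconstructed is, in effect, the standard regenerative/occupation-measure proof of the cited ergodic identity. Your decomposition checks out at every step: the per-episode occupation measure $\nu(s)=\Exp[\zeta\sim\pi]{\sum_{t=1}^{T}\indicator{S_t=s}}$ reproduces the numerator and (with $f\equiv 1$) the denominator of \eqref{mle_laeq_5}; the recursion $\nu(s')=\rho(s')+\sum_{s\notin\terminals}\nu(s)\,P_\pi(s'|s)$ follows by peeling off $t=1$ (where $S_1\sim\rho$ by ELP condition (3)) and conditioning on predecessors, which are necessarily non-terminal when $t\leq T$ since $T$ is the first hitting time; and the closing step correctly uses ELP reset condition (2), $P_\pi(s'|s)=\rho(s')$ for $s\in\terminals$, together with $\sum_{s\in\terminals}\nu(s)=\Pr[\pi]{S_T\in\terminals}=1$, which holds because the only terminal visit among $t=1,\dots,T$ is at $t=T$ itself and $T<\infty$ almost surely (as $\E[\pi][T]<\infty$). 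The sum--expectation interchanges are justified by nonnegativity and $\E[\pi][T]<\infty$. One small point worth tightening in the uniqueness step: Propositions~\ref{prop:irreducible} and~\ref{prop:recurrent} are stated only for states \emph{reachable} under $\pi$, so by themselves they do not exclude a second stationary distribution supported on a closed class of $\pi$-unreachable states. This is easily patched within the paper's toolkit: Proposition~\ref{lem:closure} shows that any $\pi$-absorbing subset of $\SS$ must contain $\SS_\pi$, so no closed class disjoint from $\SS_\pi$ can exist and the stationary distribution is unique; alternatively you may simply invoke the uniqueness of $\rho_\pi$ that the paper already cites from \citet{2020:bojun}. With that patch, normalizing $\nu$ by $\E[\pi][T]$ identifies it with $\rho_\pi$, and dividing your two bookkeeping identities delivers \eqref{mle_laeq_5} exactly as claimed. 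Your closing observation is also apt: it is precisely the reset structure that closes the occupation measure into a stationary one, and the identity would indeed fail for a generic absorbing chain.
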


For two Q-functions $Q_1,Q_2\in\Q $, we write $Q_1 \geq Q_2$ iff $Q_1(s,a)\geq Q_2(s,a)$, $\forall (s,a)\in\SS\times\AS$. The following proposition confirms that the generalized Bellman optimality operator \eqref{bellman} is \emph{monotonic}.

\begin{proposition}
\label{prop:monotonic}
For any discounting function $\ga:\SS \ra [0,1]$, the generalized Bellman optimality operator $\Bopt[\ga]$ is a monotonic operator over $\Q$, that is,  
$Q_1 \geq Q_2 \Rightarrow \Bopt[\ga] Q_1 \geq \Bopt[\ga] Q_2$~ for all $Q_1,Q_2\in\Q$.
\end{proposition}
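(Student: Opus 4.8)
The plan is to establish the inequality $\Bopt[\ga] Q_1(s,a) \geq \Bopt[\ga] Q_2(s,a)$ pointwise for each fixed state-action pair $(s,a)$, building it up from the inside of the defining expression \eqref{bellman} outward. The whole argument rests on two elementary facts: that the $\max$ operator preserves pointwise inequalities, and that the remaining operations appearing in $\Bopt[\ga]$ — scaling by the nonnegative weights $\ga(s')$ and $\T(s'|s,a)$, adding the reward $R(s')$, and summing over $s'$ — are all order-preserving precisely because the coefficients are nonnegative.

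First I would handle the innermost $\max$. Fixing any successor state $s'$ and letting $a^* \in \arg\max_{a'} Q_2(s',a')$, the hypothesis $Q_1 \geq Q_2$ gives $\max_{a'} Q_1(s',a') \geq Q_1(s',a^*) \geq Q_2(s',a^*) = \max_{a'} Q_2(s',a')$, so the pointwise domination survives the maximization over actions at every $s'$. The subtlety worth flagging is that monotonicity of $\max$ must be argued through a comparison at a common maximizer, as above, rather than taken for granted.

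Next I would propagate this inequality through the rest of the expression. Since $\ga(s') \in [0,1]$ is nonnegative, multiplying both sides by $\ga(s')$ preserves the inequality; adding $R(s')$ preserves it; and since $\T(s'|s,a) \geq 0$, scaling by the transition probability and then summing over all $s'$ preserves it as well. Assembling these steps yields $\Bopt[\ga] Q_1(s,a) \geq \Bopt[\ga] Q_2(s,a)$ for the arbitrary pair $(s,a)$, which is exactly $\Bopt[\ga] Q_1 \geq \Bopt[\ga] Q_2$. I do not expect any real obstacle here; the only thing the argument genuinely relies on — beyond the monotonicity of $\max$ — is the \emph{nonnegativity} (not merely boundedness) of the discounting and transition weights, and this holds by definition since $\ga$ maps into $[0,1]$ and $\T(\cdot|s,a)$ is a probability distribution.
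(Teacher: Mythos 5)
Your proof is correct and follows essentially the same route as the paper's: the paper likewise separates the (constant-in-$Q$) reward term, observes that $Q_1 \geq Q_2$ implies $\max_{a'} Q_1(s',a') \geq \max_{a'} Q_2(s',a')$ for every $s'$, and then uses nonnegativity of the weights $\T(s'|s,a)\,\ga(s')$ to pass the inequality through the sum. Your explicit justification of the $\max$ step via a maximizer $a^*$ of $Q_2(s',\cdot)$ is a slightly more careful spelling-out of a step the paper states without proof, but it is not a different argument.
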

\begin{proof}
Rewrite \eqref{bellman} as $\Bopt[\ga] Q(s,a) = \sum_{s'} \Big( \T(s'|s,a) \* \ga(s') \Big) \* \max_{a'} Q(s',a')  + \sum_{s'} \T(s'|s,a) R(s')$. As $Q_1(s',a') \geq Q_2(s',a')$ for all $(s',a')$, we have $\max_{a'} Q_1(s',a') \geq \max_{a'} Q_2(s',a')$ for all $s'$, and thus $\sum_{s'} \Big( \T(s'|s,a) \* \ga(s') \Big) \* \max_{a'} Q_1(s',a') \geq \sum_{s'} \Big( \T(s'|s,a) \* \ga(s') \Big) \* \max_{a'} Q_2(s',a')$, where $\T(s'|s,a)\* \ga(s') \geq 0$ for all $s'$.
\end{proof}

Now, as a well-known special case, if the discounting function is a constant $c$ less than $1$, the corresponding Bellman optimality operator $\B_c$ is a \emph{contraction mapping} with respect to the maximum-norm distance over $\Q$, which guarantees, by the Banach fixed-point theorem, that there is a special Q-function $Q^*_c$ which is both the unique fixed point and the unique limiting point of the Bellman optimality operator (i.e. $Q^*_c = \B_c Q^*_c$ and $Q^*_c = \lim\limits_{n \ra \infty} (\B_c)^n Q ~,~ \forall Q\in\mathcal{Q}$). 

Unfortunately, $\Bopt[\ga]$ loses the above contraction property under general discounting.
\begin{proposition}
\label{prop:contraction}
In any ELP where there is a single state $s^*$ and a single action $a^*$ such that doing $a^*$ under $s^*$ only goes to non-terminal states $s'$, i.e. $\sum_{s'\not\in\terminals}\T(s'|s^*,a^*) = 1$, the Bellman operator $\Bopt$ with episodic discounting \eqref{gamma} is \textbf{not} a contraction mapping with respect to the maximum-norm distance: For some $Q_1, Q_2\in\Q$,
$\max_{s,a} |Q_1(s,a)-Q_2(s,a)| = \max_{s,a} |\Bopt Q_1(s,a) - \Bopt Q(s,a)|$.
\end{proposition}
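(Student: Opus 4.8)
The plan is to prove non-contraction by exhibiting a single pair of Q-functions $Q_1,Q_2$ whose maximum-norm distance is \emph{exactly preserved} under $\Bopt$, which rules out any contraction factor strictly less than $1$. The structural reason this is possible is precisely the hypothesis on $(s^*,a^*)$: since doing $a^*$ under $s^*$ reaches only non-terminal states, we have $\gammaEPI(s')=1$ for every successor $s'$ carrying positive probability $\T(s'|s^*,a^*)$, so at this particular state-action pair the episodic discounting \eqref{gamma} does no discounting at all. This is exactly the place where the usual contraction argument (which needs $\ga<1$ uniformly) breaks down.

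Concretely, first I would take two constant Q-functions differing by a positive constant $c$, say $Q_1(s,a)\equiv c$ and $Q_2(s,a)\equiv 0$, so that $\max_{s,a}|Q_1(s,a)-Q_2(s,a)|=c$. Evaluating the Bellman operator \eqref{boe} at $(s^*,a^*)$, the reward terms $R(s')$ are identical for $Q_1$ and $Q_2$ and cancel in the difference, leaving
\[
\Bopt Q_1(s^*,a^*) - \Bopt Q_2(s^*,a^*) = \sum_{s'\not\in\terminals} \T(s'|s^*,a^*)\,\big( \max_{a'} c - \max_{a'} 0 \big) = c \sum_{s'\not\in\terminals} \T(s'|s^*,a^*) = c,
\]
where the last equality uses $\sum_{s'\not\in\terminals}\T(s'|s^*,a^*)=1$ from the hypothesis. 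Hence $\max_{s,a}|\Bopt Q_1(s,a)-\Bopt Q_2(s,a)|\geq c$.

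For the matching upper bound I would verify that $\Bopt$ is non-expansive on all of $\Q$. For any $(s,a)$, writing out the difference and applying the elementary inequality $|\max_{a'}Q_1(s',a')-\max_{a'}Q_2(s',a')|\leq\max_{a'}|Q_1(s',a')-Q_2(s',a')|$ state-by-state, together with $\gammaEPI(s')\leq 1$ and $\sum_{s'}\T(s'|s,a)=1$, gives $|\Bopt Q_1(s,a)-\Bopt Q_2(s,a)|\leq\max_{s'',a''}|Q_1(s'',a'')-Q_2(s'',a'')|$ for every $(s,a)$, hence $\max_{s,a}|\Bopt Q_1-\Bopt Q_2|\leq\max_{s,a}|Q_1-Q_2|$. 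Combining with the lower bound from the previous step yields $\max_{s,a}|\Bopt Q_1-\Bopt Q_2|=\max_{s,a}|Q_1-Q_2|=c>0$ for the constructed pair, so no constant $k<1$ can satisfy the contraction inequality, which is the claim.

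The argument is short and I do not anticipate a genuine obstacle; the one point requiring a little care is the upper-bound step, where I must invoke the max-difference inequality and use $\gammaEPI\leq 1$ (rather than $\gammaEPI<1$) so that the bound degrades to exactly the non-expansive constant $1$. The whole proof is really just the observation that the ``effective discount'' along $(s^*,a^*)$ equals $1$, and that a globally constant perturbation is the worst case for the $\max$ operator inside $\Bopt$.
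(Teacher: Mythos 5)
Your proof is correct and takes essentially the same approach as the paper's: a constant-difference pair of Q-functions, an exact evaluation at $(s^*,a^*)$ where all successor states are non-terminal so the episodic discount is $1$, and the non-expansiveness of $\Bopt$ for the matching upper bound. The only cosmetic difference is that the paper takes $Q_1 \equiv Q_2 + \delta$ for an arbitrary $Q_2$ and computes the difference exactly (the $\max_{a'}$ cancels under a constant shift, giving $|\Bopt Q_1(s,a)-\Bopt Q_2(s,a)| = \sum_{s'}\T(s'|s,a)\,\gamma(s')\,\delta \leq \delta$ directly), whereas you specialize to $Q_1\equiv c$, $Q_2\equiv 0$ and invoke the general max-difference inequality for the upper bound---both are valid.
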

\begin{proof}
Consider two Q-functions with constant difference everywhere: $Q_1(s,a) \equiv Q_2(s,a) + \delta$, for some $\delta > 0$. Clearly, $\max_{s,a} |Q_1(s,a)-Q_2(s,a)| = \delta$. On the other hand, for any $(s,a)$ we have
$|\Bopt  Q_1(s,a) - \Bopt Q_2(s,a)| = \sum_{s'\in\SS}~ \T(s'|s,a) \* \ga(s') \* \delta \leq \delta$, which equals $\delta$ at the particular $(s^*,a^*)$ because $\sum_{s'\in\SS}~ \T(s'|s^*,a^*) \* \ga(s') \* \delta = \sum_{s'\not\in\terminals}~ \T(s'|s^*,a^*) \* 1 \* \delta = \delta$. 
\end{proof}

Proposition \ref{prop:contraction} is a bad news for most non-trivial ELPs: It says that the generalized Bellman operators $\Bopt[\ga]$ -- and the episodic operator $\B$ in particular -- is not a contraction mapping unless we always has a chance to immediately terminate an episode no matter where we are (even when we are at terminal states, at which point the episode has not effectively started yet!).

However, it turns out that for a large class of the generalized Bellman operators (including the episodic operator $\B$), they still enjoy unique fixed and limiting point in \emph{all} finite ELPs, not because of the contraction property as in discounted-MDPs, but because of a graph property dedicated to the family of ELPs:

\begin{proposition}
\label{lem:closure}
Given an Episodic Learning Process $(\SS, \AS, \T, \R, \rho_0)$ and a policy $\pi$ in it, for any subset of states $\Omega\subseteq\SS$, let 
$\mathcal{C}_\pi(\Omega) \define \{~ s':~ \exists s\in\Omega,~ \Pr[\pi]{S_{t+1}=s'|S_t=s} > 0 ~\}$
be the set of all successor states that are one-step reachable from $\Omega$ under $\pi$, and let $\SS_\pi$
be the set of states that are ever reachable under $\pi$ (from initial states, in finite steps), then
$\mathcal{C}_\pi(\Omega) \subseteq \Omega$ only if $\SS_\pi \subseteq \Omega$.
\end{proposition}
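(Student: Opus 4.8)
The plan is to use two features of an ELP that a generic Markov chain lacks: every trajectory terminates in finitely many steps under any policy, and every terminal state resets the chain to the initial distribution $\rho$. Combined with the closure hypothesis $\mathcal{C}_\pi(\Omega)\subseteq\Omega$, these let me push an arbitrary element of $\Omega$ first into $\terminals$, then into the support of $\rho$, and finally out to all of $\SS_\pi$. Throughout I take $\Omega$ to be nonempty, which is the case the result is intended for (and, strictly, the only case in which it holds, since the premise is trivially met by $\Omega=\emptyset$).

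First I would show that $\Omega$ must contain a terminal state. Pick any $s\in\Omega$. Because the ELP is a finite-time MDP, running $\pi$ from $s$ reaches $\terminals$ in finitely many steps with positive probability, so there is a finite chain $s=u_0\to u_1\to\cdots\to u_k$ of positive-probability transitions with $u_k\in\terminals$. A one-step induction on the hypothesis keeps this chain inside $\Omega$: from $u_j\in\Omega$ we get $u_{j+1}\in\mathcal{C}_\pi(\Omega)\subseteq\Omega$. Hence some terminal state $s_\bot$ lies in $\Omega$.

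Next I would invoke the reset structure. By condition (2) of the ELP definition, $\Pr[\pi]{S_{t+1}=s'\mid S_t=s_\bot}=\rho(s')$ for every $s'$, so $\mathcal{C}_\pi(\{s_\bot\})$ equals the support of $\rho$; applying the closure hypothesis once more puts the whole set of initial states inside $\Omega$. From here the conclusion is immediate and does not even need irreducibility (Proposition \ref{prop:irreducible}): every $s'\in\SS_\pi$ is joined to some initial state by a finite positive-probability path, and the same one-step induction keeps that path in $\Omega$, giving $s'\in\Omega$. Thus $\SS_\pi\subseteq\Omega$.

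The step that carries all the weight is showing that a terminal state belongs to $\Omega$: this is where I rely on the finite-time property to guarantee a terminating path and on closure to trap that path inside $\Omega$. The delicate point is that termination must be invoked from an \emph{arbitrary} $s\in\Omega$, which may not itself be reachable from the initial states; it is precisely this that forbids a spurious closed class of unreachable states disjoint from $\SS_\pi$ and hence makes the ``only if'' hold. If one prefers to lean on the already-listed properties, the alternative is to first land a single reachable state in $\Omega$ by the same termination-plus-reset argument and then spread to all of $\SS_\pi$ via irreducibility; either way the finite-time structure of the ELP is the essential ingredient.
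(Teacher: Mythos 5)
Your overall architecture is genuinely different from the paper's: the paper argues by contradiction, surgically constructing a hybrid policy $\mu$ that reaches a state $s^*\notin\Omega$ and is then absorbed into $\Omega$ forever, contradicting positive recurrence (Proposition \ref{prop:recurrent}); you instead run a direct three-stage induction — first a terminal state lands in $\Omega$, then $\mathrm{supp}(\rho)\subseteq\Omega$ via the reset property, then $\SS_\pi\subseteq\Omega$ by walking $\pi$-paths — never invoking irreducibility or recurrence. Stages two and three are correct as written, and your caveat that the claim fails for $\Omega=\emptyset$ is a fair catch (the paper's own proof also tacitly picks a point $s^+\in\Omega$). If the first stage were sound, your proof would be more elementary than the paper's.

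But there is a genuine gap exactly at the step you yourself flag as load-bearing: you assert that ``because the ELP is a finite-time MDP, running $\pi$ from $s$ reaches $\terminals$ in finitely many steps with positive probability'' for an \emph{arbitrary} $s\in\Omega$. That does not follow from the finite-time condition as defined: $\E[\pi][T]<\infty$ is a statement about trajectories started from the initial distribution, and it places no constraint on $\pi$'s behavior from states $\pi$ never visits from initial states — and ruling out a $\pi$-closed set $\Omega$ disjoint from $\SS_\pi$, i.e., a set of states $\pi$ may never visit, is precisely the content of the proposition. You acknowledge the delicacy (``$s$ may not itself be reachable from the initial states'') but do not bridge it; your suggested fallback via irreducibility has the same unproved core. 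Closing the gap needs two ingredients the paper's proof uses explicitly: (i) the standing convention that every $s\in\SS$ is reachable from initial states under \emph{some} policy $\beta$ (see the parenthetical remark in the paper's proof, referring back to Section \ref{sec:preliminaries}); and (ii) a stationary-policy surgery: if $D$, the set of states $\pi$ can reach from $s$, were disjoint from $\terminals$, define $\mu$ to copy $\pi$ on $D$ and $\beta$ off $D$; truncating a positive-probability $\beta$-path from an initial state to $s$ at its \emph{first} entry into $D$ yields a positive-probability $\mu$-path into the $\mu$-closed set $D$, whence $\Pr[\mu]{T=\infty}>0$ and $\E[\mu][T]=\infty$, contradicting the ELP condition for the policy $\mu$ (not for $\pi$). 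The truncation matters because the approach path and $D$ may overlap while $\mu$ must be a single stationary policy. With such a lemma inserted, your argument goes through and is an attractive alternative to the paper's recurrence-based contradiction; without it, your first stage asserts essentially what was to be proved.
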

\vspace{-0.1in}
\begin{proof}
For contradiction suppose $\mathcal{C}_\pi(\Omega) \subseteq \Omega$ and yet there is a $\pi$-reachable state $s^* \in \SS_\pi$ that is outside the given subset $\Omega$. We will show that in this case, it is possible to construct a policy $\mu$ (that is possibly different from $\pi$) such that $s^*$ is also reachable under $\mu$, and that $\mu$ admits an infinite trajectory that passes through $s^*$ and never return back to $s^*$ (see Figure \ref{fig:lem_closure_full} below). This would contradict with Proposition \ref{prop:recurrent} above which asserts that a $\mu$-reachable state $s^*$ must have finite mean recurrence time under $\mu$. 
\begin{figure}[h]
\centering
\includegraphics[width=0.4\linewidth]{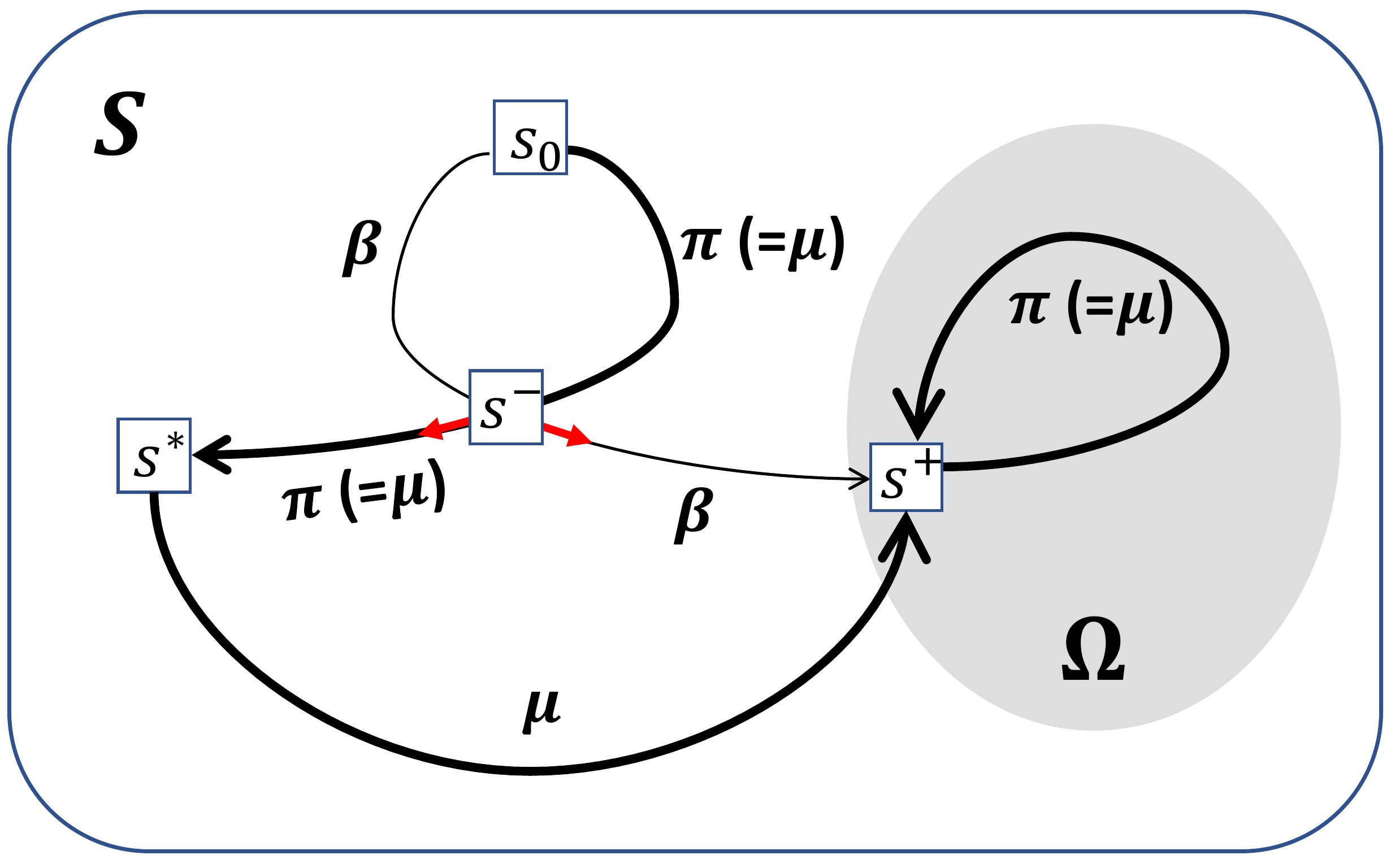}
\caption{An $\mu$-admissible trajectory that goes through $s^*$ but never returns.}
\label{fig:lem_closure_full}
\end{figure}

Specifically, first observe that $\mathcal{C}_\pi(\Omega) \subseteq \Omega$ means $\Omega$ is an absorbing subset under $\pi$ so that once we get into $\Omega$ we would never get out. 
In this case, the existence of a $\pi$-reachable state $s^*\not\in\Omega$ entails that $\Omega$ cannot contain all initial states, as otherwise from no initial state (in $\Omega$) we can go outside the absorbing subset $\Omega$ to reach $s^*$. Let $s_0$ be such an initial state that is outside $\Omega$, from which we can reach $s^*$ under $\pi$ (as assumed) without reaching any state in $\Omega$ in the middle (otherwise we never reach $s^*$). 

Now, pick an arbitrary state $s^+$ in $\Omega$, there must be some policy $\beta$ (not necessarily $\pi$) under which we can reach $s^+$ from $s_0$ (as states unreachable under \emph{any} policy should not be included in $\SS$ in the first place, see Section \ref{sec:preliminaries}). Without loss of generality we can again assume that we can reach $s^+$ from $s_0$ under $\beta$ without going through any other state in $\Omega$ in the middle because otherwise we can simply re-define $s^+$ to be the first state in $\Omega$ that we have encountered on the path (from $s_0$ to the ``old $s^+$'').

So far we have obtained an initial state $s_0$ outside $\Omega$, from which there is an admissible path $s_0 \overset{\pi}{\ra} s^*$ for policy $\pi$, and an admissible path $s_0 \overset{\beta}{\ra} s^+$ for policy $\beta$. Both paths only contain states outside $\Omega$ (except $s^+$). Now we construct policy $\mu$ as follows: we ask $\mu$ to copy $\pi$ on states in the path $s_0 \overset{\pi}{\ra} s^*$, and ask $\mu$ to copy $\beta$ on states in the path $s_0 \overset{\beta}{\ra} s^+$. If a state shows up in both paths -- such as the state $s^-$ in Figure \ref{fig:lem_closure_full} -- we ask $\mu$ to be a (probability) mixture of both $\pi$ and $\beta$ on that state. Clearly, we can reach both $s^*$ and $s^+$ from $s_0$ under $\mu$ (note that the policy mixing only decreases the probabilities to reach $s^*$ and $s^+$ but does not change their reachability). 

Since both $s^*$ and $s^+$ are in $\SS_\mu$, by Proposition \ref{prop:irreducible}, policy $\mu$ must admit a finite path from $s^*$ to $s^+$ too. Note that so far we have only prescribed $\mu$'s behavior \emph{outside} $\Omega$. Our final step is to ask $\mu$ to copy $\pi$ for all states in $\Omega$, so that $\Omega$ is also an absorbing subset for $\mu$, meaning that once we reach $s^+$ (from $s^*$), we will be stuck in $\Omega$ without going back to $s^*$ (which is outside $\Omega$ by our assumption at the beginning of the proof). In this way, we have constructed a policy $\mu$, under which we can first go from $s_0$ to $s^*$, then go from $s^*$ to $s^+$, and then be stuck in $\Omega$ forever without returning to $s^*$. A possibility of such an infinite trajectory under $\mu$ directly contradicts with Proposition \ref{prop:recurrent}.    
\end{proof}

\newpage
\subsection{Proof of Theorem \ref{thm:Qstar} (1) (2)}
Proposition \ref{lem:closure} says that in ELPs, a subset of $\SS$ can be an absorbing set under a policy only if it is ``big'' enough to have contained all reachable states of this policy. Consequently, there cannot exist an absorbing subset \emph{outside} the reachable set $\SS_\pi$. Utilizing this fact, we can prove the first two statements of Theorem \ref{thm:Qstar}.

\begin{recap}[Theorem \ref{thm:Qstar} (1) (2)]
In any ELP $(\SS, \AS, \T, \R, \rho)$ with finite state space $\SS$ and finite action space $\AS$, let $\gamma$ be any discounting function such that $\gamma(s) < 1$ for all terminal state $s\in \terminals$, then 
\renewcommand\labelenumi{(\theenumi)}
\begin{enumerate}
\item $\Bopt[\ga]$ has a unique fixed point, i.e., the equation $Q = \B^\gamma Q$ has a unique solution.

\item The fixed point of $\Bopt[\ga]$ is also the limiting point of repeatedly applying $\Bopt[\ga]$ to any $Q\in\Q$.
\end{enumerate}
\end{recap}
\begin{proof}
For any two Q-functions $Q_1$ and $Q_2$, consider their $L_\infty$-distance
\eq[]{
d(Q_1,Q_2) \define \max_{s\in\SS}~~ d_s(Q_1,Q_2)
}
where
\eq[]{
d_s(Q_1,Q_2) \define \max_{a\in\AS}~~ |Q_1(s,a) - Q_2(s,a)|
.}
As usual, we have
\begin{align}
d(\B^\ga Q_1, \B^\ga Q_2) 
&= 	\max_{s\in\SS}~ \max_{a\in\AS}~ \Big|~ 
		\sum_{s'\in\SS} P(s'|s,a) \* \ga(s') \* \Big( \max_{a'_1}~ Q_1(s',a'_1)- \max_{a'_2}~ Q_2(s',a'_2) \Big) 
	~\Big| \nonumber\\
&\leq 	\max_{s\in\SS}~ \max_{a\in\AS}~ \sum_{s'\in\SS} P(s'|s,a) \* \ga(s') \* \Big|~
		\max_{a'_1}~ Q_1(s',a'_1)- \max_{a'_2}~ Q_2(s',a'_2) 
	~\Big| \nonumber\\
&\leq 	\max_{s\in\SS}~ \max_{a\in\AS}~ \sum_{s'\in\SS} P(s'|s,a) \* \ga(s') \* \max_{a'}~ \Big|~
		Q_1(s',a')- Q_2(s',a') 
	~\Big| \nonumber\\
&= \max_{s\in\SS}~ \max_{a\in\AS}~ \sum_{s'\in\SS} P(s'|s,a) \* \ga(s') \* d_{s'}(Q_1,Q_2) \label{distance_bound}
.\end{align}
Traditionally, it was assumed that $\gamma(s') \equiv \ga_c < 1$ for all states, thus the $\gamma(s')$ term in \eqref{distance_bound} can be readily moved out of the sum, immediately yielding $d(\B Q_1,\B Q_2) \leq \ga_c \* d(Q_1,Q_2)$.
When $\gamma(s')$ is not constant and is allowed to be $1$ for non-terminal states, applying the operator $\B^\ga$ to $Q_1$ and $Q_2$ cannot guarantee to reduce $d(Q_1,Q_2)$, as discussed in Proposition \ref{prop:contraction}. 

However, by utilizing the graph property as proved in Proposition \ref{lem:closure}, we can show that $\B^\ga$ guarantees to reduce the per-state distance $d_s(Q_1,Q_2)$ at some ``support dimension'' $s$, so that if we repeatedly apply $\B^\ga$, the set of ``support states'' will become smaller and smaller and eventually become empty at which point the overall $L_\infty$-distance gets reduced (by the composite operator ``repeatedly applying $\B^\ga$''). 

Specifically, for given $Q_1, Q_2 \in \mathcal{Q}$, we will identify a sequence of \emph{proper} subsets of states
\eq[support_set_1]{
\mathcal{S} = \texttt{d-support}(0) \supset \texttt{d-support}(1) \supset \texttt{d-support}(2) \supset \dots \supset \texttt{d-support}(|\SS|)
} 
such that
\eq[support_set_2]{
s \not\in \texttt{d-support}(k) ~\Rightarrow~ \forall i\geq k,~ 
d_s \Big( (\mathcal{B}^\gamma)^i Q_1 , (\mathcal{B}^\gamma)^i Q_2 \Big) < d(Q_1,Q_2)
}
for all $k\geq 0$. 
The construction of the subsets is by induction, and is based on the following insight:

\addtocounter{theorem}{-10} 
\begin{lemma}
\label{prop:progress}
Under the condition of Theorem \ref{thm:Qstar}~, if \eqref{support_set_2} holds for $k-1$, then
\eq{
\exists {s^*} \in \texttt{d-support}(k),~ \text{~such that~~} \forall i\geq k,~
d_{s^*} \Big( (\mathcal{B}^\gamma)^i Q_1 , (\mathcal{B}^\gamma)^i Q_2 \Big) < d(Q_1,Q_2)
}
\end{lemma}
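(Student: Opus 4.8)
The plan is to track, at each iteration $i$, the set of states that still attain the global distance $d(Q_1,Q_2)$, and to show that the graph structure of the ELP forces this set to eventually ``leak'' through the terminal states (where $\ga<1$). Write $D\define d(Q_1,Q_2)$ and assume $D>0$ (the case $D=0$ is trivial). Define the argmax set $M_i\define\{\,s\in\SS:\ d_s\big((\Bopt[\ga])^iQ_1,(\Bopt[\ga])^iQ_2\big)=D\,\}$. Since \eqref{distance_bound} already shows $\Bopt[\ga]$ is non-expansive in the $L_\infty$ metric, every per-state distance stays $\le D$ and $M_i$ collects exactly the tight states. The first step I would carry out is a sharpening of the per-state form of \eqref{distance_bound} (applied with $Q_1,Q_2$ replaced by $(\Bopt[\ga])^iQ_1,(\Bopt[\ga])^iQ_2$): because each summand obeys $P(s'|s,a)\,\ga(s')\,d_{s'}\le P(s'|s,a)\cdot 1\cdot D$ and the weights sum to one, a state $s$ can lie in $M_{i+1}$ only if some action $a$ sends \emph{all} of its successors $s'$ (those with $P(s'|s,a)>0$) to states with $\ga(s')=1$ and $d_{s'}=D$. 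Since terminals satisfy $\ga<1$, such successors are non-terminal and lie in $M_i$; in short, sustaining the maximal distance requires routing entirely within $M_i\setminus\terminals$.

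Next I would record what the induction hypothesis buys. Since \eqref{support_set_2} holds at $k-1$, every state outside $\Omega\define\texttt{d-support}(k-1)$ has distance strictly below $D$ at all steps $\ge k-1$, so $M_j\subseteq\Omega$ for every $j\ge k-1$. Let $W\define\Omega\setminus\terminals$. The crux is then to produce a ``leaky'' state: some $s^*\in\Omega$ for which \emph{no} action keeps the whole successor support inside $W$. I would obtain this by contradiction via the ELP graph property. If instead every $s\in W$ admitted an action routing its support into $W$, then the policy $\mu$ that plays these actions on $W$ would make $W$ absorbing, $\mathcal{C}_\mu(W)\subseteq W$; Proposition \ref{lem:closure} would force $\SS_\mu\subseteq W\subseteq\SS\setminus\terminals$, i.e.\ $\mu$ never visits a terminal state. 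This contradicts the defining ELP property $\E_{\mu}[T]<\infty$, which makes termination almost sure and hence $\SS_\mu\cap\terminals\neq\emptyset$. The degenerate case $W=\emptyset$ (i.e.\ $\Omega\subseteq\terminals$) is even easier: then $M_{i-1}\setminus\terminals=\emptyset$ forces $M_i=\emptyset$ for all $i\ge k$, so every state of $\Omega$ is already removable.

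Finally I would check that this $s^*$ is exactly what the lemma asks for. Fix any $i\ge k$: were $s^*\in M_i$, the sharpened bound from the first step would hand me an action whose successor support lies in $M_{i-1}\setminus\terminals$; but $i-1\ge k-1$ gives $M_{i-1}\subseteq\Omega$, so that support sits inside $\Omega\setminus\terminals=W$, contradicting the choice of $s^*$. Hence $s^*\notin M_i$, i.e.\ $d_{s^*}\big((\Bopt[\ga])^iQ_1,(\Bopt[\ga])^iQ_2\big)<D$, for every $i\ge k$, which is the claim; the construction then sets $\texttt{d-support}(k)=\Omega\setminus\{s^*\}$, a proper subset still satisfying \eqref{support_set_2}. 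The main obstacle, and the only place the ELP hypothesis truly enters, is the middle paragraph: converting the purely metric ``distance stays maximal'' dynamics into a would-be terminal-avoiding absorbing set, so that Proposition \ref{lem:closure} can be invoked. Everything else is the equality analysis of \eqref{distance_bound} together with bookkeeping of the induction hypothesis.
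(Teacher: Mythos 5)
Your proof is correct, and while it rests on the same two pillars as the paper's --- the per-state non-expansive bound extracted from \eqref{distance_bound} and the absorbing-set property of Proposition \ref{lem:closure} --- it organizes them in a genuinely different way. The paper fixes the maximizing action $a_{\max}(s)$ in the recursive bound and argues by cases: either some support state leaks outside the support set under $a_{\max}$ (Case 1, strictness following directly from \eqref{support_set_2} at $k-1$), or the support set is closed under the induced policy, in which case Proposition \ref{lem:closure} places a reachable terminal state $s_\perp$ \emph{inside} the absorbing set and strictness is extracted numerically from the factor $\ga(s_\perp)<1$ at a state feeding $s_\perp$ (Case 2). You instead run the equality analysis first: a state can remain in the tight set $M_{i+1}$ only via an action whose entire successor support consists of tight states with $\ga(s')=1$, which --- since $\ga<1$ on $\terminals$ --- are automatically non-terminal. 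This folds the discounting hypothesis into the combinatorics, so your candidate absorbing set $W=\Omega\setminus\terminals$ is terminal-free by construction, and the contradiction obtained from Proposition \ref{lem:closure} is qualitative ($\SS_\mu\subseteq W$ would make termination impossible, violating $\E[\mu][T]<\infty$) rather than quantitative. Your route buys two things: the conclusion for all $i\geq k$ falls out uniformly in one stroke (the paper needs a closing remark that ``the same proof idea applies to all $i>k$''), and the degenerate case $\Omega\subseteq\terminals$ is handled explicitly, whereas in the paper it is silently absorbed into Case 2. One bookkeeping remark: you locate $s^*$ in $\Omega=\texttt{d-support}(k-1)$ and set $\texttt{d-support}(k)=\Omega\setminus\{s^*\}$; this matches the paper's construction paragraph and is the consistent reading of the lemma, whose statement taken literally places $s^*$ in $\texttt{d-support}(k)$ --- an off-by-one in the paper's own indexing, not a gap in your argument.
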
 \addtocounter{theorem}{10}
\begin{proof}
We first refactor \eqref{distance_bound} a little bit, which actually holds in a per-state sense, so
\eqm[ds_bound]{
d_s(\B^\ga Q_1, \B^\ga Q_2) 
&\leq \max_{a\in\AS}~ \sum_{s'\in\SS} P(s'|s,a) \* \ga(s') \* d_{s'}(Q_1,Q_2) \\
&\leq \max_{s'\in\SS}~ d_{s'}(Q_1,Q_2) ~=~ d(Q_1,Q_2)
.}
Recursively applying \eqref{ds_bound} gives
\eqm[ds_bound_1]{
d_s \Big( (\mathcal{B}^\gamma)^k Q_1 , (\mathcal{B}^\gamma)^k Q_2 \Big) 
&\leq 
\max_{a\in\AS}~ \sum_{s'\in\SS} P(s'|s,a) \* \ga(s') \*
d_{s'} \Big( (\mathcal{B}^\gamma)^{k-1} Q_1 , (\mathcal{B}^\gamma)^{k-1} Q_2 \Big) 
\\ &=
\sum_{s'\in\SS} P(s'|s,a_{\max}(s)) \* \ga(s') \*
d_{s'} \Big( (\mathcal{B}^\gamma)^{k-1} Q_1 , (\mathcal{B}^\gamma)^{k-1} Q_2 \Big)
\\ &\leq
\sum_{s'\in\SS} P(s'|s,a_{\max}(s)) \* \ga(s') \* d(Q_1,Q_2) \leq d(Q_1,Q_2) 
}
where $a_{\max}(s) \define \arg\max_a ~ \sum_{s'\in\SS} P(s'|s,a) \* \ga(s') \*
d_{s'} \Big( (\mathcal{B}^\gamma)^{k-1} Q_1 , (\mathcal{B}^\gamma)^{k-1} Q_2 \Big)$.

Now we prove that the inequality \eqref{ds_bound_1} must be strict for some support-state $s^* \in \texttt{d-support}(k)$. In particular, we will prove
\eq[ds_bound_2]{
\sum_{s'\in\SS} P(s'|s^*,a_{\max}(s^*)) \* \ga(s') \*
d_{s'} \Big( (\mathcal{B}^\gamma)^{k-1} Q_1 , (\mathcal{B}^\gamma)^{k-1} Q_2 \Big) < d(Q_1,Q_2)
}

\textbf{Case $1$}: There is an $s^*\in \texttt{d-support}(k)$ with $\sum_{s' \in \texttt{d-support}(k)} \T \big( s'|s^*,a_{\max}(s^*) \big) < 1$.
In this case it's possible to go from $s^*$ to some $s'$ outside the subset of  $\texttt{d-support}(k)$. For such $s'$ we have 
$\gamma(s') \* d_{s'} \Big( (\mathcal{B}^\gamma)^{k-1} Q_1 , (\mathcal{B}^\gamma)^{k-1} Q_2 \Big) < d(Q_1,Q_2)$
(because \eqref{support_set_2} holds for $k-1$ as assumed), thus
\eqm{
&
\sum_{s'\in\SS} P(s'|s^*,a_{\max}) \* \ga(s') \*
d_{s'} \Big( (\mathcal{B}^\gamma)^{k-1} Q_1 , (\mathcal{B}^\gamma)^{k-1} Q_2 \Big) 
\\ <~ &
\sum_{s' \not\in \texttt{d-support}(k)} P(s'|s^*,a_{\max}) \* d(Q_1,Q_2) ~+~
\\ & 
\sum_{s' \in \texttt{d-support}(k)} P(s'|s^*,a_{\max}) \* \gamma(s') \* 
d_{s'} \Big( (\mathcal{B}^\gamma)^{k-1} Q_1 , (\mathcal{B}^\gamma)^{k-1} Q_2 \Big) 
\\ \leq~ &
\sum_{s' \not\in \texttt{d-support}(k)} P(s'|s^*,a_{\max}) \* d(Q_1,Q_2) + \sum_{s' \in \texttt{d-support}(k)} P(s'|s^*,a_{\max}) \* d(Q_1,Q_2)
\\ =~ &
d(Q_1,Q_2)
.} 

\textbf{Case $2$}: For all $s\in \texttt{d-support}(Q_1,Q_2)$,~ $\sum_{s' \in \texttt{d-support}(Q_1,Q_2)} \T \big( s'|s,a_{\max}(s) \big) = 1$. This is equivalent to say that there exists a policy -- which would choose $a_{\max}(s)$ under the corresponding $s$ -- such that it is \emph{impossible} to move from any state in $\texttt{d-support}(k)$ to a state outside $\texttt{d-support}(k)$ under this policy. In other words, let $\mu$ be such a policy, we have 
\eq[]{
\mathcal{C}_\mu \Big( \texttt{d-support}(k) \Big) \subseteq \texttt{d-support}(k)
} 
which, by Proposition \ref{lem:closure}, entails 
\eq[s_mu_in_d_support]{
\SS_\mu \subseteq \texttt{d-support}(k)
,}
\eqref{s_mu_in_d_support} literally says that the set of support-states $\texttt{d-support}(k)$, as an absorbing subset under $\mu$ as assumed in case 2, must contain all reachable states under $\mu$. By the definition of ELP, these $\mu$-reachable states must in turn contain at least one terminal state (otherwise we would not have finite episode under $\mu$ at all). Let $s_\perp \in \SS_\mu \subseteq \texttt{d-support}(k)$ be such a reachable terminal state under $\mu$. Since $s_\perp$ is reachable under $\mu$ (and $\mu$ chooses $a_{\max}(s)$ under each $s$), there must also be an $s^*\in\SS_\mu$ such that $\T(s_\perp|s^*,a_{\max}(s^*)) > 0$. Because $\gamma(s_\perp) < 1$ as assumed as the general condition of Theorem \ref{thm:Qstar}, we have 
\eqm{
&
\sum_{s'\in\SS} P(s'|s^*,a_{\max}) \* \ga(s') \*
d_{s'} \Big( (\mathcal{B}^\gamma)^{k-1} Q_1 , (\mathcal{B}^\gamma)^{k-1} Q_2 \Big) 
\\ \leq~ &
\sum_{s'\in\SS} P(s'|s^*,a_{\max}) \* \ga(s') \* d(Q_1,Q_2)
\\ =~&
\Big(~ P(s_\perp|s^*,a_{\max}) \* \ga(s_\perp) + 
\sum_{s'\in \SS \setminus \{s_\perp\}} P(s'|s^*,a_{\max}) \* \ga(s') ~\Big) \* d(Q_1,Q_2)
\\ <~&
\Big(~ P(s_\perp|s^*,a_{\max}) + 
\sum_{s'\in \SS \setminus \{s_\perp\}} P(s'|s^*,a_{\max}) \* \ga(s') ~\Big) \* d(Q_1,Q_2)
\\ \leq~&
\Big(~ P(s_\perp|s^*,a_{\max}) + 
\sum_{s'\in \SS \setminus \{s_\perp\}} P(s'|s^*,a_{\max}) ~\Big) \* d(Q_1,Q_2)
\\ =~&
d(Q_1,Q_2)
.} 

Now we have proved that $\exists s^* \in \texttt{d-support}(k)$,~ 
$d_{s^*} \Big( (\mathcal{B}^\gamma)^k Q_1 , (\mathcal{B}^\gamma)^k Q_2 \Big) < d(Q_1,Q_2)$. It is straightforward to verify that the same proof idea applies to all $i>k$ too (in case 1, we still have $\gamma(s') \* d_{s'} \Big( (\mathcal{B}^\gamma)^{i-1} Q_1 , (\mathcal{B}^\gamma)^{i-1} Q_2 \Big) < d(Q_1,Q_2)$
because \eqref{support_set_2} holds for $k-1$; in case 2, the existence of $s_\perp$ is a graph property that is independent of how many times $\B^\ga$ is applied).
\end{proof}

With Lemma \ref{prop:progress}, we can construct each subset $\texttt{d-support}(k)$ in \eqref{support_set_1} by removing the $s^*$ from $\texttt{d-support}(k-1)$. It's clear that \eqref{support_set_2} will hold for the sequence of support subsets thus constructed. In particular, note that Lemma \ref{prop:progress} holds for $k=0$ without the inductive condition (that \eqref{support_set_2} holds for $k-1$) because in this case it's impossible to go outside $\texttt{d-support}(0) = \SS$ as in Case 1, so only Case 2 is possible (and in this case the proof does not need the inductive condition).

Now, \eqref{support_set_1} implies that $\texttt{d-support}(|\SS|)$ is empty set. Substituting this observation into \eqref{support_set_2}, yields 
$d_s \Big( (\mathcal{B}^\gamma)^{|\SS|} Q_1 , (\mathcal{B}^\gamma)^{|\SS|} Q_2 \Big) < d(Q_1,Q_2)$ for all $s\in\SS$, which means 
$d \Big( (\mathcal{B}^\gamma)^{|\SS|} Q_1 , (\mathcal{B}^\gamma)^{|\SS|} Q_2 \Big) < d(Q_1,Q_2)$. In other words, the existence of the d-support sequence satisfying \eqref{support_set_1} and \eqref{support_set_2} means that the composite operator of ``repeatedly applying $B^\ga$ for $|\SS|$ times'' (i.e. $(\B^\ga)^{|\SS|}$) guarantees to reduce the $L_\infty$-distance of every Q-function pair in $\Q$.

Moreover, note that the initial values of $Q_1$ and $Q_2$ do not determine how much percentage the distance between them will reduce -- the values of $Q_1$ and $Q_2$ only affect what $a_{\max}$ is in \eqref{ds_bound_2}, There are $|\mathcal{S}| \cdot |\mathcal{A}| \cdot |\mathcal{S}|$ possible transition probabilities in total, and $|\mathcal{S}|$ possible $\gamma$-values in \eqref{ds_bound_2}, so no matter how $a_{\max}$ (which is a policy) and $\arg\max \mathbf{d}$ change over iterations, they just select a different subset from the $|\mathcal{S}|^3 \cdot |\mathcal{A}|$ possible terms. There are a (possibly very large yet) finite number of such subsets, thus when the sum of the subset is less than $1$, there must be an absolute upper bound $1-r_{\min}$ for the subset sum. This upper bound ratio of distance reduction may be extremely close to $1$, especially after repeating the process for $|\SS|$ times, but still, it is a definite upper bound smaller than $1$, which is enough to make the composite operator $(\B^\ga)^{|\SS|}$ a (possibly very weak) contraction mapping, and thus admits a unique fixed point.

Our last step is to confirm that the unique fixed point of the composite operator $(\B^\ga)^{|\SS|}$ is also the \emph{unique} fixed (and limiting) point of the original Bellman operator $\B^\ga$. Clearly $\B^\ga$ cannot have two fixed points, as otherwise their distance could not get reduced by repeatedly applying $\B^\ga$ for $|\SS|$ times, so the key is to show that $\B^\ga$ does have \emph{a} fixed point. In fact, we will prove the following slightly stronger result: 
\addtocounter{theorem}{-10} 
\begin{lemma}
\label{prop_bootstrap}
Under the condition of Theorem \ref{thm:Qstar}, let $Q^*$ be the unique fixed point of $(\B^\ga)^{|\SS|}$,
\eq[bellman_limit]{
Q^* = \lim\limits_{n \ra \infty} (\B^\ga)^n Q ~,~~ \forall Q\in\mathcal{Q} 
}
and 
\eq[bellman_fp]{
Q^* = \B^\ga Q^*
}
\end{lemma}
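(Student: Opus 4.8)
The plan is to leverage the fact, established in the discussion immediately preceding this lemma, that the composite operator $\Phi \define (\B^\ga)^{|\SS|}$ is a contraction mapping on $\Q$ under the $L_\infty$-distance $d$, with some contraction factor $c \define 1-r_{\min} < 1$. By the Banach fixed-point theorem this already grants that $\Phi$ has a unique fixed point, namely the $Q^*$ named in the lemma, and that iterating $\Phi$ from any starting Q-function converges to it. The remaining task is only to transfer these two properties from $\Phi$ to the single operator $\B^\ga$, which is the standard wrap-up of a power-is-a-contraction argument.

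First I would establish the fixed-point equation \eqref{bellman_fp}. Starting from $\Phi Q^* = Q^*$, apply $\B^\ga$ to both sides and use that $\B^\ga$ commutes with its own power (both orders of composition equal $(\B^\ga)^{|\SS|+1}$), so that $\Phi(\B^\ga Q^*) = \B^\ga(\Phi Q^*) = \B^\ga Q^*$. This exhibits $\B^\ga Q^*$ as a fixed point of $\Phi$; since $\Phi$ has the \emph{unique} fixed point $Q^*$, we conclude $\B^\ga Q^* = Q^*$, which is \eqref{bellman_fp}. The same uniqueness remark also settles uniqueness for $\B^\ga$ itself: any fixed point of $\B^\ga$ is automatically a fixed point of $\Phi$, hence equals $Q^*$.

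Next I would establish the limit statement \eqref{bellman_limit}. For an arbitrary $Q\in\Q$ and index $n$, perform Euclidean division $n = m|\SS| + r$ with $0\leq r < |\SS|$, giving $(\B^\ga)^n Q = \Phi^m\big( (\B^\ga)^r Q \big)$. The contraction property, together with $\Phi^m Q^* = Q^*$, then yields $d\big( (\B^\ga)^n Q, Q^* \big) = d\big( \Phi^m((\B^\ga)^r Q), Q^* \big) \leq c^m \, d\big( (\B^\ga)^r Q, Q^* \big)$. Setting $M \define \max_{0\leq r<|\SS|} d\big( (\B^\ga)^r Q, Q^* \big)$, a finite constant depending only on $Q$, produces the uniform bound $d\big( (\B^\ga)^n Q, Q^* \big) \leq c^m M$ with $m = \floor{n/|\SS|}$. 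As $n\ra\infty$ we have $m\ra\infty$, hence $c^m M\ra 0$, and therefore $(\B^\ga)^n Q \ra Q^*$.

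The nearest thing to an obstacle lies in this last argument: since the residue $r$ cycles through $0,\dots,|\SS|-1$ as $n$ grows, the starting point $(\B^\ga)^r Q$ of the contraction is not fixed along the sequence, so a single application of Banach convergence does not by itself close the claim. Passing to the finite maximum $M$ over the finitely many residues is precisely what handles all $|\SS|$ interleaved subsequences uniformly and forces convergence of the whole sequence. Since all of the genuinely substantive work — monotonicity, the ELP graph property of Proposition \ref{lem:closure}, and the eventual contractivity of $\Phi$ proved via Lemma \ref{prop:progress} — is already in hand, this lemma reduces to the bookkeeping sketched above.
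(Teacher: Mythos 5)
Your proposal is correct, but it takes a genuinely different (and more classical) route than the paper. For the fixed-point identity \eqref{bellman_fp} you use commutation of powers --- $\B^\ga Q^*$ is itself a fixed point of $\Phi=(\B^\ga)^{|\SS|}$, hence equals $Q^*$ by Banach uniqueness --- and for \eqref{bellman_limit} you use Euclidean division $n=m|\SS|+r$ together with the finite maximum $M$ over the $|\SS|$ residues; this is the textbook ``a map whose power is a contraction'' lemma, and every ingredient you invoke (the uniform contraction factor $1-r_{\min}$ for $\Phi$, finiteness of the distances in a finite ELP) is indeed available from the discussion preceding the lemma. The paper instead bootstraps from order structure: it first proves convergence only for special initial functions $Q^-$ satisfying $Q^-\leq \B^\ga Q^-$ (every on-policy value function is such a $Q^-$), using monotonicity of $\B^\ga$ (Proposition \ref{prop:monotonic}) so that the iterates increase toward $Q^*$ and the contracting subsequence drags the whole monotone sequence along; it then obtains $\B^\ga Q^*=Q^*$ by exchanging $\B^\ga$ with the limit (implicitly relying on the non-expansiveness \eqref{distance_bound} for continuity), and finally extends to arbitrary $Q\in\Q$ by noting $d(\B^\ga Q, Q^*)\leq d(Q,Q^*)$ (the paper's displayed inequality has an evident sign typo here), so that distances to $Q^*$ are non-increasing and the convergent subsequence again forces convergence of the full sequence. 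Your argument is shorter and more self-contained: it needs neither monotonicity, nor the existence of $Q^-$, nor a continuity/limit-interchange step, and your explicit treatment of the interleaved residue subsequences via $M$ is exactly the difficulty the paper's ``non-increasing distances'' device is designed to handle. What the paper's route buys is the monotone machinery itself (in particular $Q\leq\B^\ga Q\Rightarrow Q\leq Q^*$), which it reuses downstream, e.g.\ in the optimality proof of Theorem \ref{thm:Qstar}(3) and in Lemmas \ref{lem:Qstar_minimax} and \ref{lem:Qstar_maximin}; strictly for the present lemma, your proof is the cleaner one.
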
 \addtocounter{theorem}{10}
\begin{proof}
For brevity we use $\B$ to denote the generalized Bellman operator in this proof. 

We first prove that \eqref{bellman_limit} holds for a special family of Q-functions, that is, for all Q-functions $Q^-$ with $Q^- \leq \B Q^-$.~\footnote{Such $Q^-$ guarantees to exist. In fact, every \emph{on-policy value function} is such a $Q^-$; see \ref{sec:proof_optimality} for details.} Specifically, because $\B$ is monotonic operator, for any such $Q^-$ we have 
\eq[q_sequence]{
Q^- \leq \B Q^- \leq (\B)^2 Q^- \dots \leq 
(\B)^{|S|} Q^- \dots \leq 
(\B)^{2\cdot |S|} Q^- \dots \leq 
Q^*
}
where $Q^*$ is an supremum (but not necessarily the limit) of the sequence above because of the contraction property of $\B^{|\SS|}$.
Recall that $Q_1\leq Q_2$ means $\forall s,a~~ Q_1(s,a)\leq Q_2(s,a)$ 
, so \eqref{q_sequence} implies that the Q-functions in it must have non-increasing distances to $Q^*$.
On the other hand, as the subsequence $Q^-,~ (\B)^{|\SS|} Q^-,~ (\B)^{2\* |\SS|} Q^-,~ (\B)^{3\* |\SS|} Q^-, \dots$ converges to $Q^*$, we know that for any $\epsilon > 0$, there exists an $i^*$ such that $d\Big( (\B)^{i^* \* |\SS|} Q^-, Q^* \Big) < \epsilon$, and thus $d \Big( (\B)^{i} Q^-, Q^* \Big) < \epsilon$ for all integer $i > i^*$ due to the monotonicity, which literally means that the overall sequence \eqref{q_sequence} also converges to $Q^*$.

Now we have obtained a Q-function (i.e. $Q^-$) starting from which (and only from which, for now) repeatedly applying $\Bopt$ (rather than $(\B)^{|\SS|}$) will converge to $Q^*$. In other words, we have $Q^*=\lim\limits_{n \ra \infty} (\B)^n Q^-$. As a result, we must also have $\B Q^*= \B \lim\limits_{n \ra \infty} (\B)^n Q^- = \lim\limits_{n \ra \infty} (\B)^n Q^- = Q^*$, thus we have proved $\B Q^* = Q^*$ (i.e. $Q^*$ is \emph{a} fixed point of $\B$).

The final step of the proof for Lemma \ref{prop_bootstrap} is to show that $\Bopt$ has a limiting point regardless of the initial Q-function (previously we have only proved this for the special initial Q-function $Q^-$): For any $Q\in \mathcal{Q}$, because $Q^*=\B Q^*$, we have $d(\B Q, Q^*) = d(\B Q, \B Q^*) \geq d(Q, Q^*)$, where the inequality is by \eqref{distance_bound}. Again, $d(\B Q, Q^*) \geq d(Q, Q^*)$ means the  sequence \eqref{q_sequence} has non-increasing distances to $Q^*$, but this time for all $Q\in\Q$. So, by the same logic as before, $Q^*$ must be the limit of the sequence $Q, \B Q, (\B)^2 Q, \dots$, for all $Q\in \mathcal{Q}$.
\end{proof}

The specific form of the $\ga$-function does not play a role in the proof of Lemma \ref{prop_bootstrap} as presented above, so the conclusion of the lemma -- i.e. \eqref{bellman_limit} and \eqref{bellman_fp} -- apply to all $\B^\ga$ that comply with the condition of Theorem \ref{thm:Qstar}, which marks the completion of the proof for statement (1) and (2) in Theorem \ref{thm:Qstar}.
\end{proof}

In comparison, the classic Bellman optimality property requires $\gamma(s)<1$ at \emph{every} state $s\in \SS$, while Theorem \ref{thm:Qstar} only requires $\gamma(s)<1$ at terminal states. On the other hand, the classic result applies to all MDPs, while Theorem \ref{thm:Qstar} is fundamentally based on unique structures of episodic learning process. Importantly, the episodic discounting function that we use in ELP -- i.e. \eqref{gamma} -- does satisfy the condition of Theorem \ref{thm:Qstar}.

\subsection{Proof of Theorem \ref{thm:Qstar} (3)} 
\label{sec:proof_optimality}
Now we prove statement (3) of Theorem \ref{thm:Qstar} which asserts that $Q^*$, as the unique fixed point of the Bellman operator under episodic discounting, is indeed an optimal Q-function: 

\begin{recap}[Theorem \ref{thm:Qstar} (3)]
In any finite ELP, let $Q^*$ be the fixed point of $\Bopt$ (i.e. the solution of \eqref{boe}), let $\pi^*$ be a policy such that $\pi^*(a|s) > 0$ only if $Q^*(s,a) = \max_{\bar{a}} Q^*(s,\bar{a})$, then $J(\pi^*) = \max_{\pi} J(\pi)$, where $J$ is the episodic-reward objective \eqref{objective}.
\end{recap}

\begin{proof}
Every policy $\pi$ is coupled with a (unique) \textbf{on-policy value function} $Q_\pi$ which is a special Q-function that assigns Q-values according to the conditional expectations of the episode-wise total reward under the policy $\pi$: 
\eq[on_policy_value]{
Q_\pi(s,a) \define \Exp[\{S_t,A_t\}\sim \pi]{ \sum_{t=1}^T R(S_t) \Big| S_0=s,A_0=a } \quad,\quad \forall (s,a)\in \SS\times\AS	
}
Comparing \eqref{on_policy_value} with the definition of the episodic-reward objective $J$ (i.e. with \eqref{objective}), and with the ELP conditions, one can find that for any policy $\pi$, its performance score equals its on-policy value \emph{at terminal states} (and all terminal states must have the same on-policy value):
\eq[terminal_value]{
J(\pi) = Q_\pi(s_\perp,a) \quad,\quad \forall s_\perp\in\terminals ~,~ \forall a\in\AS
.} 
In the following we will prove that for the $Q^*$-greedy policy $\pi^*$, we have $Q_{\pi^*} \geq Q_\pi$ for all $\pi$, which by \eqref{terminal_value} entails that $J(\pi^*) \geq J(\pi)$.

First observe that $Q_{\pi^*} = Q^*$, that is, the Q-function $Q^*$ is the on-policy value function of the greedy policy induced by itself. Specifically, with the episodic $\ga$-function \eqref{gamma}, for any $(s,a)$, by the definition of $\pi^*$ we have
\begin{align}
Q^*(s,a) 
&= \B Q^*(s,a) 	\nonumber\\
&= \E[S'\sim P(s,a)]~ \max_{a'}~~ \Big[ R(S') + \ga(S') \* Q^*(S',a') \Big] 	\nonumber\\
&= \E[S'\sim P(s,a)]~ \Exp[A'\sim \pi^*(S')]{R(S') + \ga(S') \* Q^*(S',A')}		\label{on_policy_value_1}\\
&= \Exp[\{S_1,A_1\}\sim \pi^*]{ R(S_1) + \ga(S_1) \* Q^*(S_1,A_1) \Big| S_0=s,A_0=a}		\nonumber\\
&= \Exp[\{S_1,A_1,S_2,A_2\}\sim \pi^*]{ R(S_1) +\ga(S_1) R(S_2) + \ga(S_1)\ga(S_2) \* Q^*(S_2,A_2) ~\Big|~ S_0=s,A_0=a} \nonumber\\
&= \Exp[\{S_t,A_t\}\sim \pi^*]{ \sum_{t=1}^T R(S_t) \Big| S_0=s,A_0=a }	\nonumber\\
&= Q_{\pi^*} (s,a) \label{on_policy_value_2}
\end{align}

In above, the equations from \eqref{on_policy_value_1} to \eqref{on_policy_value_2} apply to not only $\pi^*$ but to any policy $\pi$ too. This means 
\begin{align*}
Q_\pi(s,a) 
&=~~ 	\E[S'\sim P(s,a)]~ \Exp[A'\sim \pi(S')]{R(S') + \ga(S') \* Q_\pi(S',A')} \\
&\leq~~ \E[S'\sim P(s,a)]~ \max_{a'}~~ \Big[ R(S') + \ga(S') \* Q_\pi(S',a') \Big] \\
&= \Bopt Q_\pi(s,a)	
\end{align*}
So, for any policy $\pi$, we have $Q_\pi \leq \Bopt Q_\pi$. 
In other words, $\{Q : \exists \pi, Q = Q_\pi\}$, the set of all on-policy value functions, is a subset of the set $\{Q : Q \leq \Bopt Q\}$.

Now we have known that $Q^*$ is a maximum Q-function of the set $\{Q \leq \Bopt Q\}$, and that this set contains the set of on-policy value functions as a subset. Because $Q^*$ itself is an on-policy value function (as $Q^* = Q_{\pi^*}$), it follows that $Q^*$ must also be a maximum Q-function of the subset $\{Q_\pi\}$. Thus we have proved that $Q_{\pi^*} = Q^* \geq Q_\pi$ for all $\pi$, as desired. 
\end{proof}


\section{Proofs of the Nonlinear Lagrangian Duality (Section \ref{sec:lagrangian})}
\label{sec:proof}

\subsection{Proof of Lemma \ref{lem:Qstar_minimax}}
\label{sec:proof_Qstar_minimax}

\begin{recap}[Lemma \ref{lem:Qstar_minimax}]
In any finite ELP, for any conjugate policy $\pi$,~ $Q^* \in \arg \displaystyle{\min_{Q\in\Q}~ \max_{\mul\geq 0}~} \Lagr_\pi(Q, \mul)$.
\end{recap}

\begin{proof}
Because $\B$ is monotonic (Proposition \ref{prop:monotonic}), for any $Q$ with $Q\geq \Bopt Q$ we have 
$Q\geq \Bopt Q \geq (\Bopt)^2 Q \dots \geq Q^*$, thus $Q^* \leq Q$ for all $Q\in\Q$. On the other hand, the objective function of the variational problem \eqref{minQ} is a probabilistic average over the Q-values at terminal states/actions, which must attain its minimum at $Q^*$ because $Q^*$ is per-state-action minimal. In other words, $Q^*$ is an optimal solution of the variational problem \eqref{minQ}.

By standard Lagrangian duality theory, a Q-function is an optimal solution of \eqref{minQ} if and only if it is a minimax solution of the Lagrangian \eqref{lagrangian}. This is because only Q-functions with $Q\geq\Bopt Q$ can prevent $\max_{\mul\geq 0} \Lagr_\pi(Q,\mul)$ from being tuned arbitrarily large (by $\mul$), and for those $Q$'s that satisfy the constraint (i.e. the complementary slackness condition), the second term in the Lagrangian would equal zero, rendering $\max_{\mul\geq 0} \Lagr_\pi(Q,\mul) = \Exp[\zeta\sim\pi]{Q(S_T,A_T)}$, which attains its minimum at $Q^*$ as just proved.
\end{proof}

\subsection{Proof of Lemma \ref{lem:lagr_dual}}
\label{sec:proof_lagr_dual}

\begin{recap}[Lemma \ref{lem:lagr_dual}]
In any finite ELP, let $\Lagr_\pi$ be the Lagrangian with conjugate policy $\pi$, and let $\mul_\pi$ be the particular Lagrangian multiplier with $\mul_\pi(s,a) = \rho_\pi(s) \* \pi(a|s) \* \E_{\pi}[T]$, where $\rho_\pi$ is the stationary distribution of $\pi$, then 
\eq{
	\Lagr_\pi (Q, \mul_\pi) = 
	J(\pi) + \sum_{s\not\in\terminals} \sum_{a\in\AS} 
	\mul_\pi(s,a) ~ \Big( \max_{\bar{a}} Q(s,\bar{a}) - Q(s,a) \Big)
}
\end{recap}

\begin{proof}
Define $f(s) \define \Exp[A\sim\pi(s)]{\indicator{s\in\terminals} \* Q(s,A)}$, by Proposition \ref{prop:change_space} we have
\eq{
\Exp[\zeta\sim \pi]{Q(S_T,A_T)} = \Exp[S_{1..T}\sim\pi]{\sum_{t=1}^T f(S_t)} = \E_{\zeta\sim\pi}[T] \* \E[S\sim\rho_\pi] \Exp[A\sim\pi(S)]{\indicator{S\in\terminals} \* Q(S,A)}
.}
So, for any $Q\in \mathcal{Q}$, we have
\eqm{
&	\Lagr_\pi(Q,\mul_\pi) \\
=&	\E_{\zeta\sim\pi}[T] \* \Exp[S,A\sim\rho_\pi]{\indicator{S\in\terminals} \* Q(S,A)} + 
	\E_{\zeta\sim\pi}[T] \* \Exp[S,A\sim\rho_\pi]{\B Q(S,A) - Q(S,A)} \\
=&  \E_{\zeta\sim\pi}[T] \* \Exp[S,A\sim\rho_\pi]{\indicator{S\in\terminals} \* Q(S,A) - Q(S,A)} + \\
&	\E_{\zeta\sim\pi}[T] \* \Exp[S,A\sim\rho_\pi]{\Exp[S'\sim\T(S,A)]{R(S') + \ga(S')\*\max_{a'}Q(S',a')}} \\
=&	- \E_{\zeta\sim\pi}[T] \* \Exp[S,A\sim\rho_\pi]{\indicator{S\not\in\terminals} \* Q(S,A)} + 
	\underline{ \E_{\zeta\sim\pi}[T] \* \E[S'\sim\rho_\pi] \Big[ R(S') } + \ga(S') \* \max_{a'} Q(S',a') \Big] \\
=& 	\underline{ \E_{\zeta\sim\pi}[T] \* \Exp[S\sim\rho_\pi]{R(S)} } + 
	\E_{\zeta\sim\pi}[T] \* \Exp[S,A\sim\rho_\pi]{
		\indicator{S\not\in\terminals} \* \Big( \max_a Q(S,a) - Q(S,A) \Big)
	} \\
=& 	\underline{ J(\pi) } + \E_{\zeta\sim\pi}[T] \* \sum_{s\in\SS,a\in\AS} 
	\rho_\pi(s) \* \pi(a|s) \* \indicator{s\not\in\terminals} \* \Big( \max_{\bar{a}} Q(s,\bar{a}) - Q(s,a) \Big) \\
=& 	J(\pi) + \E_{\zeta\sim\pi}[T] \* \sum_{s\in\SS \setminus \terminals}~ \sum_{a\in\AS}~ \rho_\pi(s) \*
	\pi(a|s) \* \Big( \max_{\bar{a}} Q(s,\bar{a}) - Q(s,a) \Big)
}
In above, $\E_{\zeta\sim\pi}[T] \* \Exp[S\sim\rho_\pi]{R(S)} = J(\pi)$ is obtained by applying the transformation of Proposition \ref{prop:change_space} again, this time with $f(s) \define R(s)$.
\end{proof}

\subsection{Proof of Theorem \ref{thm:elp_minimax}}
\label{sec:proof_elp_minimax}

\begin{recap}[ELP Minimax Theorem]
In any finite ELP $(\SS,\AS,\T,\R,\rho)$, if $\mu$ is an optimal policy, then its conjugate Lagrangian $\Lagr_\mu$ has strong duality property, for which
\eq{
\min_{Q\in\Q}~ \max_{\mul\geq 0}~ \Lagr_{\mu} (Q,\mul) = \max_{\mul\geq 0}~ \min_{Q\in\Q}~ \Lagr_{\mu} (Q,\mul) = J(\mu)
}
\end{recap}

\begin{proof}
Let $\pi^*$ be a $Q^*$-greedy policy, which is thus an optimal policy. 
For any conjugate policy $\pi$, since $Q^*$ is a minimax solution of $\Lagr_\pi$ (Lemma \ref{lem:Qstar_minimax}), we have 
\eq{
	\min_{Q} \max_{\mul\geq 0} \Lagr_\pi(Q,\mul) 
= 	\max_{\mul\geq 0} \Lagr_\pi(Q^*, \mul) 
= 	\E[\zeta\sim\pi] [Q^*(S_T,A_T)]
.} 
By \eqref{on_policy_value_2} in \ref{sec:proof_optimality}, we have $\E[\pi] [Q^*(S_T,A_T)] = \E[\pi] [Q_{\pi^*}(S_T,A_T)]$. By \eqref{terminal_value} in \ref{sec:proof_optimality}, we further have $\E[\pi] [Q_{\pi^*}(S_T,A_T)] = J(\pi^*)$, even for $\pi\neq \pi^*$. Connecting these equations together, gives 
\eq[minimax_L]{
\min_{Q} \max_{\mul\geq 0}~ \Lagr_\pi(Q,\mul) = J(\pi^*) \quad,~~ \forall \pi
.} 

Again, for any conjugate policy $\pi$, due to Lemma \ref{lem:lagr_dual}, the Lagrangian function has the dual form \eqref{lagr_dual} under the particular multiplier $\mul_\pi$, where \eqref{lagr_dual} is copied below for convenience of presentation:
\eq{
\Lagr_\pi(Q, \mul_\pi) = 
J(\pi) + \sum_{s\not\in\terminals} \sum_{a\in\AS} \mul_\pi(s,a) \* \Big( \max_{\bar{a}} Q(s,\bar{a}) - Q(s,a) \Big)
}
In the second term above, both $\mul_\pi(s,a)$ and $\max_{\bar{a}} Q(s,\bar{a}) - Q(s,a)$ are non-negative for any $Q$ and $\pi$, so it attains its minimum, which is zero, when $Q$ achieves complementary slackness with $\mul_\pi$. On the other hand, the first term above, i.e. $J(\pi)$, does not change with $Q$. So, the sum of the two terms, i.e. $\Lagr_{\pi}(Q,\mul_\pi)$, will attain its minimum when the second term is zero, that is, 
\eq[min_L]{
\min_{Q\in\Q}~ \Lagr_\pi(Q, \mul_\pi) = J(\pi) \quad,~~ \forall \pi
.}

Now set the conjugate policy $\pi$ in \eqref{minimax_L} and \eqref{min_L} to the optimal policy $\mu$, as assumed in the theorem, we have 
\eq{
\max_{\mul\geq 0} \min_{Q\in\Q}~ \Lagr_\mu (Q,\mul) 
\geq \min_{Q\in\Q}~ \Lagr_\mu (Q,\mul_\mu) 
= J(\mu) = J(\pi^*) 
= \min_{Q} \max_{\mul\geq 0}~ \Lagr_\mu(Q,\mul)
.}

Because of the \emph{weak minimax duality} (which universally holds for any function), we also have 
$\max_{\mul} \min_{Q} \Lagr_\mu (Q,\mul) \leq \min_{Q} \max_{\mul} \Lagr_\mu (Q,\mul)$, which means the above inequality must actually be an equality, as desired.
\end{proof}

\subsection{Proof of Proposition \ref{prop:eq_condition}}
\label{sec:proof_eq_condition}

\begin{recap}[Proposition \ref{prop:eq_condition}]
Given a finite ELP, for any Q-function $Q$ and any policy $\pi$, let $\rho_\pi(s,a) = \rho_\pi(s) ~ \pi(a|s)$ and $\mul_\pi(s,a) = \rho_\pi(s,a) ~ \E_{\pi}[T]$, we have 
\eq{
		\Lagr_{\pi} (Q, \bar{\mul}) 	
\leq 	\Lagr_{\pi} (Q, \mul_\pi)
\leq 	\Lagr_{\pi} (\bar{Q}, \mul_\pi)
\quad,\quad \forall \bar{Q}, \bar{\mul}
}
if and only if
\begin{enumerate}[label=(\theenumi)]
\item \label{eq_constraint} 
$\Bopt Q(s,a) - Q(s,a) \leq 0$ \hspace{0.95in}\quad,\quad $\forall (s,a)\in\SS\times\AS$
\item \label{eq_slack_pi} 
$\rho_\pi(s,a) \* \Big( \Bopt Q(s,a) - Q(s,a) \Big) = 0$ \;~~\quad\quad,\quad $\forall (s,a)\in\SS\times\AS$
\item \label{eq_slack_q} 
$\rho_\pi(s,a) \* \Big( \max_{\bar{a}} Q(s,\bar{a}) - Q(s,a) \Big) = 0$ \quad,\quad $\forall s\not\in\terminals, a\in\AS$
\end{enumerate}
\end{recap}

\newcommand{\Qbar}{{ \bar{Q} }}
\newcommand{\pibar}{{ \bar{\pi} }}
\newcommand{\mulbar}{{ \bar{\mul} }}

The ``if'' part is straightforward: Condition \ref{eq_constraint} and \ref{eq_slack_pi} immediately gives $\Lagr_\pi(Q,\mul_\pi) = \E[\zeta\sim\pi][Q(S_T,A_T)] = \max_{\mulbar\geq0} \Lagr_\pi(Q,\mulbar)$. On the other hand, condition \ref{eq_slack_q} means that the second term in the dual-form Lagrangian \eqref{lagr_dual} is zero, so $\Lagr_\pi(Q,\mul_\pi) = J(\pi)$. By \eqref{min_L} in \ref{sec:proof_elp_minimax}, we have $\min_{\Qbar}~ \Lagr_\pi(\Qbar, \mul_\pi) = J(\pi)$, thus $\Lagr_\pi(Q,\mul_\pi) = \min_{\Qbar} \Lagr_\pi(\Qbar,\mul_\pi)$.

Now we prove the ``only if'' part, for which we resort to the general saddle-point condition: Under \emph{given} conjugate policy $\pi$, a $(Q,\mul)$ pair is a minimax saddle-point of function $\Lagr_\pi(Q,\mul)$ if and only if 
\begin{enumerate}[label=(\roman*)]
\item \label{eq_duality} $\displaystyle{ 
	\min_{\Qbar\in\Q} \max_{\mulbar\geq0} \Lagr_\pi(\Qbar,\mulbar) = 
	\max_{\mulbar\geq0} \min_{\Qbar\in\Q} \Lagr_\pi(\Qbar,\mulbar) =
	\Lagr_\pi(Q,\mul)
}$, 
\item \label{eq_minimax} $\displaystyle{ 
	Q \in \arg\min_{\Qbar\in\Q} \max_{\mulbar\geq0} \Lagr_\pi(\Qbar,\mulbar) 
}$,
\item \label{eq_maximin} $\displaystyle{ 
	\mul \in \arg\max_{\mulbar\geq0} \min_{\Qbar\in\Q} \Lagr_\pi(\Qbar,\mulbar) 
}$.
\end{enumerate}

By condition \ref{eq_minimax}, if $(Q,\mul_\pi)$ form a minimax equilibrium, then $Q$ must be a minimax solution of $\Lagr_{\pi}$. From \ref{sec:proof_Qstar_minimax} we know that such minimax Q-function must have $Q\geq \Bopt Q$, which gives condition \ref{eq_constraint}. 

By condition \ref{eq_duality}, and by \eqref{minimax_L} in \ref{sec:proof_elp_minimax}, we have $\Lagr_\pi(Q,\mul_\pi) = \min_{\Qbar\in\Q} \max_{\mulbar\geq0} \Lagr_\pi(\Qbar,\mulbar) = J(\pi^*) = \E[\zeta\sim\pi] [Q(S_T,A_T)]$. In other words, $\sum_{s,a} \lambda_\pi(s,a) ~ \big( \Bopt Q(s,a) - Q(s,a) \big)$, as the second term in $\Lagr_\pi(Q, \mul_\pi)$, must be zero in this circumstance. Because $\lambda_\pi(s,a) = \rho_\pi(s,a)\* \E[\zeta\sim\pi][T]$, where $\E[\zeta\sim\pi][T] > 0$, and further because $\Bopt Q \leq Q$ as just proved, the only way to make the term zero is to have $\rho_\pi(s,a) \* \big( \Bopt Q(s,a) - Q(s,a) \big) = 0$ at every $(s,a)$ pair, which gives condition \ref{eq_slack_pi}.

Moreover, since $\Lagr_\pi(Q,\mul_\pi) = \min_{\Qbar} \Lagr_\pi(\Qbar,\mul_\pi)$ as assumed, and $\Lagr_\pi(Q,\mul_\pi) = J(\pi)$ due to \eqref{min_L} in \ref{sec:proof_elp_minimax}, we know that the second term in the dual form of $\Lagr_\pi(Q,\mul_\pi)$ must be zero, that is,
$\sum_{s\not\in\terminals} \sum_{a\in\AS} \rho_\pi(s,a)\* \E[\zeta\sim\pi][T] \* \big( \max_{\bar{a}} Q(s,\bar{a}) - Q(s,a) \big) = 0$. Again, because $\E[\zeta\sim\pi][T] > 0$ and $\max_{\bar{a}} Q(s,\bar{a}) - Q(s,a) \geq 0$ for all $(s,a)$, the only possibility is to have $\rho_\pi(s,a)\* \big( \max_{\bar{a}} Q(s,\bar{a}) - Q(s,a) \big) = 0$ for each $(s,a) \in \SS\setminus\terminals \times \AS$ , which gives condition \ref{eq_slack_q}.

\section{Proofs of the Minimax-Maximin Symmetry Breaking (Section \ref{sec:maximin})}

\subsection{Proof of Lemma \ref{lem:Qstar_maximin}}
\label{sec:proof_Qstar_maximin}

\begin{recap}[Lemma \ref{lem:Qstar_maximin}]
In any finite ELP, $Q^*$ is an optimal solution of
\eqm{
\max_{Q} 	\quad  \Exp[\zeta\sim \pi]{Q(S_T,A_T)}  
\quad \text{s.t.} \quad  Q(s,a) \leq  \Bopt Q(s,a) 
~~,~~ \forall (s,a)
}
for any conjugate policy $\pi$. Equivalently, $Q^* \in \arg \displaystyle{\max_{Q\in\Q}~\min_{\mul\geq 0}}~~ \Lagr_\pi(Q,\mul)$.
\end{recap}

The proof is by a symmetric argument with the one for Lemma \ref{lem:Qstar_minimax} (see \ref{sec:proof_Qstar_minimax}): Because $\B$ is monotonic, for $Q$ with $Q\leq \B Q$ we have $Q\leq \Bopt Q \leq (\Bopt)^2 Q \dots \leq Q^*$, thus $Q^*$ maximizes the objective $\E[\zeta\sim\pi][Q(S_T,A_T)]$ in a per-state-action manner.

\subsection{Proof of Theorem \ref{thm:Qopt}}
\label{sec:proof_Qopt}

\begin{recap}[Theorem \ref{thm:Qopt}]
In any finite ELP, for any conjugate policy $\pi$, let $\Qmax$ be an maximin Q-function with respect to the Lagrangian $\Lagr_\pi$, -- i.e. let $\Qmax$ be an optimal solution of \eqref{maxQ} -- then $\Qmax$ is an optimal Q-function, in the sense that $\Qmax$-greedy policy maximizes the total-reward objective \eqref{objective}.
\end{recap}

\newcommand{\abar}{{ \bar{a} }}
\newcommand{\SAnext}{(\SS\times\AS)_{\text{next}}}

\begin{proof}
First observe that 
\eq[Qmax_leq_Qstar]{
\Qmax(s,a) \leq Q^*(s,a) \quad,~~ \forall (s,a)\in\SS\times \AS
} 
which is because $\Qmax$, as a feasible solution of \eqref{maxQ}, has $\Qmax \leq \Bopt \Qmax \leq \Bopt\Bopt \Qmax \dots \leq Q^*$. 
Let $\mu$ be a $\Qmax$-greedy policy, so $\mu(a|s) > 0$ only if $\Qmax(s,a) = \max_{\bar{a}} \Qmax(s,\bar{a})$. Let $\SS_\mu\subseteq \SS$ be the set of states reachable by policy $\mu$. As described in the proof idea, we will focus on proving that 
\eq[Qmax_Qstar]{
\max_{a} \Qmax(s,\abar) = \max_a Q^*(s,a) \quad,\quad \forall s \in \SS_\mu \setminus \terminals
}
which would necessarily imply that 
\eq[Qmax_Qstar_argmax]{
\arg\max_a \Qmax(s,a) \subseteq \arg\max_a Q^*(s,a) \quad,\quad \forall s \in \SS_\mu \setminus \terminals
.}
Note that \eqref{Qmax_Qstar} entails \eqref{Qmax_Qstar_argmax} because, by \eqref{Qmax_leq_Qstar}, for any action $a$ sub-optimal to $Q^*$, it can only have even lower Q-value in $\Qmax$, with $\Qmax(s,a) \leq Q^*(s,a) < \max_\abar Q^*(s,\abar) = \max_\abar \Qmax(s,\abar)$, where $\Qmax(s,a) < \max_\abar \Qmax(s,\abar)$ (for the $Q^*$-suboptimal action $a$) guarantees that such an $a$ cannot be $\Qmax$-optimal either. \eqref{Qmax_Qstar_argmax} guarantees that $\mu$, as a $\Qmax$-greedy policy, will only choose $Q^*$-optimal actions at every non-terminal state it may encounter since time $t \geq 1$. Such a $\mu$ is equivalently to a $Q^*$-greedy policy, thus is also an optimal policy. Note that $\mu$'s choices on terminal states does not matter here as state-transitions in terminal steps are action-agnostic, due to the ELP condition.

Now, to prove \eqref{Qmax_Qstar}, we first prove the following induction rule:
\begin{proposition}
\label{prop:Qmax_induction}
Under the context of Theorem \ref{thm:Qopt}, let $(s,a)$ be an arbitrary state-action pair, and let 
\eq{
\SAnext \define \{ (s',a')\in\SS\times\AS :~ s'\not\in\terminals \texttt{~and~} \T(s'|s,a) \* \mu(a'|s') > 0 \}
} 
denote the set of all the non-terminal $(s',a')$ pairs that can directly follow $(s,a)$ under the $\Qmax$-greedy policy $\mu$, then
\eqm[Qmax_induction]{
&~~\Qmax(s,a) = Q^*(s,a) \\ \Rightarrow 
&~~\Qmax(s',a') = Q^*(s',a') \texttt{~~and~} \max_{\bar{a}} \Qmax(s',\bar{a}) = \max_{\bar{a}} Q^*(s',\bar{a})
~~,~~ \forall (s',a') \in \SAnext
} 
\end{proposition}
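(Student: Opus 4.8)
The plan is to leverage three facts already in hand: the feasibility of $\Qmax$ for \eqref{maxQ} (so $\Qmax \leq \Bopt \Qmax$ pointwise), the pointwise bound $\Qmax \leq Q^*$ from \eqref{Qmax_leq_Qstar}, and the monotonicity of $\Bopt$ (Proposition \ref{prop:monotonic}), together with $Q^* = \Bopt Q^*$. The strategy is to first turn the pointwise hypothesis $\Qmax(s,a) = Q^*(s,a)$ into an \emph{operator-level} equality $\Bopt \Qmax(s,a) = \Bopt Q^*(s,a)$, and then to unpack that equality termwise over the successor states.

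First I would run a squeeze at the single pair $(s,a)$. Feasibility gives $\Qmax(s,a) \leq \Bopt \Qmax(s,a)$; monotonicity applied to $\Qmax \leq Q^*$ gives $\Bopt \Qmax(s,a) \leq \Bopt Q^*(s,a) = Q^*(s,a)$. Chaining these with the hypothesis yields $Q^*(s,a) = \Qmax(s,a) \leq \Bopt \Qmax(s,a) \leq \Bopt Q^*(s,a) = Q^*(s,a)$, so every inequality is tight; in particular $\Bopt \Qmax(s,a) = \Bopt Q^*(s,a)$. Expanding both sides with the definition of $\Bopt$ under episodic discounting, the identical reward terms $R(s')$ cancel, leaving
\[
\sum_{s'} \T(s'|s,a)\, \gammaEPI(s') \Big( \max_{\bar{a}} Q^*(s',\bar{a}) - \max_{\bar{a}} \Qmax(s',\bar{a}) \Big) = 0.
\]
Since $\Qmax \leq Q^*$ forces $\max_{\bar{a}} \Qmax(s',\bar{a}) \leq \max_{\bar{a}} Q^*(s',\bar{a})$, every summand is non-negative, so a vanishing sum forces each summand to vanish. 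For any non-terminal successor $s'$ with $\T(s'|s,a) > 0$ we have $\gammaEPI(s') = 1$, hence $\max_{\bar{a}} \Qmax(s',\bar{a}) = \max_{\bar{a}} Q^*(s',\bar{a})$; this is the second conclusion, and the restriction $s'\not\in\terminals$ built into $\SAnext$ is exactly what lets $\gammaEPI$ annihilate the terminal successors.

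Finally, for $(s',a') \in \SAnext$ the action $a'$ is $\Qmax$-greedy at $s'$ (because $\mu(a'|s') > 0$ and $\mu$ is the $\Qmax$-greedy policy), so $\Qmax(s',a') = \max_{\bar{a}} \Qmax(s',\bar{a})$. Combining this with the second conclusion and the pointwise bound produces the sandwich
\[
\max_{\bar{a}} Q^*(s',\bar{a}) = \Qmax(s',a') \leq Q^*(s',a') \leq \max_{\bar{a}} Q^*(s',\bar{a}),
\]
whose two ends coincide, collapsing to $\Qmax(s',a') = Q^*(s',a')$, the first conclusion. I expect the squeeze of the opening step to be the crux: recognizing that feasibility ($\Qmax \leq \Bopt\Qmax$) and monotonicity jointly pin $\Bopt\Qmax(s,a)$ exactly between $\Qmax(s,a)$ and $Q^*(s,a)$ is what upgrades the scalar hypothesis into the operator equality that the rest of the argument consumes. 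Once that equality is in hand, the passage to a termwise conclusion (via the sign of each summand) and the concluding sandwich are routine consequences of $\Qmax \leq Q^*$ and the greedy choice of $\mu$.
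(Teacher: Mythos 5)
Your proposal is correct and follows essentially the same route as the paper's proof: the paper likewise chains $\Qmax(s,a) \leq \Bopt \Qmax(s,a) \leq \Bopt Q^*(s,a) = Q^*(s,a)$ (with the middle step being exactly your monotonicity application, written out termwise), forces both inequalities tight from the hypothesis, extracts the per-successor equality of maxima from the vanishing non-negative sum over non-terminal successors with positive transition probability, and finishes with the identical sandwich $\max_{\bar{a}} \Qmax(s',\bar{a}) = \Qmax(s',a') \leq Q^*(s',a') \leq \max_{\bar{a}} Q^*(s',\bar{a})$ using the greediness of $a'$ under $\Qmax$. The only difference is presentational — you phrase the tightness as an operator-level equality $\Bopt\Qmax(s,a) = \Bopt Q^*(s,a)$ before unpacking it, whereas the paper argues directly on the expanded inequality chain — but the mathematical content is the same.
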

\begin{proof}
Because $\Qmax \leq Q^*$ and $\Qmax \leq \Bopt \Qmax$, we have
\eqm[Qopt_1]{
\Qopt(s, a) 
&\leq \B \Qopt(s, a) \\
&= \E_{S'}[R(S')] + \sum_{s'\in\SS} P(s'|s,a) \* \gamma(s') \* \max_{\abar\in\AS}~ \Qopt(s',\abar) \\
&\leq \E_{S'}[R(S')] + \sum_{s'\in\SS} P(s'|s,a) \* \gamma(s') \* \max_{\abar\in\AS}~ Q^*(s',\abar) \\
&= \Bopt Q^*(s, a) \\
&= Q^*(s, a) 
}
The premise $\Qmax(s,a) = Q^*(s,a)$ in the induction rule \eqref{Qmax_induction} entails that the two inequality signs in above must be equality, among which the second one -- i.e. the one leading \eqref{Qopt_1} -- can be equality only if 
\eq[Qopt_3]{
\sum_{s'\in\SAnext} P(s'|s,a) \* \max_{\abar\in\AS}~ \Qopt(s',\abar) 
= \sum_{s'\in\SAnext} P(s'|s,a) \* \max_{\abar\in\AS}~ Q^*(s',\abar)
}
where $s'\in\SAnext$ is a slight abuse of notation which means $s'$ shows up in $\SAnext$ (in the form of a $(s',a')$ pair, with some $a'$), or equivalently, $s'\in\SAnext$ means that $s'\not\in\terminals$ and $\T(s'|s,a)>0$).

Now observe that for \eqref{Qopt_3} to hold, the only possibility is that
\eq[Qopt_4]{
\max_{\abar\in\AS}~ \Qopt(s',\abar) = \max_{\abar\in\AS}~ Q^*(s',\abar) \quad,\quad \forall s'\in\SAnext
} 
as otherwise for those $s'$ on which \eqref{Qopt_4} do not hold, it can only be $\max_{\abar\in\AS}~ \Qopt(s',\abar) < \max_{\abar\in\AS}~ Q^*(s',\abar)$ (because $\Qopt \leq Q^*$); those $s'$ must all have positive weights in \eqref{Qopt_3} (by definition of $\SAnext$), and thus will cause a real loss at the LHS of \eqref{Qopt_3} (and importantly, no other state in $\SAnext$ could claim a ``gain'' to compensate this loss, again because $\Qopt \leq Q^*$). \eqref{Qopt_4} is exactly the second consequence in the induction rule \eqref{Qmax_induction}.

Next, to prove $\Qmax(s',a') = Q^*(s',a')$ for all $(s',a')\in \SAnext$, the first consequence in the induction rule \eqref{Qmax_induction}), we notice that for any of such $(s',a')$ we have
\eq[Qopt_5]{
\max_{\abar\in\AS}~ \Qopt(s',\abar) = \Qopt(s',a') \leq Q^*(s',a') \leq \max_{\abar\in\AS}~ Q^*(s',\abar)
}
in which $\max_{\abar\in\AS}~ \Qopt(s',\abar) = \Qopt(s',a')$ is because $a'$ is by definition a $\Qmax$-greedy action under $s'$, and $\Qopt(s',a') \leq Q^*(s',a')$ is (once again) because $\Qopt \leq Q^*$.
 
By \eqref{Qopt_4} we know that the two ends of \eqref{Qopt_5} actually equal to each other, so the inequalities in between must also be equality, and in particular $\Qopt(s',a') = Q^*(s',a')$, as desired.
\end{proof}

Proposition \ref{prop:Qmax_induction} enables us to prove \eqref{Qmax_Qstar} by induction (which is enough to prove the whole theorem, as argued above). Specifically, because both $\Qmax$ and $Q^*$ are optimal solutions of \eqref{maxQ}, and because the objective in \eqref{maxQ} is a distribution over only the terminal states, it follows that $\Qmax$ and $Q^*$ must be equal on at least one terminal state $s_\perp$. Starting from this terminal state $s_\perp$ -- as well as an arbitrary action $a_\perp$ under it -- we have $\Qmax(s_\perp,a_\perp) = Q^*(s_\perp,a_\perp)$, thus by the induction rule of Proposition \ref{prop:Qmax_induction} we obtain $\max_\abar \Qmax(s',\abar) = \max_\abar Q^*(s',\abar)$ and $\Qmax(s',a')=Q^*(s',a')$ for all $(s',a')$ in the $\SAnext$ set with respect to $(s,a) = (s_\perp,a_\perp)$; the latter enables us to expand the induction proof to all non-terminal states that are reachable by $\mu$.
\end{proof}

\subsection{An counter-example showing that a minimax Q-function can be sub-optimal in ELPs}
In this subsection we elaborate more about the counter-example as illustrated by Figure \ref{fig:qexample} in Section \ref{sec:maximin} (the figure is copied above). In this ELP, $\SS=\{0,1,2,3,4,5\}$, $\AS=\{1,2,3\}$. State $4$ and $5$ are terminal states, from which any action leads to state $0$. Choosing action $1,2,3$ under state $0$ deterministically transits to state $1,2,3$, respectively. All actions under state $1$ lead to state $4$, and all actions under state $2$ and $3$ lead to state $5$. The agent only receives non-zero rewards at terminal states, with $R(4)=1$, $R(5)=2$. The initial state at time $0$ is set to state $4$ (i.e. $\rho_0(s)>0$ only if $s=4$). 

The Bellman fixed-point $Q^*$ for this ELP is as follows:
\squishlist
\item $Q^*(0,1) = 1$,~~ $Q^*(0,2) = 2$,~~ $Q^*(0,3) = 2$ 
\item $Q^*(1,a) = 1$,~ $\forall a$ 
\item $Q^*(2,a) = Q^*(3,a) = 2$,~ $\forall a$
\item $Q^*(4,a) = Q^*(5,a) = 2$,~ $\forall a$
\squishend
An optimal policy of this ELP should only choose action $2$ or $3$, but not action $1$, under state $0$.

\begin{figure}[t]
\centering
\includegraphics[width=0.35\textwidth]{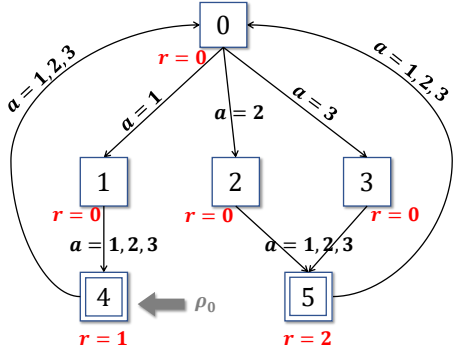}
\caption{A copy of Figure \ref{fig:qexample}}
\end{figure}

For minimax Q-functions, denoted by $\Qmin$, as they are optimal solutions of \eqref{minQ}, we have
\eq{
\Qmin(4,a) \quad=\quad 2 \quad\geq\quad \max 
\begin{cases}
~~ \Qmin(0,1) \quad\geq\quad \max_a \Qmin(1,a) \quad\geq\quad 1 \\
~~ \Qmin(0,2) \quad\geq\quad \max_a \Qmin(2,a) \quad\geq\quad 2 \\
~~ \Qmin(0,3) \quad\geq\quad \max_a \Qmin(3,a) \quad\geq\quad 2 
\end{cases}
} 
The above minimax condition only imposes tight bounds for $Q^*$-optimal actions (e.g. action $2$ and $3$ under state $0$), but leaves ``flexibility'' for actions sub-optimal to $Q^*$ (e.g. action $1$ under state $0$) as well as for \emph{all} state-action pairs that follow an sub-optimal action (e.g. all actions under state $1$). The consequence is that even constant value function $Q_{\min}(s,a)\equiv 2$ can be a minimax value function in this example, which is clearly sub-optimal, as discussed in Section \ref{sec:maximin}.

In contrast, for maximin Q-functions, denoted by $\Qmax$, they are optimal solutions of \eqref{maxQ}, thus
\eq{
\Qmax(4,a) \quad=\quad 2 \quad\leq\quad \max 
\begin{cases}
~~ \Qmax(0,1) \quad\leq\quad \max_a \Qmax(1,a) \quad\leq\quad 1 \\
~~ \Qmax(0,2) \quad\leq\quad \max_a \Qmax(2,a) \quad\leq\quad 2 \\
~~ \Qmax(0,3) \quad\leq\quad \max_a \Qmax(3,a) \quad\leq\quad 2 
\end{cases}
} 
We see that the maximin condition manages to imposes tight bounds for \emph{at least one} $Q^*$-optimal action while at the same time can enforce \emph{all} $Q^*$-sub-optimal actions to be still suboptimal to $\Qmax$. For example under state $0$, $\Qmax(0,1)$ cannot exceed $1$, while either $\Qmax(0,2)$ or $\Qmax(0,3)$ needs to be tight (i.e. $\Qmax(0,3)=2$, or $\Qmax(0,2)=2$, or both) so as to keep the maximum of the three no less than $2$, as required.

Note that for both $\Qmin$ and $\Qmax$, the Lagrangian multiplier $\mul$ that forms equilibrium/saddle points with each of them (resp.) may not encode a policy, in general. In this example, for the constant $\Qmin$, we have $2 = \Qmin(0,1) = 0 + 1 \* \Qmin(1,a) > 1+0\* \Qmin(4,a) = 1$, so $\Qmin(1,a) > \Bopt \Qmin(1,a) = 1$ for all $a$, in which case its equilibrium multiplier $\mul$ has to have $\mul(1,a)=0$ for all $a$, due to the equilibrium condition \ref{eq_slack_pi} proved in Proposition \ref{prop:eq_condition}. Such an ``all-zero'' $\mul$ (on state $1$) cannot be normalized into a policy. 

Similarly, for the following specific maximin Q-function
\squishlist
\item $\Qmax(4,a) = \Qmax(5,a) = 2$
\item $\Qmax(0,1) = \Qmax(1,a) = 1$
\item $\Qmax(0,2) = \Qmax(2,a) = 2$
\item $\Qmax(0,3) = 1$
\item $\Qmax(3,a) = 1.5$ 
\squishend
we have $1 = \Qmax(0,3) < 0 + 1\* \Qmax(3,a) < 2 + 0 \* \Qmax(5,a) = 2$, so $\Qmax(3,a) < \Bopt \Qmax(3,a) = 2$ again for all $a$, so the multiplier corresponding to this $\Qmax$ still needs to be all zero at state $3$ due to the complementary slackness condition, thus cannot be normalized at state $3$.

\subsection{A counter-example showing that a minimax $V$-function can be sub-optimal in discounted-MDPs}
\label{sec:example}
Moreover, the problems with minimax points of the Lagrangian, as demonstrated above, are not limited to $Q$-functions or to ELPs only, but seem to be fundamental issues rooted from the minimax structure. To see this, consider the \emph{discounted-MDP} as shown in Figure \ref{fig:dmdp} below. 

To be strictly aligned with the related literature~\cite{2018:LP_RL,2017:LP_RL}, the rewards 
\begin{figure}[t]
\centering
\includegraphics[width=0.4\textwidth]{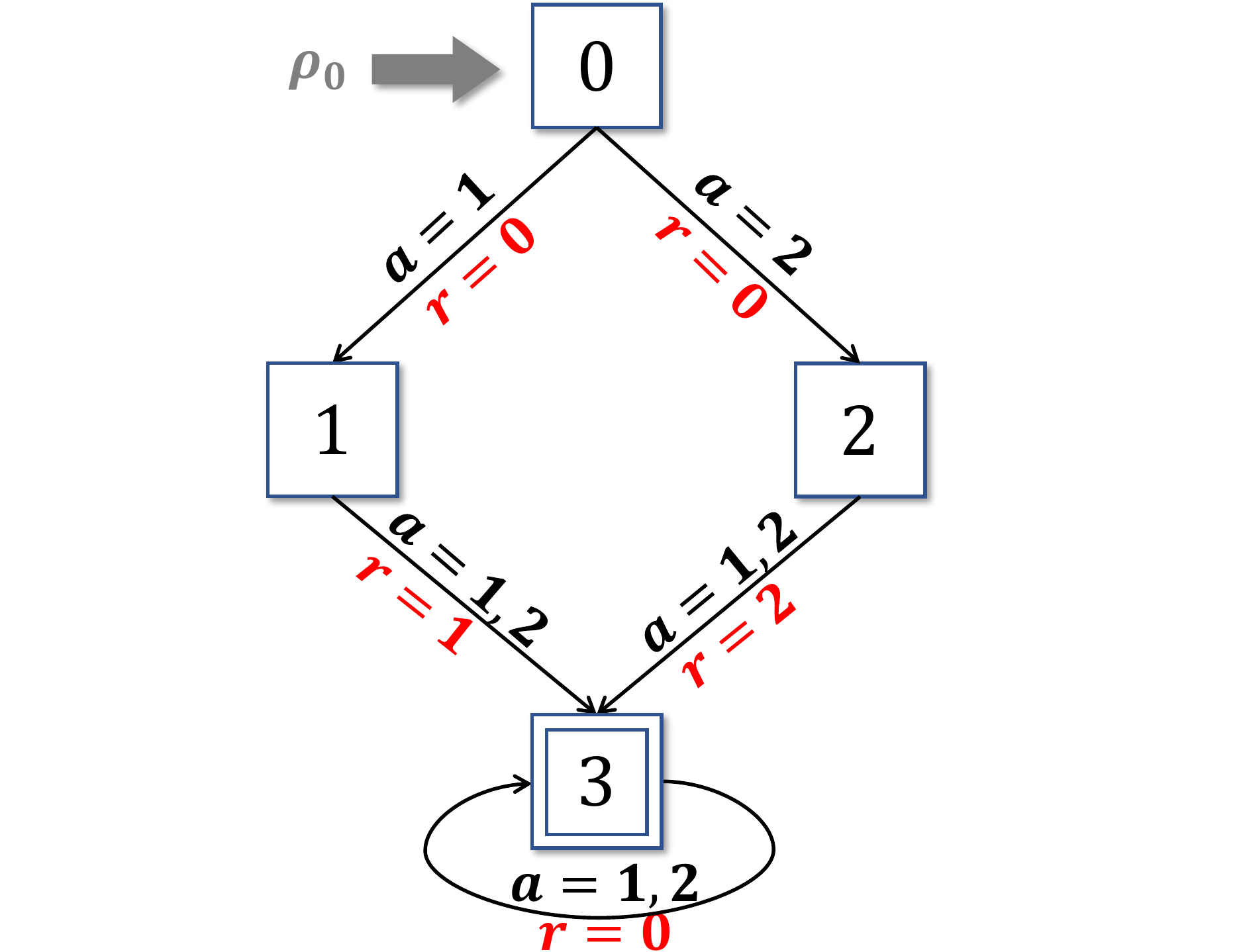}
\caption{An example of discounted-MDP for studying saddle points of the $V$-form Lagrangian.}
\label{fig:dmdp}
\end{figure} 
are assigned to state-action pairs in this example. In this discounted-MDP, $\SS=\{0,1,2,3\}$, $\AS=\{1,2\}$. The initial state is set to state $0$ (i.e. $\rho_0(s)=1$ if $s=0$, otherwise $\rho_0(s)=0$). From state $0$, taking action $1,2$ will deterministically goes to state $1,2$, respectively, with zero reward obtained in this step. From state $1$, any action leads to the absorbing state $3$, with reward $R(1,a)=1$ obtained (for all $a$). From state $2$, any action leads to the same absorbing state $3$, but with reward $R(2,a)=2$ obtained (for all $a$). The absorbing state $3$ will loop into itself forever, with zero reward obtained. An optimal policy in this discounted-MDP should choose action $2$, not action $1$, under state $0$. The discounting constant $\gamma$ is set to $0.5$, so the optimal discounted-reward performance is $V^*(0) = 1$.

\newcommand{\Vmin}{{ V_{\min} }}
The $V$-form Lagrangian of the discounted-MDP above is:~\cite{2018:LP_RL, 2017:LP_RL}
\eq[v_lagr]{
(1-\ga) \* \Exp[S_0\sim\rho_0]{V(S_0)} + \sum_{(s,a)\in \SS\times \AS} \lambda(s,a) \* \Big( R(s,a) + \ga \* \Exp[S'\sim\T(s,a)]{V(S')} - V(s) \Big)
} 
A minimax V-function $\Vmin$ of the V-form Lagrangian \eqref{v_lagr} is an optimal solution of the Linear Programming problem \eqref{lp_v}, which has inspired some recently proposed RL algorithms (see Section \ref{sec:lagrangian}). In this example, the LP can be more explicitly written as
\eqm{
\min_{V} \quad & 		0.5 \* V(0)  \\
\text{s.t.} \quad & 	V(0) \geq 0.5 \* V(1) \\
\quad & 				V(0) \geq 0.5 \* V(2) \\
\quad & 				V(1) \geq 0.5 \* V(3) + 1 \\
\quad & 				V(2) \geq 0.5 \* V(3) + 2 \\
\quad & 				V(3) \geq 0.5 \* V(3) 
}
for the LP above, a possible optimal solution is:
$
\Vmin(0) = 1,~~
\Vmin(1) = 2,~~
\Vmin(2) = 2,~~
\Vmin(3) = 0
$, 
which assigns the same value to action $1$ and $2$ under state $0$, thus is not an optimal V-function.

\section{Lagrangian Minimization for Machine Translation}

\subsection{The LAMIN1 Algorithm}
\label{sec:lamin1}

As mentioned in Section \ref{sec:mt}, the idea of LAMIN1 is to minimize the smoothed Lagrangian $\Lagr_\mu^\beta$ (with a small yet definite $\beta$) based on an unbiased gradient estimator of \eqref{lamin1}. For convenience, we copy \eqref{lamin1} below: 
\eq{
	\Lagr_\mu^\beta(Q(\w),\mul_\mu) = \E_\mu [Q(S_T, A_T;\w)] + \E_\mu [T] \*
	\E[S,A,S'\sim\rho_\mu]~ \Exp[A'\sim \pi^\beta_{Q(\w)}(S')]{\delta(S,A,S',A';\w)}
}
where $\pi^\beta_{Q(\w)}(a|s) \define \frac{exp\big( Q(s, a;\w) / \beta \big)}{\sum_b exp\big( Q(s, b;\w) / \beta \big)}$ is the Boltzmann distribution with temperature $\beta$, and $\delta(s,a,s',a';\w) \define R(s') + \gammaEPI(s') Q(s',a';\w)-Q(s,a;\w)$ is the temporal-difference error. 
Algorithm \ref{algo:lamin1} gives the pseudo-code of such an algorithm. 

\begin{algorithm}[h]
\caption{The LAMIN1 algorithm.}
\label{algo:lamin1}
\begin{algorithmic}
\STATE {\bfseries Input:} A piece of rollout data made by an optimal policy, in the form of $\{s_t,a_t,r_t\}_{0,1,\dots,n}$, in which $t=T_1,T_2,\dots,T_k$ are termination steps; A parametric $Q(\w)$ model with initial weight $\w_0$; Learning rate $\alpha$; Boltzmann temperature $\beta$.
\FOR{gradient update $i = 0, 1, 2, \dots$}
\STATE $\Delta \w \gets 
\frac{1}{k} \sum_{t=\{T_1 \dots T_k\}} \nabla_{\w} Q(s_t,a_t;\w) ~\Big|_{\w=\w_{i}} ~+~$ 

~~~~~~~~~~~~~~$\frac{n}{k} \* \frac{1}{n} \sum\limits_{t=0}^{n-1} \gammaEPI(s_{t+1})~ 
\nabla_{\w} \Big(  \sum_a \pi^\beta_{\w}(s_{t+1},a) \* Q(s_{t+1},a;\w) \Big)
- \nabla_{\w} Q(s_{t},a_{t};\w)  ~\Big|_{\w=\w_{i}}$

\STATE $\w_{i+1} \gets \w_{i} - \alpha \* \Delta \w$ 
\ENDFOR
\STATE {\bfseries Output:} A $Q(\w)$-greedy policy.
\end{algorithmic}
\end{algorithm}

Algorithm \ref{algo:lamin1} samples the stationary distribution $\rho_\mu$ in \eqref{lamin1} by averaging over the rollout data of $k$ episodes, which is unbiased thanks to the ergodicity of episodic learning~\cite{2020:bojun}. The average episode length $\E_\mu [T]$ is estimated by $n/k$, the total number of steps in the data divided by the number of episodes. The overall algorithm is thus a standard unbiased SGD procedure, which shares the generic convergence property of all SGD procedures (i.e. convergence to local minimum of \eqref{lamin1} is guaranteed under properly annealed learning rate~\cite{2016:DL}). We also remark that the reward term $R(s')$ in the smoothed Lagrangian (more specifically, in the TD-error of \eqref{lamin1}) is ``differentiated out'' in LAMIN1 because it is independent of the model parameter $\w$. As a result, Algorithm \ref{algo:lamin1} does not use the reward data at all. 

As an implementation trick, the gradient computation in Algorithm \ref{algo:lamin1}, particularly for $\nabla_{\w} \Big(  \sum_a \pi^\beta_{\w}(s_{t+1},a) \* Q(s_{t+1},a;\w) \Big)$, can conveniently run on automatic differentiation libraries such as PyTorch by utilizing the following fact (for brevity and clarity, we omit the argument $s_{t+1}$ in $Q$ and $\pi$ in the following):
\eqm{
	\nabla \pi^\beta_{\w}(a) 
&= 	\nabla \exp \Big( \log \frac{e^{Q(a;\w)/\beta}}{\sum_b e^{Q(b;\w)/\beta}} \Big) 
= 	\pi^\beta_{\w}(a) \* \nabla 
	\Big( \log \frac{e^{Q(a;\w)/\beta}}{\sum_b e^{Q(b;\w)/\beta}} \Big) \\
&= 	\pi^\beta_{\w}(a) \* \Big( \nabla Q(a;\w)/\beta ~-~ 
	\nabla \log \sum\nolimits_b e^{Q(b;\w)/\beta} 
	\Big) \\
&= 	\pi^\beta_{\w}(a) \* \Big( \nabla Q(a;\w)/\beta ~-~ 
	\frac{\sum\nolimits_b e^{Q(b;\w)/\beta} \* \nabla Q(b;\w)/\beta}{\sum\nolimits_c e^{Q(c;\w)/\beta}} 
	\Big) \\
&=	\frac{1}{\beta} \* \Big( \pi^\beta_{\w}(a)~\nabla Q(a;\w) - 
	\pi^\beta_{\w}(a)~\sum\nolimits_b \pi^\beta_{\w}(b)~\nabla Q(b;\w) \Big)
}
and so
\eqm{
&	\nabla \Big(  \sum_a \pi^\beta_{\w}(a) \* Q(a;\w) \Big) \\
=~& 	\sum_a \pi^\beta_{\w}(a) \* \nabla Q(a;\w) + 
	\sum_a Q(a;\w) \* \nabla \pi^\beta_{\w}(a) \\
=~&	\sum_a \pi^\beta_{\w}(a) \* \nabla Q(a;\w) ~+~ 
	\frac{1}{\beta} \* \sum_a \pi^\beta_{\w}(a)~Q(a;\w)~\nabla Q(a;\w) \\
&	~-~ \frac{1}{\beta} \* 
	\Big( \sum_a \pi^\beta_{\w}(a)~Q(a;\w) \Big) \* 
	\Big( \sum_a \pi^\beta_{\w}(a)~\nabla Q(a;\w) \Big) \\
} 

Finally, we notice that the $\beta$-smoothing trick used in LAMIN1 is more than just an approximation heuristic, but may potentially play a role in correcting (to some extent) the sub-optimality bias of minimax Q-functions as discussed in Section \ref{sec:maximin}. Specifically, as an inherent weakness, the original Lagrangian function $\Lagr_{\mu}(Q,\mul_\mu)$ cannot distinguish minimax Q-functions that are optimal from minimax Q-functions that are sub-optimal, as the Lagrangian attains its global minimum in both cases. In contrast, the smoothed Lagrangian $\Lagr^\beta_{\mu}(Q,\mul_\mu)$ tends to reach lower value at optimal minimax-Q-functions than at sub-optimal minimax-Q-functions. In fact, it can be proved that the sub-optimality bias is completely resolved by the $\beta$-smoothing trick in tabular settings.

\begin{figure}[t]
\centering
\includegraphics[width=0.4\linewidth]{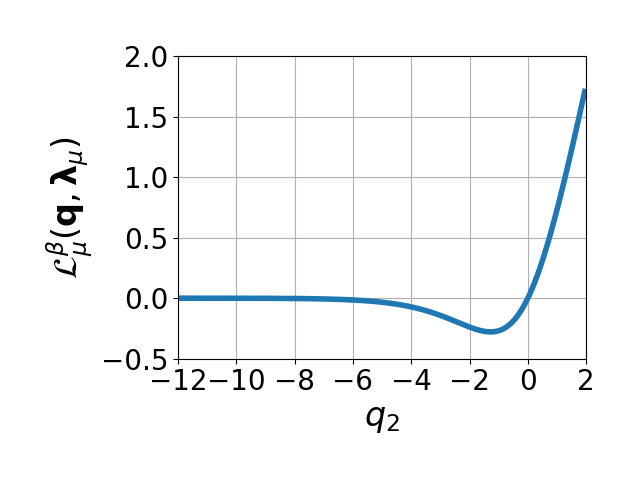}
\caption{An illustration of the smoothed Lagrangian ($\beta=1$) in the tabular setting with one state and two actions, with $q_1$ anchored at $0$. The Lagrangian is not convex, yet it has only one local minimum, which is attained around $q_2=-1.3$.}
\label{fig:degenerate_q}
\end{figure}

As an illustration, consider the special case where there is only one non-terminal state, under which the agent can only choose between two actions, $1$ and $2$, and suppose action $1$ is the truly optimal. In this simple case, a Q-function can be represented by a tuple $(Q_1,Q_2)$, which indicating the value of action $1$ and $2$ respectively. A sub-optimal minimax-Q-function may have $Q_1=Q_2$, as we showed in Section \ref{sec:maximin}; in this case the smoothed Lagrangian $\Lagr^{\beta=1}_{\mu}(Q,\mul_\mu)$ would still equal $J(\mu)$, which can be seen from the dual form of the smoothed Lagrangian:
\eq{
\Lagr^{\beta=1}_\mu (Q, \mul_\mu) = 
J(\mu) + 1 \* \Big( 
(\frac{e^{Q_1} \* Q_1}{e^{Q_1}+e^{Q_2}} + \frac{e^{Q_2} \* Q_2}{e^{Q_1}+e^{Q_2}}) - Q_1 \Big)
}
On the other hand, an optimal minimax-value can make $\Lagr^{\beta=1}_\mu (Q, \mul_\mu)$ lower than $J(\mu)$. For example, when $J(\mu)=Q_1=0$, we have $\Lagr^{\beta=1}_\mu (Q, \mul_\mu) = \frac{e^{Q_2}}{1+e^{Q_2}} \* Q_2$, which attains $-0.28$ at $Q_2=-1.3$. In other words, $\Lagr^{\beta=1}_\mu (Q, \mul_\mu) < J(\mu) = 0$ under the optimal minimax-Q-function $(0,-1.3)$, which is thus distinguished from the sub-optimal minimax-Q-function $(0,0)$ in smoothed Lagrangian minimization as LAMIN1 does. See Figure \ref{fig:degenerate_q} for the shape of the smoothed Lagrangian in this case. Notice that the Lagrangian is \emph{not} convex, despite the unique local minimum (which means LAMIN1 will converge to the global optimum in this case).

\subsection{The LAMIN2 Algorithm}
\label{sec:lamin2}

The idea of LAMIN2 is to minimize the original Lagrangian function $\Lagr_{\mu}$ based on the ``local'' gradient estimator \eqref{lamin2'}. The consistency of the LAMIN2 estimator (with respect to $\nabla \Lagr_{\mu}$) is asserted by Proposition \ref{prop_lagrad} in Section \ref{sec:lamin}, which we now provide a proof. 

Proposition \ref{prop_lagrad} says that \eqref{lamin2'} becomes equality if the Q-model is unimodal. For convenience we copy \eqref{lamin2'} below (with $[\Bopt Q-Q]$ and $\delta$ explicitly expanded, and the approximation sign replaced):
\eqm[lamin2'']{
&	\nabla_{\w}~ \Exp[S'\sim P(s,a)]{R(S') + \gammaEPI(S')~ \max_{a'} Q(S',a';\w)} - Q(s,a;\w)  ~\big|_{\w=\w_t} \\
=~&	\E[S'\sim P(s,a)]~ \Exp[A'\sim \pi_{Q(\w_t)}(S')]{\nabla_{\w}~
	\Big(~ (R(S') + \gammaEPI(S') Q(S',A';\w)-Q(s,a;\w) ~\Big)
	} ~\big|_{\w=\w_t}
}
Clearly, \eqref{lamin2''} holds if and only if equation \eqref{equiv_grad_1} in the following proposition holds.

\begin{proposition}
\label{prop_lagrad2}
Let $\AS$ be a finite action space, and let $Q(s,a;\w)$ be a differentiable parametric Q-model that suggests a single best action $a_{\max}(\w^*) \define \arg\max_{a\in\AS}~ Q(s,a;\w^*)$ when evaluating the actions under a given state $s$ with a given parameter vector $\w^*$, then
\eq[equiv_grad_1]{
	\nabla_{\w}~ \max_{a\in\AS}~ Q(s,a;\w) ~\Big|_{\w=\w^*}
= 	\Exp[a\sim\pi_{\max}(s;\w^*)]{\nabla_{\w}~ Q(s,a;\w)} ~\Big|_{\w=\w^*}	
}
where $\pi_{\max}(\w^*)$ denote the $Q(\w^*)$-greedy policy.
\end{proposition}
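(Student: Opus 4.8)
The plan is to exploit the uniqueness of the maximizer to collapse the pointwise maximum of finitely many smooth functions into a single smooth function on a neighborhood of $\w^*$, where differentiation becomes trivial. First I would fix the state $s$ and abbreviate $g_a(\w) \define Q(s,a;\w)$ for each $a\in\AS$, together with $m(\w) \define \max_{a\in\AS} g_a(\w)$. By hypothesis the maximizing action at $\w^*$ is unique; write $\hat{a} \define a_{\max}(\w^*)$, so that $g_{\hat{a}}(\w^*) > g_a(\w^*)$ for every $a \neq \hat{a}$. Since $\AS$ is finite, the gap $\epsilon \define \min_{a\neq\hat{a}} \big( g_{\hat{a}}(\w^*) - g_a(\w^*) \big)$ is a minimum over finitely many strictly positive numbers, hence $\epsilon > 0$.

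Next I would invoke continuity of each $g_a$ (which follows from differentiability) to produce a neighborhood $U$ of $\w^*$ on which every $g_a$ stays within $\epsilon/2$ of its value at $\w^*$. On $U$ this preserves the strict ordering $g_{\hat{a}}(\w) > g_a(\w)$ for all $a \neq \hat{a}$, so $\hat{a}$ remains the unique maximizer throughout $U$, and therefore $m(\w) = g_{\hat{a}}(\w)$ identically on $U$. Consequently $m$ agrees near $\w^*$ with the differentiable function $g_{\hat{a}}$, which yields $\nabla_{\w} m(\w)\big|_{\w=\w^*} = \nabla_{\w} g_{\hat{a}}(\w)\big|_{\w=\w^*}$; that is, the left-hand side of \eqref{equiv_grad_1} equals $\nabla_{\w} Q(s,\hat{a};\w)\big|_{\w=\w^*}$.

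Finally I would evaluate the right-hand side of \eqref{equiv_grad_1}. Because the $Q(\w^*)$-greedy policy $\pi_{\max}(s;\w^*)$ can place positive probability only on maximizing actions, and $\hat{a}$ is by assumption the sole such action, $\pi_{\max}(s;\w^*)$ is the point mass on $\hat{a}$. The expectation over $\pi_{\max}(s;\w^*)$ therefore collapses to the single term $\nabla_{\w} Q(s,\hat{a};\w)\big|_{\w=\w^*}$, which matches the left-hand side computed above, establishing \eqref{equiv_grad_1}.

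The only real obstacle is justifying that $m$ is differentiable at $\w^*$ at all: a pointwise maximum of smooth functions is generically only piecewise-smooth, with non-differentiable kinks occurring \emph{precisely} at parameters where the argmax is non-unique. The uniqueness hypothesis is exactly what excludes such a kink at $\w^*$, and the finiteness of $\AS$ is what guarantees the separating gap $\epsilon$ is strictly positive rather than an infimum that could vanish. I would emphasize both ingredients, since the identity genuinely fails when ties are present (in which case one obtains only a convex combination over the tied maximizers, not an equality for the specific greedy policy).
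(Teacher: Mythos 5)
Your proposal is correct and follows essentially the same route as the paper's proof of Proposition \ref{prop_lagrad2}: both exploit the finiteness of $\AS$ and the uniqueness of the maximizer to obtain a strictly positive gap, then use continuity (from differentiability) to conclude that $a_{\max}$ is locally constant near $\w^*$, so the pointwise max coincides with the single smooth function $Q(s,\hat{a};\cdot)$ and the greedy-policy expectation collapses to a point mass on $\hat{a}$. Your neighborhood-identity packaging is a slightly cleaner rendering of the paper's per-coordinate difference-quotient argument, but the underlying idea is identical.
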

\begin{proof}
We will prove that for any component of $\w$,
\eq[equiv_grad_2]{
	\frac{\partial}{\partial w_i}~ \max_{a\in\AS}~ Q(s,a;\w) ~\Big|_{\w=\w^*} 
= 	\frac{\partial}{\partial w_i}~ Q(s, a_{\max}(\w^*); \w) ~\Big|_{\w=\w^*}
}
which readily gives \eqref{equiv_grad_1}.

To prove \eqref{equiv_grad_2}, notice that 
\eq{
	\frac{\partial}{\partial w_i}~ \max_{a\in\AS}~ Q(s,a;\w) ~\Big|_{\w=\w^*} 
= 	\lim\limits_{\Delta\ra 0} \frac{Q(s, a_{\max}(\w^*+\Delta); \w^*+\Delta) - Q(s, a_{\max}(\w^*); \w^*)}{\Delta}
}
When there are a finite number of possible actions, there must be a non-zero gap between the best action and the second best action under $\w^*$. On the other hand, since $Q$ is differentiable, the change of action-values becomes infinitely small as $\Delta \ra 0$, thus the order between the two best actions will remain the same for small enough $\Delta$, that is, $a_{\max}(\w^*+\Delta) = a_{\max}(\w^*)$ when $\Delta \ra 0$. Therefore,
\eqm{ 
& 	\lim\limits_{\Delta\ra 0} \frac{
	Q(s, a_{\max}(\w^*+\Delta); \w^*+\Delta) - Q(s, a_{\max}(\w^*); \w^*)
	}{\Delta} \\
=~&	\lim\limits_{\Delta\ra 0} \frac{
	Q(s, a_{\max}(\w^*); \w^*+\Delta) - Q(s, a_{\max}(\w^*); \w^*)
	}{\Delta} \\ 
=~&	\frac{\partial}{\partial w_i}~ Q(s, a_{\max}(\w^*); w) ~\Big|_{\w=\w^*}
}
\end{proof}

Note that when $Q(\w)$ is a sophisticated real-valued function, such as a deep neural network with scalar output, the chance that two actions have \emph{precisely} the same value under $Q(\w)$ should be rare. Also, continuous action space can be densely quantized into a finite action space with \emph{arbitrarily} small quantizing error, thus \eqref{equiv_grad_1} should still approximately hold even for continuous action spaces.

Algorithm \ref{algo:lamin2} gives the pseudo-code of LAMIN2 in a further generalized form, where the greedy policy $\pi_{\w_i}$ is replaced with the Boltzmann policy $\pi^\beta_{\w_i}$. As mentioned in Section \ref{sec:mt}, higher temperatures, such as $\beta=1.0$, can slightly improve performance in our WMT experiment (for $0.5 - 0.8$ BLEU score).

\begin{algorithm}[t]
\caption{The LAMIN2 algorithm.}
\label{algo:lamin2}
\begin{algorithmic}
\STATE {\bfseries Input:} A piece of rollout data made by an optimal policy, in the form of $\{s_t,a_t,r_t\}_{0,1,\dots,n}$, in which $t=T_1,T_2,\dots,T_k$ are termination steps; A parametric $Q(\w)$ model with initial weight $\w_0$; Learning rate $\alpha$; Boltzmann temperature $\beta$.
\FOR{gradient update $i = 0, 1, 2, \dots$}
\STATE $\Delta \w \gets 
\frac{1}{k} \sum_{t=\{T_1 \dots T_k\}} \nabla_{\w} Q(s_t,a_t;\w) ~\Big|_{\w=\w_{i}} ~+~$ 

~~~~~~~~~~~~~~$\frac{n}{k} \* \frac{1}{n} \sum\limits_{t=0}^{n-1} \gammaEPI(s_{t+1})~ 
\Big(  \sum_a \pi^\beta_{\w_i}(s_{t+1},a) \nabla_{\w} Q(s_{t+1},a;\w) \Big)
- \nabla_{\w} Q(s_{t},a_{t};\w)  ~\Big|_{\w=\w_{i}}$

\STATE $\w_{i+1} \gets \w_{i} - \alpha \* \Delta \w$ 
\ENDFOR
\STATE {\bfseries Output:} A $Q(\w)$-greedy policy.
\end{algorithmic}
\end{algorithm}

\subsection{An Episodic Learning Formulation of Machine Translation}
\label{sec:elp_mt}

Many AI tasks are \emph{sequence generation} problems, where we are given a context $\X$, and are then asked to generate a sequence $\Y = \seq{\bos, y_1 \dots \y{L}, \eos}$ using \emph{tokens} chosen from a given token space. Machine Translation (MT) is an example of such tasks, where the token space is the vocabulary of a target language, and the context $\X$ is a sentence (or a sequence of sentences) in a source language. 
The choice of each token $y_t$
is conditioned on $\X$ and on the partial output $\Yp{t} \define \seq{\bos, y_1 \dots y_{t-1}}$. In particular, $\Yp{1}=\seq{\bos}$, and $\Yp{L+1}=\seq{\bos,y_1\dots \y{L}}$, conditioned on which the algorithm will generate the first token $y_1$ and the last token $\y{L+1} \define \eos$, respectively. 

The ELP formulation exactly captures the real-world MT tasks as described above. In the MT context, an episode is the translation of a given sentence. The first episode effectively starts with $S_1 = (\X[1],\bos)$ where $\X[1]$ is a full source sentence. A learning agent then chooses a token $A_1 =\y[1]{1} \in \vocab[\text{target}] \cup \{\eos\}$, after which the environment state transits, deterministically, to $S_2 = (\X[1], \Yp[1]{2}) = (\X[1], \bos, \y[1]{1})$. The agent keeps generating actions $A_t = \y[1]{t}$ under each $S_t=(\X[1],\Yp[1]{t})$ until it outputs $A_{T-1} = \eos$  at some step $T-1$, leading to terminal state $S_{T} = (\X[1],\Y[1]) = (\X[1], \bos, \y[1]{1} \dots \y[1]{T-2}, \eos)$. The agent will then make a normal action $A_T$ as in previous steps, which however makes no effect other than resetting the environment into $S_{T+1} = (\X[2],\bos)$ from which the second translation episode begins. The process goes on episode after episode, generating a (theoretically infinite) sequence of translations $(\X[1],\Y[1]), (\X[2],\Y[2]), \dots$, which collectively serve as the training data for the agent to learn better translation policies.

An episode of length $T$ as above results in a translation sentence $Y=(\bos,y_1 \dots y_L, \eos)$ which contains $L = T-2$ ``normal'' tokens from $\vocab[\text{target}]$. As a common and necessary practice, most real-world MT systems impose a maximum translation length $H$ so that if an $\eos$ action did not show up after $H$ steps, the environment will transit to the terminal state $S_{H+2}=(\X,\bos,y_1 \dots y_H, \eos)$ even if the agent continues to output normal token $A_{H+1}\in \vocab[\text{target}]$ at step $H+1$. When maximum translation length is applied, the corresponding episodic learning model of MT has bounded episode length, thus satisfies the ELP Condition (1) above. Such a formulation considers the mechanism of maximum translation length as a fundamental part of \emph{learning-based} MT task specification that is essential in helping \emph{learning} agents (which may not master when to output $\eos$) to escape from long and meaningless translation episodes which may otherwise lead to ill-conditioned training data. 

To comply with ELP Condition (3), we prescribe $\rho_0$ to be an arbitrary distribution over the set of terminal states, i.e. over all source-target sentence pairs $(X,Y)$ where $Y$ is complete sentence ending with $\eos$. As with other terminal steps, no matter what $S_0$ and $A_0$ are, the next state $S_1$ will follow the same distribution, denoted as $\rho_1$, which specifies the distribution of the source sentences that the agent will receive in \emph{each} episode (ELP Condition (2)).

Finally, the agent receives a scalar reward $R(X,Y) \in [0,100]$ at each terminal state, based on the translation quality of $Y$ (with respect to $X$). 
The reward is zero at all non-terminal states (which has only partial translations). 
\footnote{While it is certainly possible to make the rewards less sparse via reward shaping and engineering, we consider those reward variants as elements of a specific \emph{solution method}, instead of as part of the \emph{problem formulation} of MT.} 
With this reward function, the total-reward objective $J(\pi)$ of a translation policy $\pi$ corresponds to a sentence-averaged evaluation score over the corpus. For some MT metrics, this captures exactly the original metric; for example, the METEOR metric~\cite{2005:meteor} corresponds to $R(X,Y) = \texttt{METEOR}(Z(X),Y)$. For some other MT metrics, such as BLEU~\cite{2002:bleu}, the sentence-level BLEU score needs to be properly smoothed to match the true corpus-level BLEU score~\cite{2021:simpson}.

The ELP formulation of MT as discussed above can be formally summarized as follows:
\begin{itemize}
\item $\SS = (\vocab[\text{source}])^H \times \{\bos\} \times (\vocab[\text{target}])^H \times \{\text{\textvisiblespace}~ , \eos\}$

\item $\AS = \vocab[\text{target}] \cup \{\eos\}$

\item $R(s) = \begin{cases}
\texttt{metric} \big(~ X(s),Y(s) ~\big) & s\in\terminals \\
0 & s\not\in\terminals
\end{cases}$
~,~ where $\terminals = \{(X,Y): Y \text{ ends with } \eos\}$

\item $\T(s'|s,a) = \begin{cases}
\rho_1(s') &  \text{~if~} s\in\terminals \\
\indicator{~s'=(s,a)~} & \text{~if~} s\not\in\terminals \text{~and~} |Y(s)|<H \\
\indicator{~s'=(s,\eos)~} & \text{~if~} s\not\in\terminals \text{~and~} |Y(s)|=H
\end{cases}$

\item $\rho_0(s) > 0$ only if $s\in\terminals$
\end{itemize}
Note that in above both $\SS$ and $\AS$ are finite sets, in which case the model is a \emph{finite} episodic learning process. For real-world machine translation, we typically have $|\SS|<40000^{2048} \times 40000^{2048} \times 2$ and $|\AS|<40000+1$.

The experimentation code in Supplementary Material contains a faithful implementation in Python of the formulation presented here.

\subsection{Experiment Details}
\label{sec:exp_setup}
We tested our algorithmic idea using the WMT'14 NewsTest English$\ra$German (en2de) dataset~\footnote{https://nlp.stanford.edu/projects/nmt/}. The data was pre-processed and post-processed using the BPE tokenizer provided by YouTokenToMe~\footnote{https://github.com/VKCOM/YouTokenToMe}, with shared vocabulary of size $37000$. A complete translation typically consists of 20-100 tokens (meaning that a translation episodes contains roughly 20-100 action steps). We used SacreBLEU~\cite{2018:sacrebleu} to generate the BLEU scores, and trained the standard TransformerBase neural network~\cite{2017:transformer}, which is known to achieve a BLEU score of $27.3$ on the WMT'14 dataset under the state-of-the-art method of MLE-based supervised learning~\cite{2017:transformer}.

We trained the model on the same 4.5 millions sentence pairs in the WMT'14 data set for $100,000$ gradient updates on a V100 GPU, with the same mini-batch size (and token-padding strategy) and learning rate schedule as recommended by \citet{2017:transformer}. A dropout rate of $0.1$ is applied. The learned model is then used as search heuristic in the \emph{vanilla-beam-search decoding} (e.g. see Algorithm 1 in \cite{2019:stahlberg}), with a beam size of $4$. Empirically, we found that some more performance gain can be obtained by adding more tricks, such as modestly increasing the beam size (say, to $10$), adding length penalty factor in the search heuristic (see \cite{2016:gnmt}), and model averaging (see \cite{2017:transformer}), but we chose to exclude these tricks in our performance report so as to keep our algorithm simple and easy to implement. 

The following tables give the numerical values of the performance scores shown in Figure \ref{fig:temperature}.

\begin{table}[H]
	\centering
	\begin{tabular}{|c|c|}   		
	\hline 
	\rule[-1ex]{0pt}{4ex} $\beta$ & BLEU@100k \\ 
	\hline \rule{0pt}{2ex} 
	0.01	& 27.4 \\ 
	0.25 	& 27.0 \\ 
	0.5 	& 26.7 \\ 
	0.75 	& 26.6 \\ 
	1.0 	& 26.6 \\ 
	\hline 
	\end{tabular} 
	\caption{Corpus BLEU scores of LAMIN1 (i.e. Algorithm \ref{algo:lamin1}) under different temperature $\beta$.}
\label{tab:result_lamin1}
\end{table}

\begin{table}[H]
	\centering
	\begin{tabular}{|c|c|}   		
	\hline 
	\rule[-1ex]{0pt}{4ex} $\beta$ & BLEU@100k \\ 
	\hline \rule{0pt}{2ex} 
	0.01	& 26.0 \\ 
	0.2 	& 26.2 \\ 
	0.4 	& 26.8 \\ 
	0.6 	& 26.2 \\ 
	0.8 	& 26.3 \\ 
	1.0 	& 26.8 \\ 
	\hline 
	\end{tabular} 
	\caption{Corpus BLEU scores of LAMIN2 (i.e. Algorithm \ref{algo:lamin2}) under different temperature $\beta$.}
\label{tab:result_lamin2}
\end{table}


\end{document}